\theoremstyle{plain}
\newtheorem{proposition}{Proposition}[section]
\newtheorem{lemma}{Lemma}[section]
\newtheorem{corollary}{Corollary}[section]
\theoremstyle{definition}
\newtheorem{definition}{Definition}[section]
\theoremstyle{remark}
\newtheorem{remark}{Remark}[section]
\Crefname{assumption}{Assumption}{Assumptions}
\crefname{assumption}{Assumption}{Assumptions}
\def\input@path{{./}{../}}
\newcommand{\labelname}[1]{%
	\def\@currentlabelname{#1}}%
\def\eqref#1{equation~\ref{#1}}
\def\1{\bm{1}}
\DeclareMathAlphabet{\mathsfit}{\encodingdefault}{\sfdefault}{m}{sl}
\SetMathAlphabet{\mathsfit}{bold}{\encodingdefault}{\sfdefault}{bx}{n}
\newcommand{\E}{\mathbb{E}}
\newcommand{\Var}{\mathrm{Var}}
\icmltitlerunning{No Free Lunch: Non-Asymptotic Analysis of Prediction-Powered Inference}
\begin{document}

\twocolumn[
  \icmltitle{No Free Lunch: Non-Asymptotic Analysis of Prediction-Powered Inference}
  \icmlsetsymbol{equal}{*}

  \begin{icmlauthorlist}
    \icmlauthor{Pranav Mani}{abridge}
    \icmlauthor{Peng Xu}{abridge}
    \icmlauthor{Zachary Lipton}{cmu,abridge}
    \icmlauthor{Michael Oberst}{jhu,abridge}
  \end{icmlauthorlist}

  \icmlaffiliation{jhu}{Department of Computer Science, Johns Hopkins University, Baltimore, MD, USA}
  \icmlaffiliation{abridge}{Abridge AI, San Francisco, CA, USA}
  \icmlaffiliation{cmu}{Machine Learning Department, Carnegie Mellon University, Pittsburgh, PA, USA}

  \icmlcorrespondingauthor{Michael Oberst}{moberst@jhu.edu}

  \icmlkeywords{Machine Learning, ICML}

  \vskip 0.3in
]

\printAffiliationsAndNotice{}

\begin{abstract}
	Prediction-Powered Inference (PPI) is a popular strategy for combining gold-standard and possibly noisy pseudo-labels to perform statistical estimation. Prior work has shown an asymptotic \enquote{free lunch} for PPI++, an adaptive form of PPI, showing that the \textit{asymptotic} variance of PPI++ is always less than or equal to the variance obtained from using gold-standard labels alone. Notably, this result holds \textit{regardless of the quality of the pseudo-labels}.  In this work, we demystify this result by conducting an exact finite-sample analysis of the estimation error of PPI++ on the mean estimation problem.  We give a \enquote{no free lunch} result, characterizing the settings (and sample sizes) where PPI++ has provably worse estimation error than using gold-standard labels alone. Specifically, PPI++ will outperform if and only if the correlation between pseudo- and gold-standard is above a certain level that depends on the number of labeled samples ($n$).
	In some cases our results simplify considerably: For Gaussian data, for instance, the correlation must be at least $1/\sqrt{n - 2}$ in order to see improvement. More broadly, by providing exact non-asymptotic expressions for the variance of PPI++ under sample splitting, we aim to empower practitioners to transparently reason about the benefits of PPI++ in specific applications.
    In experiments, we illustrate that our theoretical findings hold on real-world datasets. 
\end{abstract}

\section{Introduction}
\label{sec:introduction}

Mean estimation is an ever-present problem: Medical researchers wish to understand the prevalence of disease, machine learning engineers want to understand the average performance of their models, and so on.  Often, practitioners have access to a large set of unlabeled examples, e.g., clinical notes that may or may not record the presence of a disease, images with unknown labels, etc.  A typical approach is to gather (expensive) gold-standard labels for a small number of randomly selected examples, and take a simple average of the relevant metric across them.

In many scenarios, it is possible to construct cheap pseudo-labels using machine learning models or other heuristics. For instance, large language models (LLMs) and vision-language models (VLMs) can achieve reasonable \enquote{zero shot} performance on a variety of tasks \cite{Radford2021_38}. These pseudo-labels have the potential to improve estimation:  For instance, if they were perfect substitutes for gold-standard labels, they would dramatically expand our sample size without introducing bias. In fact, it is already becoming a routine practice to use pseudo-labels from LLMs in ``LLM-as-Judge'' evaluations \cite{rahmani2025judging}.

However, the quality of these pseudo-labels is often unknown a priori, and the naive approach of treating them as true standard labels can lead to erroneous conclusions. Prediction-Powered Inference (PPI) \cite{angelopoulos2023prediction} proposes an estimator that is unbiased for any set of pseudo-labels, by using the small labeled dataset (on which pseudo-labels can also be obtained) to estimate and correct for any pseudo-label bias. While PPI is unbiased, it is not guaranteed to improve variance / statistical efficiency, particularly when the pseudo-labels have weak correlation with the true labels. To this end, \citet{angelopoulos2023ppi++} proposed Power-Tuned PPI (PPI++), which uses the labeled dataset to infer the pseudo/true label correlation and discount the pseudo-labels if this correlation is small. Informed by analysis showing that the (asymptotic) variance of PPI++ is never greater than the variance of the \emph{classical} estimator that only uses labeled examples, they note (emphasis added)

\FloatBarrier
\begin{figure*}[t]
    \centering
    \includegraphics[width=0.85\textwidth]{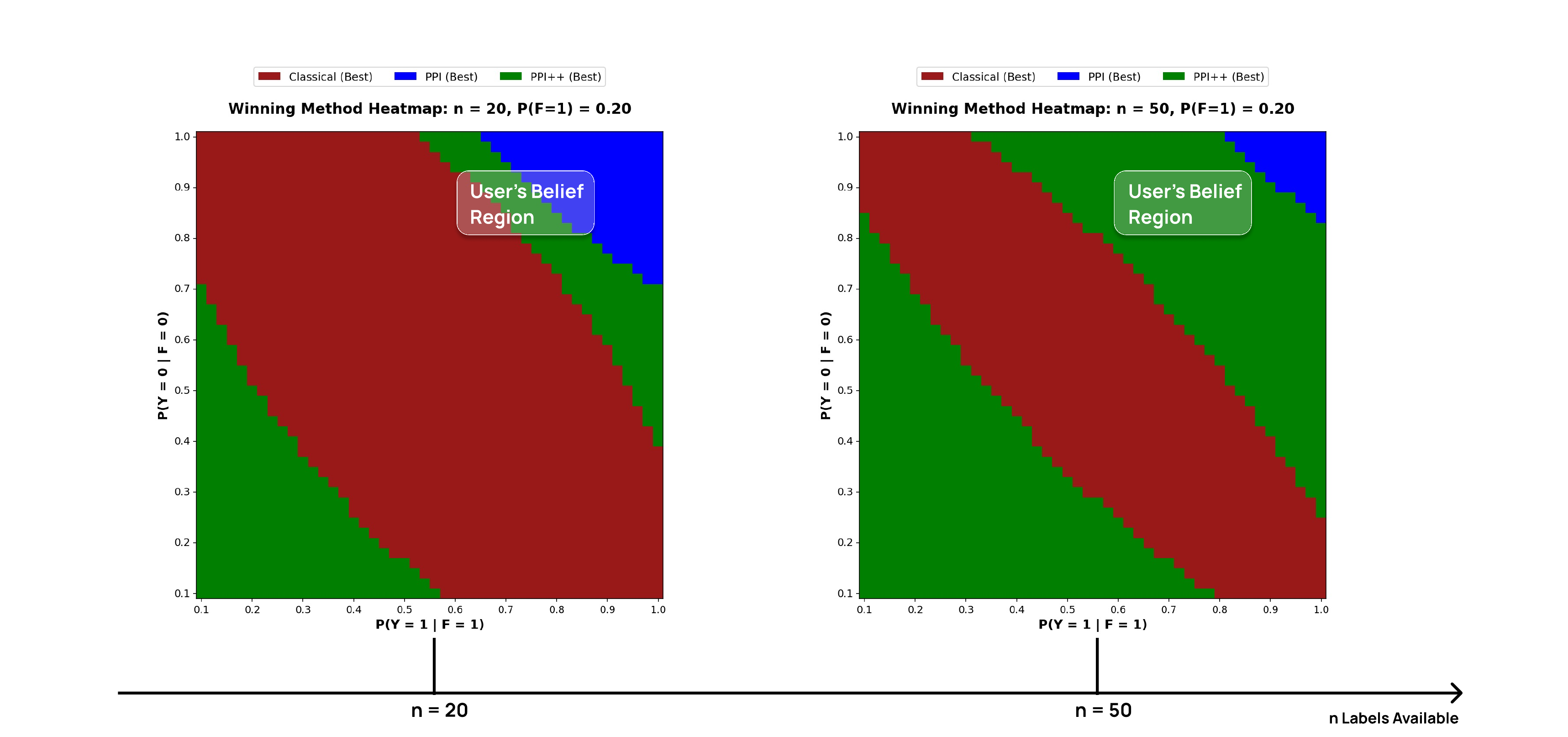}
    \caption{An illustrative example of how our analytical results can guide practical decision-making.  For any user-specified belief regarding the quality of pseudo-labels, we can determine analytically whether or not PPI++ (with cross-fitting, see~\cref{def:cross-ppi-inf-N}) will improve over classical estimation or PPI.  Specifying the quality of pseudo-labels $F$ can be done in the binary setting using three parameters:  The prevalence $P(F = 1)$ and the positive / negative predictive value $P(Y = 1 \mid F = 1), P(Y = 0 \mid F = 0)$.  For any value of these three parameters, and the sample size of labels $n$, we can compute the variance of PPI \& PPI++ relative to the variance of the classical approach.  For modest sample sizes, there are distinct regions (color coded) where each method performs best. Given \textit{a priori} practitioner beliefs about plausible accuracy of the pseudo-label (the ``belief region''), this visualization allows for an informed choice as to which method to use: At $n=20$ (left), there is no dominant method, while at $n=50$ (right), PPI++ is superior across the belief region.
     } 
    \label{fig:phase-diagram}
\end{figure*}

\begin{displayquote}
	\textit{[our] methods automatically adapt to the quality of available predictions, yielding easy-to-compute confidence sets\ldots that \textbf{always improve on classical intervals using only the labeled data.}} \citep{angelopoulos2023ppi++}
\end{displayquote}
However, this guarantee is asymptotic in nature, applying in the regime where the number of labeled examples becomes large, while the highest-value use of PPI++ comes precisely when labels are scarce. This leaves an open question: If the labeled sample is already too small to give a reliable estimate of the mean, when can we trust its assessment of the pseudo-labels?

We resolve this question by \textbf{providing an exact non-asymptotic expression for the finite-sample variance of PPI++} under a sample-splitting implementation, and in the regime where the unlabeled dataset is taken to be arbitrarily large.  Our expression holds for any joint distribution of pseudo- and true labels that has bounded fourth moments.

This analysis yields practical insights.  First, our analysis produces a ``no free lunch'' result: PPI++ pays a price attributable to errors in estimating the correlation between pseudo- and true labels. If this correlation is small, then the costs of PPI++ outweigh the benefits. From our general results, we derive expressions for the \textit{minimum number of labeled samples} that are required for PPI++ to improve upon the classical estimator. While our results apply to any distribution, they simplify considerably for Gaussian and binary labels. For Gaussian labels, the correlation must be larger than $(n-2)^{-1/2}$ for PPI++ to improve over the classical estimator, where $n$ is the labeled sample size.\footnote{As discussed in~\cref{theorem:jointly-normal-mse-condition-all}, this threshold is for an implementation without sample splitting.  In the Gaussian setting with sample-splitting and cross-fitting, the threshold is slightly higher, at $(n/2 - 2)^{-1/2}$.}

Second, our results allow for direct calculations of the benefit / downside of using PPI vs PPI++ vs.\ classical estimators, depending on the assumed quality of pseudo labels and available size of labeled samples. Thus, practitioners can operationalize our framework into a recommended course of action by applying broad assumptions of the region in which the quality of their pseudo-labels lie. This process is described in \cref{fig:phase-diagram}. However, absent any assumption of the quality of the pseudo-labels, our method does not purport to identify the best performing method: that would fall into the same free-lunch trap that we caution against. Our conclusion is that PPI++ is a useful tool, but requires practitioners to implicitly assume that their pseudo-labels are ``good enough'', a notion we define precisely in this work. Our overall contributions can be summarized as follows:
\begin{itemize}
	\item In~\cref{sec:theory-gaussian}, we analyze PPI++ in the Gaussian setting, and provide interpretable bounds on the correlation/sample size required for PPI++ to outperform the classical estimator that only uses the labeled sample.
	\item In~\cref{sec:cross-fit}, we analyze PPI++ with sample splitting and cross-fitting, for general joint pseudo/true label distributions.  We provide an exact expression for the variance of this unbiased estimator with finite labeled samples, and showing that similar intuitions hold as in the Gaussian case.
	\item In~\cref{single-sample-theory}, we return to PPI++ without sample splitting, showing that it has two undesirable properties: It is biased, and standard variance-estimation approaches can produce overly optimistic estimates of its variance.
	\item In~\cref{sec:experiments}, we illustrate our findings on real data, highlighting settings where the use of PPI++ hurts variance,
	      and show that PPI++ without sample splitting tends to draw overly narrow confidence intervals. %
\end{itemize}

\section{Related Work}
\label{sec:related_work}
The term \enquote{Prediction-Powered Inference} was coined by \citet{angelopoulos2023prediction}, who introduced the method in the context of both mean estimation and more general inference tasks. Recent work has proposed various extensions to PPI and PPI++, including their use with multiple estimands using additional shrinkage \citep{li2025prediction}, their use with cross-fitting of predictors \citep{zrnic2024cross}, and their use in the context of active statistical inference~\citep{Zrnic2024_cf}.  Others have proposed the use of PPI for application in clinical trials \citep{poulet2025prediction}, causal inference \citep{demirel2024prediction}, social science research \citep{broska2024mixed}, and the efficient evaluation of AI systems \citep{boyeau2024autoeval,gligoric-etal-2025-unconfident}.

PPI shares similarities with doubly-robust estimators in causal inference, as noted by \citet{Zrnic2024_cf}, and can be seen as a special case of the Augmented Inverse Propensity Weighted (AIPW) estimator \citep{Bang2005_64}. Through this lens, the unbiased nature of PPI (even when pseudo-labels are uninformative) is mirrored by the consistency of AIPW estimators under incorrect outcome-regression models.  The core idea of PPI++ (taking linear combinations of \textit{unbiased} estimators to minimize variance) dates back at least as far as~\citet{Bates1969_1f}.  However, the theoretical analysis of these estimators is often limited to potentially loose non-asymptotic error bounds \citep{Lavancier2016_e5} or study of their asymptotic efficiency.  A distinct but related problem arises in the study of combining \textit{biased} and \textit{unbiased} estimators, where the bias of the former can be estimated from data.  In that setting, a similar \enquote{no free lunch} phenomenon has been observed, where these methods improve asymptotic variance for any fixed bias, but perform poorly when the bias is comparable in magnitude to $1/\sqrt{n}$ \citep{Yang2023_0c,Chen2021_89, Oberst2022_bc}.  While our analysis strategy and problem setting differs substantially, our core result shows a conceptually similar point, that PPI++ tends to under-perform the classical estimator when the correlation between pseudo-labels and true labels is smaller than $\approx 1 / \sqrt{n}$.

PPI++ can also be seen as a control variate method, where an unbiased estimator using $n$ samples is additively combined with a mean-zero, correlated control variable to reduce variance.  In the control variate literature, it is well known that estimating the optimal coefficient for the control variate from the same $n$ samples can lead to variance inflation.  For instance, \citet{lavenberg1981-52} give a variance-inflation factor for Gaussian data which corresponds precisely to the threshold for improvement that we recover in our ``warm up'' on the Gaussian setting in~\cref{sec:theory-gaussian} for the single-sample PPI++ estimator with infinite $N$. We go further, however, in providing thresholds in the Gaussian setting for both a cross-fit variant of PPI++ and the finite $N$ case. Sample-splitting is also proposed as a ``remedy'' in that literature for bias and over-optimistic variance estimates that can arise from re-using the same sample \citep{nelson1990-49}, analogous to the cross-fitting variant of PPI++ we analyze. However, to our knowledge, prior work characterizes variance only under the simplified setting of Gaussian data or asymptotically, while our main contribution (\cref{sec:cross-fit}) is an exact, distribution-free finite-sample expression.

Close connections exist between PPI, PPI++, and estimators from the survey sampling literature. In particular, as noted by \citet{mozer2026-43}, the PPI estimator for the mean is equivalent to the \textit{difference estimator}, and the PPI++ estimator is equivalent to the generalized regression estimator (GREG) \citep{cassel1976-98,sarndal1989-a8}.  Both estimators arise from a similar motivation (in survey sampling) to the motivation of PPI:  The availability of \textit{auxiliary information} (i.e., covariates) on many unsurveyed (i.e., unlabeled) units, and the use of predictive models to incorporate that information has a long history \citep{breidt2017-ac}.

However, despite the depth of literature on the topic of similar estimators across causal inference, control variate methods, and survey sampling, analysis of performance is often asymptotic in nature, with the characterization of the exact finite-sample variance (or mean squared estimation error) perceived to be intractable. For instance, \citet{tille2020-d2}, a textbook on survey sampling estimators, states (see Chapter 9.4., page 203) that \enquote{It is not possible to accurately calculate the expectation and variance of the regression estimator} before introducing standard approximations.  We perform that seemingly intractable exact calculation in our work, using the insight that this analysis becomes tractable precisely in the setting that motivates PPI++, where the model generating pseudo-labels is fixed, and an extremely large unlabeled dataset is available.

The theoretical characterization of PPI++ in other related work is largely limited to characterizing \textit{upper bounds} on the benefits. \citet{Dorner2024_71} studies the limits of the benefits of PPI++, but focus on establishing upper bounds on the theoretical benefit, in the (optimistic) scenario where the optimal weight is chosen exactly. Likewise, \citet{Chaganty2018_6c}, whose proposed method is largely identical to PPI++ in the mean estimation setting, do not consider finite-sample behavior of their estimator beyond establishing that the bias is $O(1/n)$.

Also of interest is Cross-PPI \cite{zrnic2024cross} (distinct from the cross-fitting variant of PPI++ discussed in this work) which uses a portion of the data to train a predictor. We find there is no need to ``demystify'' the performance claim of Cross-PPI, since \citet{zrnic2024cross} already gives some sense of the trade-offs that come with poor pseudo-labels. In contrast, PPI++ stands out for its claim to ``always'' do no worse than classical, even if the predictions are poor. That said, if there existed a version of Cross-PPI with a similar \emph{``power-tuning''} component, we believe our core insight (no free lunch) would carry over.

\section{Setup and Preliminaries}
\label{sec:setup_mean_estimation}

\textbf{Notation}: We use upper-case letters to denote a random variable (e.g., $Y$) and lower-case letters to denote a particular value (e.g., $y$).  We use $P_Y$ to denote the distribution of a variable $Y$. We denote the mean of a random variable $Y$ by $\mu_y$, the variance by $\sigma_{y}^2$, and the covariance between two random variables $Y, X$ by $\sigma_{yx}$ or $\sigma_{xy}$. Given $n$ observations $\{x_i,y_i\}_{i=1}^n$ of two random variables $X, Y$, the empirical covariance is given by $\hat{\sigma}_{xy} = \ch(X,Y) = \frac{1}{n-1}\sum_{i=1}^n(x_i - \bar{x}_n)(y_i - \bar{y}_n)$ where $\bar{x}_n$ and $\bar{y}_n$ are the sample means of $x$ and $y$ respectively over $n$ samples.

\textbf{Problem Setting}: We focus on the \emph{mean estimation} problem: Our goal is to estimate the expectation of a variable (or \enquote{label}) $Y$, denoted $\theta \coloneqq \mathbb{E}[Y]$.\footnote{We use $\theta$ here to align with notation in the PPI literature, but note that $\theta$ and $\mu_y$ are the same quantity.} We have access to a dataset of size $n$ (typically small), containing both covariates $X$ and true labels $Y$.  Meanwhile, we have access to a dataset of size $N$ (typically much larger), which only contains unlabeled examples $X$. In this setting, we make the standard assumption that both samples are drawn from a common distribution.  This assumption holds in the common scenario where a random subset of the entire dataset is chosen for labeling.  We assume the ability to obtain cheap pseudo-labels $F$ on both the labeled and unlabeled datasets. One way to obtain such labels is via a function $f(X)$ (e.g., querying an LLM) that can be queried to get pseudo-labels, which motivates our use of upper-case $F$ to denote these pseudo-labels. 
The \emph{classical} estimator simply takes the average of the labeled examples to estimate the mean.
\begin{definition}[Classical Estimator]\labelname{Classical}\label{def:classical-n}
	Let $\{y_i\}_{i=1}^n$ be $n$ samples drawn independently from $P_Y$, where the mean of $Y$ is given by $\theta \coloneqq \mathbb{E}[Y]$ and the variance by $\vary \coloneqq \mathbb{E}[(Y - \theta)^2]$. The classical estimator of $\theta$ is the sample average $\thetahc = \frac{1}{n}\sum_{i=1}^{n} y_i$ and is unbiased with variance $\sigma_y^2 / n$.
\end{definition}

\textbf{Prediction Powered Inference}: Prediction Powered Inference is a family of estimators that combine the labeled dataset of $n$ samples $\cD_n = \{y_i, f_i\}_{i=1}^n$ with the larger set of $N$ pseudo-label samples $\{f_j\}_{j=1}^N$. For mean estimation, the original~\nameref{def:ppi} estimator  takes the following form.

\begin{definition}[Prediction Powered Inference (PPI) \citep{angelopoulos2023prediction}]\labelname{PPI}\label{def:ppi}
	Let $\cD_n$ denote $n$ samples of $F, Y$ drawn from $P_{FY}$, and let $\cD'_N$ denote $N$ samples of $F$ drawn from $P_{F}$. The PPI estimator is then given by $\thetah_{\text{PPI}} = \frac{1}{n}\sum_{i \in \cD_n} y_i + \frac{1}{N}\sum_{j \in \cD'_N}f_j - \frac{1}{n}\sum_{i \in \cD_n}f_i$
\end{definition}

Notice that~\nameref{def:ppi} is unbiased for any $f$. That is, $\mathbb{E}[\thetahp] = \mathbb{E}[Y] + \mathbb{E}[F] - \mathbb{E}[F] = \mathbb{E}[Y]$. However, the variance of~\nameref{def:ppi} depends on the variance of $Y - F$, which can be worse than the variance of $Y$ alone if $F$ is not sufficiently correlated with $Y$. Motivated by this concern, the~\nameref{def:ppi++} estimator \citep{angelopoulos2023ppi++} uses a data-driven parameter $\lambda$ to weight the contribution of the pseudo-labels. For mean estimation, the~\nameref{def:ppi++} estimator takes the following form.

\begin{definition}[Power Tuned PPI (PPI++) \citep{angelopoulos2023ppi++}]\labelname{PPI++}\label{def:ppi++}
	Let $\cD_n, \cD'_N$ be defined as in ~\cref{def:ppi}. The PPI++ estimator for mean estimation is then given by the following $\thetahp = \frac{1}{n}\sum_{i \in \cD_n} y_i + \hat{\lambda} \left(\frac{1}{N}\sum_{j \in \cD'_N}f_j - \frac{1}{n}\sum_{i \in \cD_n}f_i\right)$ with $\lh = \ch(Y,F) / ((1+\frac{n}{N})\widehat{\text{Var}}(F))$
\end{definition}
Note that if $\lambda = 0$, then~\nameref{def:ppi++} reduces to~\nameref{def:classical-n}, and if $\lambda = 1$, then~\nameref{def:ppi++} reduces to~\nameref{def:ppi}. \citet{angelopoulos2023ppi++} note that the variance-optimal choice of $\lambda^*$ is given by
$\lambda^* = (\sigma_{fy} / ((1 + \frac{n}{N})\sigma_f^2)$, which motivates the \enquote{plug-in} estimator of $\hat{\lambda}$ above.
We make two observations:  First, the optimal value $\lambda^*$ increases with higher covariance between the pseudo-labels and the true labels ($\sigma_{fy}$), and with lower variance in the pseudo-labels ($\sigma_{f}^2)$.  Both quantities need to be estimated from data, but the covariance in particular needs to be estimated using \textit{labeled data}:  Notably, the true labels $Y$ are used for both (a) estimating the mean, and (b) estimating the quality of the pseudo-labels.

In many practical applications, the unlabeled sample size $N$ is extremely large, and in our work, we consider it to be effectively infinite, such that $\sigma_{f}^2$ and $\mu_f$ can be estimated with arbitrary precision, and the optimal tuning parameter $\lambda^*$ is given by $\sigma_{yf} / \sigma_{f}^2$. Notably, even when $N$ is infinite,~\nameref{def:ppi++} still relies heavily on the small labeled sample of size $n$, which in practice may be very small. We now describe two variants of~\nameref{def:ppi++}, which vary in how they use the labeled data to obtain their estimate of the optimal $\lambda$.  The first approach is to use the entire labeled sample for both (a) estimating $\hat{\lambda}$, and (b) estimating $\hat{\theta}$, which we refer to as~\nameref{def:single-sample-ppi-inf-N}.

\begin{definition}[Single-Sample PPI++, Infinite $N$]\labelname{Single Sample PPI++}\label{def:single-sample-ppi-inf-N}
	Let $\cD_n$ denote $n$ samples of $F, Y$ drawn from $P_{FY}$. The Single-Sample PPI++ estimate is given by
	\begin{align*}
		\thetah_{\text{Single-PPI++}} & = \frac{1}{n}\sum_{i \in \cD_n} y_i + \lh\bigg(\mu_f - \frac{1}{n}\sum_{i \in \cD_n}f_i\bigg) \\ \lh & = \ch(Y,F; \cD_n) / \sigma_f^2
	\end{align*}
\end{definition}
This estimator is simple, but the lack of sample splitting can complicate variance estimation and the construction of confidence intervals. For example, as we show in~\cref{prop:optimistic_variance_single_sample_ppi++}, when the same data are used to both (a) construct $\hat{\lambda}$, which minimizes the empirical variance, and (b) estimate the empirical variance while treating $\hat{\lambda}$ as fixed, the resulting variance estimates can be overly optimistic. Cross-fitting is a standard solution to this type of problem:
splitting the labeled data, using one half to estimate $\lambda$,
and the other to construct $\hat{\theta}$ 
using the chosen $\hat{\lambda}$.
This procedure is repeated by swapping the roles of each half
and the two estimates are averaged to produce a final estimate.

\begin{definition}[Cross-fit PPI++ Estimator, Infinite $N$]\labelname{Cross-fit PPI++}\label{def:cross-ppi-inf-N}
	Let $\cD^{(1)}_{n/2}, \cD^{(2)}_{n/2}$ denote two independent $n/2$ samples of $F, Y$ drawn from $P_{FY}$, and $\hat{\theta}^{(k)}$ the PPI++ Estimator which uses $\cD^{(k)}_{n/2}$ to estimate $\hat{\lambda}$ and $\cD^{(k')}_{n/2}$ ($k \neq k'$) to estimate $\thetah$, s.t., 
	\begin{align*}
		\thetah^{(k)} & \coloneqq \frac{1}{n/2} \sum_{i \in \cD^{(k')}_{n/2}} y_i + \lh^{(k)} \bigg(\muf -\frac{1}{n/2} \sum_{i \in \cD^{(k')}_{n/2}} f_i \bigg) \\ \lh^{(k)} & \coloneqq \ch(F, Y; \cD^{(k)}_{n/2}) / \varf,
	\end{align*}
and the Cross-fit PPI++ estimator is $$\thetah_{\text{CF-PPI++}} = (\hat{\theta}^{(1)} + \hat{\theta}^{(2)}) / 2$$
\end{definition}

We note that the cross-fit estimator uses only $n/2$ samples per-fold, and thus the estimate of the covariance $\ch(f, Y; \cD^{(k)}_{n/2})$ is computed using $n/2$ labels.

\begin{remark}[Mean Estimation vs M-Estimation]
We note that our results focus primarily on mean estimation, whereas \citet{angelopoulos2023ppi++} also consider more general convex M-estimation.  Our core insights still apply in that setting, since the asymptotic claims of PPI / PPI++ for M-estimation still proceed by a ``mean estimation variance reduction'' argument. For instance, the main result (Theorem 1) of \citet{angelopoulos2023ppi++} proceeds in two steps: First, it is argued that their approach leads to reduced variance in estimation of the expected gradient (instead of considering $Y, F$, they use the per-sample gradients $\nabla \ell(X, Y)$ and $\nabla \ell(X, F)$ calculated using $Y$ and $F$).  Second, they link reduced variance in the estimate of the expected gradient to variance in the estimated parameter. The steps involved in the second step are similar to those conventionally used in standard proofs of asymptotic normality for M-estimators. As a result, our results would apply directly to a ``no-free lunch'' in that first stage of the argument, with the caveat that the linkage between ``variance in mean estimation of the gradient'' and ``variance in parameter estimation'' is harder to characterize exactly in a non-asymptotic framework.  %
\end{remark}

\section{Theoretical Analysis}
\label{sec:theory}

To understand the relative performance of the classical and PPI++ estimators, we focus on an exact characterization of  $\E[(\hat{\theta} - \theta)^2]$, the finite-$n$ estimation error.  Since both the~\nameref{def:classical-n} and~\nameref{def:cross-ppi-inf-N} estimators are unbiased, analyzing the finite-$n$ estimation error is equivalent to the finite-$n$ variance $\Var (\thetah) \coloneqq \E[(\hat{\theta} - \E[\hat{\theta}])^2]$.

Our main result (\cref{theorem:cross-ppi-general-mse}) makes no substantive assumptions regarding the distribution of $Y$ and $F$. That said, we warm-up in~\cref{sec:theory-gaussian} by analyzing the special case where labels and pseudo-labels are Gaussian, and the analysis / results are far simpler. In this setting, both the~\nameref{def:single-sample-ppi-inf-N} and~\nameref{def:cross-ppi-inf-N} estimators are both unbiased, and their variances are comparable.  The behavior in this setting is qualitatively similar to the more general results we derive later on.

In~\cref{sec:cross-fit}, we move on to analyze the variance of the~\nameref{def:cross-ppi-inf-N} estimator for general distributions, where it remains unbiased. We decompose the variance into four parts: The variance of the classical estimator, a term that represents the \textit{gain} in efficiency (i.e., a decrease in variance) attributable to pseudo-label quality, a term that represents the \textit{loss} in efficiency due to errors in estimating the quality of the pseudo-labels and an additional term that captures the dependence between the two folds of the cross-fit estimator. The gain term depends directly on the covariance between the labels and the pseudo-labels while the loss term depends on the error in estimating this covariance from the labeled samples. The core of our no-free lunch result is that if \textit{this efficiency loss in estimating the covariance from labeled data exceeds the gain offered by the covariance itself, then \nameref{def:cross-ppi-inf-N} will perform worse than \nameref{def:classical-n}}.

A complete specification of the finite-sample variance of the ~\nameref{def:cross-ppi-inf-N} estimator requires an analysis of the efficiency loss term. We motivate this analysis with the illustrative case when the pseudo-labels are completely random. Here, the variance of ~\nameref{def:cross-ppi-inf-N} is always greater than the variance of the classical estimator, by a multiplicative factor of $O(1/n)$. Then, in~\cref{subsec:gen-theory}, we characterize the variance of the~\nameref{def:cross-ppi-inf-N} estimator for any joint distribution of gold-standard and pseudo-labels.
Finally, in~\cref{single-sample-theory}, we return to the discussion of the~\nameref{def:single-sample-ppi-inf-N} estimator, which is more difficult to analyze theoretically but has some unappealing properties from the perspective of confidence interval construction: It is biased in finite samples, and estimates of the variance that treat $\lambda$ as fixed can lead to misleading optimism.
We provide complete proofs for all theoretical results in \cref{sec:all-proofs}.

\subsection{Warm Up: Gaussian Labels}
\label{sec:theory-gaussian}

We begin with the special case where $Y$ and $F$ are normally distributed, for two reasons. First, this special case allows for a unified analysis of both the~\nameref{def:single-sample-ppi-inf-N} and the ~\nameref{def:cross-ppi-inf-N} estimators, and establishes the existence of settings where both estimators under-perform \nameref{def:classical-n}. Second, this special case provides simple expressions for the variance of each estimator that capture the fundamental ``no free lunch'' nature of the problem, yielding simple conditions where PPI++ outperforms the classical estimate.  In the Gaussian setting, both estimators are \textbf{unbiased}, and so we consider their variance as a measure of performance.\footnote{The setting of Gaussian labels is unique: The empirical covariance of normally distributed random variables is independent of their sample means.  This property reduces the dependence between the estimate of $\lambda$ (which depends on the estimated covariance) and the estimated value itself.  As a result,~\nameref{def:single-sample-ppi-inf-N} is also unbiased in this setting.  However, in the non-Gaussian case,~\nameref{def:single-sample-ppi-inf-N} is biased, while the use of sample-splitting retains the unbiasedness of~\nameref{def:cross-ppi-inf-N}, as we will see in~\cref{single-sample-theory}}

\begin{restatable}[$\Var$ of PPI++, Gaussian Case]{proposition}{MSEGaussian}
	\label{theorem:GaussianMSE}
	Let $Y,F$ be jointly Gaussian random variables, and consider the \nameref{def:single-sample-ppi-inf-N} ,~\nameref{def:cross-ppi-inf-N} and ~\nameref{def:classical-n} estimators which all make use of $n$ labeled samples. Then,
	\begin{align}
		 & \Var(\thetah_{\text{PPI++}}) \label{eq:gaussian-mse}                                                                                                                 \\
		 & = \Var(\thetah_{\text{Classical}})\bigg(1 + \frac{1}{c \cdot n -1} \bigg) - \frac{c \cdot n - 2}{n(c \cdot n - 1)}\cdot \frac{\sigma_{fy}^2}{\sigma_{f}^2} \nonumber
	\end{align}
	where $c = 1$ for~\nameref{def:single-sample-ppi-inf-N} and $c = 1/2$ for~\nameref{def:cross-ppi-inf-N}. Note that $\Var(\thetah_{\text{Classical}}) = \sigma_y^2 / n$
\end{restatable}

\cref{theorem:GaussianMSE} shows the exact variance of the PPI++ variants in the Gaussian setting. We see that the variance is reduced by a term that depends on the correlation between $F$ and $Y$. However, the variance is also inflated by a multiplicative factor in the first term, and hence we require the correlation to be greater than a certain level for the overall variance to be lower than that of the classical estimator. \Cref{eq:gaussian-mse} makes it easy to arrive at this required level, which we present in the below corollary:

\begin{restatable}[Condition for Improvement, Gaussian Case]{corollary}{MSESingleSampleNormal}
	\label{theorem:jointly-normal-mse-condition-all}
	Let $Y,F$ be jointly Gaussian random variables, and consider the \nameref{def:single-sample-ppi-inf-N} and~\nameref{def:cross-ppi-inf-N} estimators which both make use of $n$ labeled samples. Then,
	\begin{align*}
		\Var(\thetah_{\text{PPI++}}) & < \Var(\thetah_{\text{Classical}})  \iff  \frac{1}{\sqrt{c \cdot n-2}}  < \abs{\frac{\sigma_{fy}}{\sigma_{y}\sigma_{f}}}
	\end{align*}
	where $c = 1$ for~\nameref{def:single-sample-ppi-inf-N} and $c = 1/2$ for~\nameref{def:cross-ppi-inf-N}.
\end{restatable}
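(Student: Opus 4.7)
The plan is to compute the MSE of each estimator in closed form and compare it against $\sigma_y^2/(2n) = \MSE(\thetahc)$, the classical MSE with $2n$ labeled samples. The unifying reparametrization I would use in both cases comes from the population regression of $Y$ on $F$: since $(Y,F)$ is jointly Gaussian, write $Y_i = \alpha + \beta F_i + \epsilon_i$ with $\beta = \sigma_{yf}/\sigma_f^2$ and $\epsilon_i \sim N(0, \sigma_\epsilon^2)$, where $\sigma_\epsilon^2 = \sigma_y^2 - \sigma_{yf}^2/\sigma_f^2$, \emph{independent} of $F_i$. Under this decomposition, $\hat{\lambda} = \hat{\sigma}_{yf}/\sigma_f^2 = \beta\,\hat{\sigma}_f^2/\sigma_f^2 + \hat{\sigma}_{\epsilon F}/\sigma_f^2$, and the estimator's error reduces to a sum of terms involving $\bar{F}$, $\hat{\sigma}_f^2$, $\bar{\epsilon}$, and $\hat{\sigma}_{\epsilon F}$, whose joint distribution is classically well understood.

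For the single-sample estimator on $m = 2n$ samples, the derivation I have in mind starts by rewriting
\begin{align*}
\thetah_{\text{Single-PPI++}} - \theta = \beta(\bar{F} - \mu_f)\bigl(1 - \hat{\sigma}_f^2/\sigma_f^2\bigr) + \bar{\epsilon} - (\hat{\sigma}_{\epsilon F}/\sigma_f^2)(\bar{F} - \mu_f),
\end{align*}
and then taking expectations term by term. Three classical Gaussian facts carry the analysis: (i) $\bar{F}$ and $\hat{\sigma}_f^2$ are independent, with $(m-1)\hat{\sigma}_f^2/\sigma_f^2 \sim \chi^2_{m-1}$, giving $\E[(1 - \hat{\sigma}_f^2/\sigma_f^2)^2] = 2/(m-1)$; (ii) $\epsilon$ is independent of $F$, so $\bar{\epsilon}$ is independent of every function of $\{F_i\}$; and (iii) conditional on $\{F_i\}$, the quantities $\bar{\epsilon}$ and $\hat{\sigma}_{\epsilon F} = \frac{1}{m-1}\sum_i \epsilon_i(F_i - \bar{F})$ are zero-mean uncorrelated jointly Gaussian, hence independent. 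Substituting the relevant moments (including $\text{Var}(\hat{\sigma}_{\epsilon F} \mid F) = \sigma_\epsilon^2\,\hat{\sigma}_f^2/(m-1)$) collapses the algebra to $\MSE_{\text{Single}}(m) = \sigma_y^2/(m-1) - (m-2)\sigma_{yf}^2/(m(m-1)\sigma_f^2)$.

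For the cross-fit case I would first handle a single fold: because $\hat{\lambda}^{(k)}$ is computed on a sample independent of the one used to form $(\bar{Y}, \bar{F})$, conditioning on $\hat{\lambda}^{(k)}$ yields $\MSE(\thetah^{(k)}) = \bigl(\sigma_y^2 - 2\sigma_{yf}\E[\hat{\lambda}^{(k)}] + \sigma_f^2\E[(\hat{\lambda}^{(k)})^2]\bigr)/n$, and the Wishart identity $\text{Var}(\hat{\sigma}_{yf}) = (\sigma_y^2\sigma_f^2 + \sigma_{yf}^2)/(n-1)$ then gives the same closed form as above with $n$ in place of $m$. The crucial further step is showing that the two folds are \emph{exactly uncorrelated} in the Gaussian case: expanding $\E[(\thetah^{(1)} - \theta)(\thetah^{(2)} - \theta)]$ produces four products, each of which factors across the two independent halves; three vanish immediately by mean-centering, and the last --- $\E[\hat{\lambda}^{(1)}(\mu_f - \bar{F}^{(1)})] \cdot \E[\hat{\lambda}^{(2)}(\mu_f - \bar{F}^{(2)})]$ --- vanishes because $\hat{\sigma}_{yf}$ and $\bar{F}$ are independent for iid Gaussians (the classical mean/covariance independence). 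Hence $\MSE_{\text{CF}} = \MSE_{\text{Split}}(n)/2$.

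To finish, I would plug $m = 2n$ into the single-sample formula and halve the split-sample formula for cross-fit; in each case the inequality against $\sigma_y^2/(2n)$ reduces, after cancellation of a common $1/(2n(cn-1))$ factor, to $\sigma_y^2\sigma_f^2 < (cn - 2)\sigma_{yf}^2$, which rearranges to $1/\sqrt{cn - 2} < \abs{\sigma_{yf}/(\sigma_y\sigma_f)}$ with $c = 2$ for single-sample and $c = 1$ for cross-fit. The main obstacle I anticipate is the single-sample case, where $\hat{\lambda}$, $\bar{Y}$, and $\bar{F}$ are all computed from the same sample, so no naive conditioning decouples them; the regression reparametrization is precisely what converts the problem into manipulations of the mutually independent Gaussian coordinates $(\bar{F}, \hat{\sigma}_f^2, \bar{\epsilon}, \hat{\sigma}_{\epsilon F})$, whose moments are standard.
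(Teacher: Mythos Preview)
Your proposal is correct and takes a genuinely different route from the paper. The paper first establishes general-distribution MSE formulas (\cref{theorem:cross-ppi-general-mse,theorem:split-sample-ppi-general-mse,theorem:gen-covariance_estimation}) and then specializes to the Gaussian case by plugging in the closed-form higher moments $\E[F^2Y^2]$, $\E[F^2Y]$, $\E[FY^2]$ to simplify $\E[(\hat{\sigma}_{fy}-\sigma_{fy})^2]$ down to $(\sigma_{fy}^2+\sigma_f^2\sigma_y^2)/(n-1)$; for the single-sample estimator it invokes the independence of sample mean and sample covariance (\cref{lemma:gaussian-indpt}) only at the level of showing $\hat{\lambda}\perp(\bar{y},\bar{f})$, which lets the split-sample MSE decomposition carry over with $2n$ in place of $n$. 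Your argument instead works entirely inside the Gaussian model via the regression reparametrization $Y=\alpha+\beta F+\epsilon$, reducing everything to moments of the independent blocks $(\bar{F},\hat{\sigma}_f^2,\bar{\epsilon},\hat{\sigma}_{\epsilon F})$; this is more self-contained and avoids the general machinery, at the cost of being special to the Gaussian case. One additional dividend of your route: by showing $\E[\hat{\lambda}^{(k)}(\mu_f-\bar{F}^{(k)})]=0$ via the mean/covariance independence, you get $\MSE_{\text{CF}}=\tfrac{1}{2}\MSE_{\text{Split}}$ \emph{exactly}, which cleanly gives the full ``if and only if''; the paper's proof leans on \cref{corr:ss-ppi-worse-implies-cf-ppi-worse}, which as stated is only a sufficient condition, so your argument is in that respect tighter for the cross-fit direction.
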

\Cref{theorem:jointly-normal-mse-condition-all} provides a unifying perspective on the \enquote{no free lunch} phenomenon: If the correlation $\rho_{yf} \coloneqq \sigma_{fy} / (\sigma_y \sigma_f)$ is non-zero, then pseudo-labels \textbf{can} improve our variance, but there is a minimum sample size at which this improvement kicks in.\footnote{Notably, the condition for~\nameref{def:cross-ppi-inf-N} uses $n/2$, the size of a single fold, while the condition for~\nameref{def:single-sample-ppi-inf-N} uses $n$, which illustrates a cost of sample splitting in the Gaussian setting.} However, if the correlation is zero, then both PPI++ estimators always perform worse than the classical estimator.  This conclusion follows from a direct application of~\cref{theorem:GaussianMSE}, where plugging a covariance of zero $(\sigma_{fy} = 0)$ into~\cref{eq:gaussian-mse} yields a variance of PPI++ that is $(1 + 1 / (c \cdot n - 1))$ times the variance of the classical estimator.
Intuitively, this additional variance arises from estimation of $\lambda$.  If the covariance is zero, then $\lambda^* = 0$, and this choice of $\lambda$ would yield the same variance as the classical estimator. In practice, however, $\hat{\lambda}$ is not exactly zero, leading to higher variance. As we will see in \cref{theorem:cross-ppi-indpt-mse}, the same gap arises for~\nameref{def:cross-ppi-inf-N} under general distributions, and we discuss the intuition further in~\cref{subsec:indpt-theory}.

For the remainder of the paper, we consider the setting where $N$ is taken to be arbitrarily large.  However, we note that in the Gaussian setting, it is possible to give exact expressions even when $N$ is finite.
\begin{restatable}[Condition for Improvement, Gaussian Case, Finite $N$]{proposition}{MSESingleSampleNormalFiniteN}\label{prop:mse_normal_finite_n}
	Under the same conditions as~\cref{theorem:jointly-normal-mse-condition-all}, consider the~\nameref{def:single-sample-ppi-inf-N} where $N$ is finite.  Then the condition for improvement becomes
	\begin{align*}
		\Var(\thetah_{\text{Single-PPI++}}) & < \Var(\thetah_{\text{Classical}})  \iff                                                    \\
		                             & \frac{1}{\sqrt{n - 2 - 8 \frac{n-1}{N-1}}} < \abs{\frac{\sigma_{fy}}{\sigma_{y}\sigma_{f}}}
	\end{align*}
\end{restatable}

\subsection{Finite-Sample Variance of Cross-fit PPI++}
\label{sec:cross-fit}
Before we present our main results, we briefly note the \textit{unbiasedness} of \nameref{def:cross-ppi-inf-N}.

\begin{restatable}[Bias of Crossfit-PPI++]{proposition}{CrossfitBias}\label{theorem:cross-ppi-bias}
	The ~\nameref{def:cross-ppi-inf-N} estimator is unbiased, i.e., $\mathbb{E}[\thetah_{\text{CF-PPI++}}] = \theta$.
\end{restatable}

This is a trivial result included for completeness, and for contrast with \cref{prop:bias_of_single_sample}, where we will characterize the non-zero bias of~\nameref{def:single-sample-ppi-inf-N}.  The unbiasedness of~\nameref{def:cross-ppi-inf-N} follows by noting that the estimate on each fold after sample-splitting is unbiased since $\lh$ is fixed within a fold, and so the simple average is unbiased by linearity of expectations. We now proceed to focus on the finite sample variance of~\nameref{def:cross-ppi-inf-N} for general distributions. We first present our main result.

\begin{restatable}[Variance of Crossfit-PPI++]{theorem}{CrossfitGeneralMSE}\label{theorem:cross-ppi-general-mse}
	Given a sample size of $n$ labels, the variance of the~\nameref{def:cross-ppi-inf-N} estimator (\cref{def:cross-ppi-inf-N}) is given by
	\begin{align}
		\Var(\thetah_{\text{CF-PPI++}}) = & \underbrace{\frac{\sigma_{y}^2}{n}}_{\Var{\thetah_c}}
		- \underbrace{\frac{\sigma_{fy}^2}{n\sigma_f^2}}_{\text{(Eff. Gain)}}
		+ \underbrace{\frac{1}{n\sigma_f^2}\mathbb{E}[(\hat{\sigma}_{fy} - \sigma_{fy})^2]}_{\text{(Eff. Loss)}} \nonumber
		\\&+ \underbrace{\frac{2}{n^2\sigma_f^4}(\sigma_{yf^2} - 2\sigma_{fy}\mu_f)^2}_{\text{Cross-term b/w folds}}\label{eq:xfit-ppi-mse}
	\end{align}
\end{restatable}

The first term is the variance of the classical estimator. The second term (Eff. Gain) captures the gain in efficiency from using a predictor $f$ that is correlated with the true label $Y$, and the third term (Eff. Loss) captures the loss of efficiency that arises from the expected error in estimating the covariance from finite samples. In~\cref{theorem:gen-covariance_estimation} we will characterize the loss term in~\cref{eq:xfit-ppi-mse}, which can be combined with \cref{theorem:cross-ppi-general-mse} to give the exact variance of the~\nameref{def:cross-ppi-inf-N} estimator. For now,~\cref{eq:xfit-ppi-mse} yields intuition for when~\nameref{def:cross-ppi-inf-N} fails to provide benefit.

\begin{corollary}\label{corollary:performant-ppi}
	\begin{equation*}
		\mathbb{E}[(\hat{\sigma}_{fy} - \sigma_{fy})^2] > \sigma_{fy}^2 \implies \Var(\thetah_{\text{CF-PPI++}}) > \sigma_y^2/n
	\end{equation*}
\end{corollary}

In other words, there is no free lunch, but rather an intuitive trade-off: If the error in estimating the covariance $\sigma_{fy}$ is higher than the magnitude of $\sigma_{fy}$, then~\nameref{def:cross-ppi-inf-N} performs worse than using the $n$ labeled examples alone.

In the following, we characterize the covariance estimation error $\mathbb{E}[(\hat{\sigma}_{fy} - \sigma_{fy})^2]$. To build intuition, we first revisit the setting where $Y, F$ are independent, and prove that (in general), PPI++ performs worse than classical in this setting.
In~\cref{subsec:gen-theory} we then give an explicit characterization (in \cref{theorem:gen-covariance_estimation}) of the covariance estimation error term.

\subsubsection{Warm-up: Performance Gap with Random Pseudo-Labels}
\label{subsec:indpt-theory}

We revisit the case where $Y$ and $F$ are independent, but now allow for non-Gaussian distributions.
\begin{restatable}{proposition}{IndependentMSE}\label{theorem:cross-ppi-indpt-mse}
	Given $n$ samples $\{y_i, f_i\}_{i=1}^n$ drawn from $P_{FY}$, where $Y$ and $F$ are independent with bounded second moments, the
	variance of~\nameref{def:cross-ppi-inf-N} is given by
	\begin{align*}
		\Var(\thetah_{\text{CF-PPI++}}) & = \frac{\vary}{n}\left(1 + \frac{2}{n-2}\right)
	\end{align*}
\end{restatable}
Note that this non-Gaussian result is identical to the analogous result in the Gaussian case (obtained by plugging in $\sigma_{yf} = 0$ to~\cref{eq:gaussian-mse}, where $c = 1/2$ for \nameref{def:cross-ppi-inf-N}). \Cref{theorem:cross-ppi-indpt-mse} can help build intuition that bridges the non-asymptotic and asymptotic regimes: when pseudo-labels are independent of the true labels, the \textit{finite-sample variance} of the~\nameref{def:cross-ppi-inf-N} estimator is always greater than the variance of the classical estimator, even if the \textit{asymptotic variance} is identical.

\subsubsection{Finite-Sample Covariance Estimation Error with General Pseudo-Labels}
\label{subsec:gen-theory}

Before introducing our main result, we introduce additional notation.  For two variables (e.g., $F$ and $Y$) and two integers $a, b$, we use $\sigma_{f^a y^b}$ to refer to the covariance between the random variable $F^a$ and the random variable $Y^b$.  For a single variable $Y$, we use $\sigma^2_{Y^a}$ to denote the variance of $Y^a$.

First, we characterize the error in covariance estimation in~\cref{theorem:gen-covariance_estimation}, which, when plugged into~\cref{theorem:cross-ppi-general-mse}, \textbf{yields an analytical expression for the variance of \nameref{def:cross-ppi-inf-N}} in terms of terms involving the mean and covariance of $Y, F$ and their higher-order moments.

\begin{restatable}[Covariance Estimation Error, General Case]{lemma}{GeneralCovarianceEstimation}\label{theorem:gen-covariance_estimation}
	Let $Y$ and $F$ have bounded fourth-moments. Given $n$
	IID samples $\{y_i, f_i\}_{i=1}^n$ drawn from $P_{YF}$, we have
	\begin{equation*}
		\begin{aligned}
			\E[(\hat{\sigma}_{yf} - \sigma_{yf})^2]
			  & = \frac{1}{n}\sigma_{f^2y^2} + \frac{1}{n-1}\sigma_f^2\sigma_y^2 - \frac{(n-2)}{n(n-1)}\sigma_{fy}^2
			\\&- \frac{2}{n}[\sigma_{y^2f}\mu_f + \sigma_{f^2y}\mu_y - 2\sigma_{fy}\mu_f\mu_y]
		\end{aligned}
	\end{equation*}
\end{restatable}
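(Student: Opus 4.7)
The claim is a second-moment identity for the sample covariance. The direct route, expanding $\hat\sigma_{yf}^2 = \frac{1}{(n-1)^2}\bigl(\sum_i (Y_i - \bar Y_n)(F_i - \bar F_n)\bigr)^2$, is obstructed by the correlated terms introduced by $\bar Y_n$ and $\bar F_n$. I would circumvent this by rewriting $\hat\sigma_{yf}$ as a second-order U-statistic via the standard identity
\[
\hat\sigma_{yf} = \binom{n}{2}^{-1} \sum_{i<j} h(Z_i, Z_j), \qquad h(Z_i, Z_j) = \tfrac{1}{2}(Y_i - Y_j)(F_i - F_j),
\]
with $Z_i = (Y_i, F_i)$. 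This kernel is symmetric and satisfies $\E[h(Z_1, Z_2)] = \sigma_{yf}$, so $\hat\sigma_{yf}$ is unbiased and the target reduces to $\mathrm{Var}(\hat\sigma_{yf})$. The U-statistic form is the crucial simplification: it purges the sample means and reduces the problem to moments of $h$ involving at most three i.i.d.\ draws, regardless of $n$.

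I would next apply the Hoeffding variance decomposition
\[
\mathrm{Var}(\hat\sigma_{yf}) = \binom{n}{2}^{-1}\bigl[2(n-2)\zeta_1 + \zeta_2\bigr], \quad \zeta_1 \coloneqq \mathrm{Var}\bigl(\E[h(Z_1,Z_2)\mid Z_1]\bigr),\quad \zeta_2 \coloneqq \mathrm{Var}(h(Z_1,Z_2)).
\]
A short computation gives $\E[h(z_1,Z_2)\mid Z_1=z_1] = \tfrac{1}{2}\bigl[(y_1 - \mu_y)(f_1 - \mu_f) + \sigma_{yf}\bigr]$, so $\zeta_1 = \tfrac{1}{4}\bigl(\E[(Y-\mu_y)^2(F-\mu_f)^2] - \sigma_{yf}^2\bigr)$. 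For $\zeta_2$, I would expand $(Y_1 - Y_2)^2(F_1 - F_2)^2$ into its nine monomials and take expectations termwise using independence of $Z_1$ and $Z_2$; the bounded fourth-moment hypothesis guarantees all of these exist.

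The final step is to translate the resulting raw moments into the covariance notation of the statement (using identities such as $\E[YF] = \sigma_{yf} + \mu_y\mu_f$, $\E[Y^2F] = \sigma_{y^2f} + (\sigma_y^2 + \mu_y^2)\mu_f$, and $\E[Y^2F^2] = \sigma_{f^2y^2} + (\sigma_y^2 + \mu_y^2)(\sigma_f^2 + \mu_f^2)$), substitute into the Hoeffding formula, and collect terms by their $n$-dependence. After this translation, $\zeta_1$ and $\zeta_2$ share a common aggregate
\[
\alpha \coloneqq \sigma_{f^2y^2} + \sigma_f^2\sigma_y^2 - 2\mu_y\sigma_{f^2y} - 2\mu_f\sigma_{y^2f} + 4\mu_f\mu_y\sigma_{yf},
\]
with $\zeta_1 = (\alpha - \sigma_{yf}^2)/4$ and $\zeta_2 = (\alpha + \sigma_f^2\sigma_y^2)/2$. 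Combining inside the Hoeffding formula produces coefficient $1/n$ on $\alpha$, coefficient $-(n-2)/[n(n-1)]$ on $\sigma_{yf}^2$, and coefficient $1/(n-1)$ on the residual $\sigma_f^2\sigma_y^2$; distributing $\alpha/n$ over its components yields precisely the claimed identity.

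The main obstacle is the routine but error-prone bookkeeping in that last step. Two consistency checks help guard against arithmetic mistakes: (i) because $\hat\sigma_{yf}$ is invariant under the translation $Y \mapsto Y + a, F \mapsto F + b$, the final expression must be translation-invariant, which constrains how the $\mu_y, \mu_f$ terms combine with $\sigma_{y^2f}, \sigma_{f^2y}$, and $\sigma_{yf}$; (ii) in the independent case ($\sigma_{yf} = \sigma_{y^2f} = \sigma_{f^2y} = \sigma_{f^2y^2} = 0$), the identity must collapse to $\sigma_f^2\sigma_y^2/(n-1)$, matching the independent-case calculation underlying~\cref{theorem:split-ppi-indpt-mse}.
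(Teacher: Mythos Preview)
Your proposal is correct and substantially cleaner than the paper's proof. The paper does \emph{not} use the U-statistic representation: it starts from the identity $\hat\sigma_{yf} = \frac{1}{n-1}\sum_i (y_i f_i - \bar y\bar f)$, squares it, and then computes the double sum $\sum_{i,j}\E[(y_if_i - \bar y\bar f)(y_jf_j - \bar y\bar f)]$ by brute force. This requires tracking expectations such as $\E[y_if_i\bar y\bar f]$ and $\E[y_if_j\bar y\bar f]$ through several rounds of case-splitting on matching versus non-matching indices (up to three distinct indices appear), and the resulting bookkeeping spans roughly two pages with color-coded term-collection.

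Your U-statistic route sidesteps all of this. By writing $\hat\sigma_{yf}$ as a degree-two U-statistic with kernel $h(Z_i,Z_j) = \tfrac12(Y_i-Y_j)(F_i-F_j)$, the sample means never appear, and the Hoeffding decomposition cleanly separates the $n$-dependence from the two moment computations $\zeta_1, \zeta_2$. I verified your claimed values $\zeta_1 = (\alpha - \sigma_{yf}^2)/4$ and $\zeta_2 = (\alpha + \sigma_f^2\sigma_y^2)/2$ and the subsequent recombination; they are exactly right and reproduce the statement. The trade-off is that the paper's route, while longer, is self-contained and requires no prior knowledge of U-statistic theory, whereas yours presumes familiarity with the Hoeffding variance formula. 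Your translation-invariance and independent-case sanity checks are also well chosen---the paper's proof offers no analogous internal verification.
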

Note that while here the covariance estimate is formed from $n$ IID samples, \nameref{def:cross-ppi-inf-N} forms this estimate from only $n/2$ samples assuming $n$ total labels are available.

\subsubsection{Analytical Expression for the Finite-Sample Variance}

Combining~\cref{theorem:gen-covariance_estimation} with~\cref{theorem:cross-ppi-general-mse} provides an analytical expression for the exact variance of~\nameref{def:cross-ppi-inf-N}.  While we do not write the entire expression here due to space constraints, we make the following observation: Of the four components of the variance in~\cref{eq:xfit-ppi-mse}, the first two are $O(1/n)$, and the last two are $O(1 / n^2)$, since the covariance estimation error in~\cref{theorem:gen-covariance_estimation} is $O(1/n)$
\begin{align*}
	\Var(\thetah_{\text{CF-PPI++}})  =& \frac{1}{n} \left(\sigma_{y}^2
	- \frac{\sigma_{fy}^2}{\sigma_f^2}\right)                                                                                                                                                                  \\
	                                & + \frac{1}{n} \left(\frac{1}{\sigma_f^2} \overbrace{\mathbb{E}[(\hat{\sigma}_{fy} - \sigma_{fy})^2]}^{O(1/n) \text{ by \cref{theorem:gen-covariance_estimation}}}\right) \\
	                                & + \frac{1}{n^2}\left(\frac{2}{\sigma_f^4}(\sigma_{yf^2} - 2\sigma_{fy}\mu_f)^2\right)                                                                                    \\
	                                =&  \frac{1}{n} \left(\sigma_{y}^2 - \frac{\sigma_{fy}^2}{\sigma_f^2}\right) + O\left(\frac{1}{n^2}\right)
\end{align*}
The full expression for the variance, expanding the covariance estimation error term, is a simple corollary of~\cref{theorem:cross-ppi-general-mse} and~\cref{theorem:gen-covariance_estimation}, and is provided in~\cref{corr:full_expression} in the Appendix. The decomposition above provides further intuition regarding the mismatch between asymptotic results, as established in e.g., \citet{angelopoulos2023ppi++}, which only consider the $O(1/n)$ term since the $O(1/n^2)$ terms vanishes asymptotically, and our finite-sample results, which take the $O(1/n^2)$ terms into account.

\subsection{Bias and Over-Optimistic Variance Estimation of Single-Sample PPI++}
\label{single-sample-theory}
We now turn to some less desirable properties of the~\nameref{def:single-sample-ppi-inf-N} estimator.  Practitioners conducting statistical inference are often concerned with the construction of confidence intervals. Typically, we construct asymptotically valid confidence intervals using the variance of a consistent estimator, and hence lower variance yields tighter intervals. While such intervals are generally not guaranteed to maintain nominal coverage in finite samples, \nameref{def:single-sample-ppi-inf-N} introduces two additional challenges. First, while asymptotically consistent, it is biased in finite sample, and second, it can produce overly optimistic variance estimates.  We first characterize the exact bias%
\footnote{It is generally known that~\nameref{def:single-sample-ppi-inf-N} is biased in finite samples \citep{Chaganty2018_6c,eyre2024auto}, but we do not know of other results giving the exact form, so we produce it here.  While~\citet{Chaganty2018_6c} show that the bias of a similar estimator is $O(1/n)$, they do not give the exact form.}
in~\cref{prop:bias_of_single_sample}, and in~\cref{prop:optimistic_variance_single_sample_ppi++} we illustrate that under mild conditions, standard methods for estimating the variance of~\nameref{def:single-sample-ppi-inf-N} will necessarily yield estimates that are lower than the classical variance, even in scenarios (as illustrated in~\cref{sec:theory-gaussian}) where we know the MSE of~\nameref{def:single-sample-ppi-inf-N} is higher than that of the classical estimator.
\begin{restatable}[Bias of Single Sample]{proposition}{BiasSingleSample}\label{prop:bias_of_single_sample}
	The bias of the~\nameref{def:single-sample-ppi-inf-N} defined in \cref{def:single-sample-ppi-inf-N} is given by $\E[\thetah_{\text{Single-PPI++}} - \theta] = \left(2 \sigma_{yf} \muf - \sigma_{yf^2}\right) / (n \sigma_f^2)$
\end{restatable}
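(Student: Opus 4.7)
The plan is to reduce the bias computation to a single mixed moment of centered variables and then carefully enumerate index patterns in a triple sum.

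First, I would rewrite the bias in a convenient form. Since $\E[\bar{y}_n] = \theta$,
\begin{equation*}
\E[\thetah_{\text{Single-PPI++}} - \theta] \;=\; \E\bigl[\lh(\mu_f - \bar{f}_n)\bigr] \;=\; -\frac{1}{\sigma_f^2}\,\E\bigl[\hat{\sigma}_{yf}(\bar{f}_n - \mu_f)\bigr],
\end{equation*}
so everything reduces to evaluating one cross-expectation. The difficulty is that $\hat{\sigma}_{yf}$ and $\bar{f}_n$ are computed from the same sample and are dependent, so we cannot factor. I would handle this by introducing centered variables $\tilde{y}_i = y_i - \mu_y$ and $\tilde{f}_i = f_i - \mu_f$ and using the algebraic identity
\begin{equation*}
(n-1)\hat{\sigma}_{yf} \;=\; \sum_{i=1}^n \tilde{y}_i \tilde{f}_i \;-\; \frac{1}{n}\Bigl(\sum_{i=1}^n \tilde{y}_i\Bigr)\Bigl(\sum_{j=1}^n \tilde{f}_j\Bigr),
\end{equation*}
which follows from expanding $\sum_i(y_i - \bar{y}_n)(f_i - \bar{f}_n) = \sum_i y_i f_i - n\bar{y}_n\bar{f}_n$ and collecting mean terms.

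Next I would substitute this identity and expand $\E[\hat{\sigma}_{yf}(\bar{f}_n - \mu_f)]$ as a triple sum over indices $i,j,k \in \{1,\dots,n\}$ involving $\E[\tilde{y}_i \tilde{f}_j \tilde{f}_k]$ (with various prefactors). By IID independence, any summand in which an index appears alone yields a factor $\E[\tilde{y}_\cdot]=0$ or $\E[\tilde{f}_\cdot]=0$. Thus the only surviving contributions come from the \emph{diagonal} pattern where all three indices coincide, giving $n$ copies of
\begin{equation*}
\E[\tilde{y}_1 \tilde{f}_1^2] \;=\; \E[(Y-\mu_y)(F-\mu_f)^2] \;=\; \sigma_{yf^2} - 2\mu_f\sigma_{yf},
\end{equation*}
where the last equality is a direct expansion of $(F-\mu_f)^2$. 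The $i=k\ne j$ pattern in the first sum contributes zero (as $\E[\tilde f_j]=0$), and similarly for the other non-diagonal patterns in the second sum.

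Combining the two pieces (with prefactors $\tfrac{1}{n(n-1)}$ and $-\tfrac{1}{n^2(n-1)}$) yields
\begin{equation*}
\E[\hat{\sigma}_{yf}(\bar{f}_n - \mu_f)] \;=\; \bigl(\sigma_{yf^2} - 2\mu_f\sigma_{yf}\bigr)\left(\frac{1}{n-1} - \frac{1}{n(n-1)}\right) \;=\; \frac{\sigma_{yf^2} - 2\mu_f\sigma_{yf}}{n}.
\end{equation*}
Dividing by $-\sigma_f^2$ gives the claimed bias $(2\sigma_{yf}\mu_f - \sigma_{yf^2})/(n\sigma_f^2)$. The main obstacle is purely bookkeeping: keeping careful track of which index patterns in the triple sum survive, and not double-counting between the two halves of the centered-covariance identity. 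No probabilistic machinery beyond IID factorization is needed.
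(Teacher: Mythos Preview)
Your proof is correct and follows essentially the same strategy as the paper: reduce the bias to $-\sigma_f^{-2}\E[\hat\sigma_{yf}(\bar f_n-\mu_f)]$ and evaluate this cross-moment by expanding into a sum over index tuples and using IID factorization. The paper packages the computation slightly differently---it first proves a lemma giving $\E[\hat\sigma_{yf}\bar f]=\tfrac{1}{n}(\sigma_{yf^2}+(n-2)\sigma_{yf}\mu_f)$ without centering, then subtracts $\sigma_{yf}\mu_f$---whereas you center both variables at the outset. Your centering step is a mild simplification: because $\E[\tilde y_i]=\E[\tilde f_i]=0$, every off-diagonal index pattern dies immediately, so you never need to carry around the intermediate $(n-2)\sigma_{yf}\mu_f$ term that the paper later cancels. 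The two arguments are otherwise identical in spirit and length.
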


While in \cref{sec:cross-fit}, we exactly characterized the variance of ~\nameref{def:cross-ppi-inf-N}, we omit a similar analysis of~\nameref{def:single-sample-ppi-inf-N}, since the latter case is far less tractable to analyze, owing to complex dependencies that arise from estimating $\lambda$ on the same data used to form the final estimate. We focus instead on the typical approach to \textit{estimating} the variance that practitioners use to derive confidence intervals, and give the following result on its overly optimistic nature.

In~\cref{prop:optimistic_variance_single_sample_ppi++} we consider a variance estimation strategy that treats $\lambda$ as fixed, and uses plug-in estimators to estimate each component of the variance (namely, $\hat{\sigma}_y, \hat{\sigma}_f,$ and $\hat{\sigma}_{yf}$).  In the appendix, we present a similar result for another common method of variance estimation, that similarly treats the estimated $\lambda$ as fixed, and uses the empirical variance of $y - \lambda f$ in $\cD_n$ and $\lambda f$ in $\cD'_N$.

\begin{restatable}[Optimistic Variance Estimation of Single Sample PPI++]{proposition}{OptimisticVarianceSSPPI}
	\label{prop:optimistic_variance_single_sample_ppi++}
	The plug-in variance estimator for~\nameref{def:single-sample-ppi-inf-N} given by $\hat{\sigma}^2_{\text{Single-PPI++}} = (1/n) [\hat{\sigma}^2_y + \lambda^2\sigma^2_f - 2\lambda \hat{\sigma}_{fy}]$ reduces to $(1/n)[\hat{\sigma}^2_y - \hat{\sigma}^2_{fy}/\sigma^2_f]$, which is always less than the plug-in estimate of the classical variance $\hat{\sigma}^2_y / n$. 
\end{restatable}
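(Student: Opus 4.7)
The proof is essentially a direct algebraic substitution, so my plan is short. The key observation is that the variance estimator as stated treats $\lambda$ symbolically, but to obtain the actual estimator one plugs in $\hat{\lambda}$ as prescribed by \cref{def:single-sample-ppi-inf-N}, namely $\hat{\lambda} = \hat{\sigma}_{fy} / \sigma_f^2$ (we are in the infinite-$N$ regime, so $\sigma_f^2$ is known exactly). The first step is therefore to substitute this $\hat{\lambda}$ into $\hat{\sigma}^2_{\text{Single-PPI++}} = (1/n)[\hat{\sigma}^2_y + \hat{\lambda}^2 \sigma_f^2 - 2\hat{\lambda}\hat{\sigma}_{fy}]$.

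Next, I compute the two $\hat{\lambda}$-dependent terms separately:
\[
\hat{\lambda}^2 \sigma_f^2 = \frac{\hat{\sigma}_{fy}^2}{\sigma_f^4} \cdot \sigma_f^2 = \frac{\hat{\sigma}_{fy}^2}{\sigma_f^2}, \qquad 2\hat{\lambda}\hat{\sigma}_{fy} = \frac{2\hat{\sigma}_{fy}^2}{\sigma_f^2},
\]
so that their difference is $-\hat{\sigma}_{fy}^2 / \sigma_f^2$. Substituting back gives
\[
\hat{\sigma}^2_{\text{Single-PPI++}} = \frac{1}{n}\left[\hat{\sigma}_y^2 - \frac{\hat{\sigma}_{fy}^2}{\sigma_f^2}\right],
\]
which is the claimed reduction. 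Because $\hat{\sigma}_{fy}^2 \geq 0$ and $\sigma_f^2 > 0$, the subtracted quantity is non-negative, yielding $\hat{\sigma}^2_{\text{Single-PPI++}} \leq \hat{\sigma}_y^2 / n$ pointwise, with strict inequality whenever $\hat{\sigma}_{fy} \neq 0$ (which holds almost surely under any continuous joint distribution of $F, Y$).

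There is no real obstacle here: the whole content of the proposition is the observation that when the data-driven $\hat{\lambda}$ is chosen to minimize the quadratic $\hat{\sigma}_y^2 + \lambda^2\sigma_f^2 - 2\lambda\hat{\sigma}_{fy}$ in $\lambda$, and then plugged back into that same quadratic to form a ``variance estimate,'' the resulting value is exactly the \emph{minimum} of that quadratic, which is strictly below $\hat{\sigma}_y^2$. The only care needed is to make the logical point that the variance estimator in the statement treats $\hat{\lambda}$ as if it were a fixed constant (so that the usual variance-of-a-linear-combination formula applies), whereas in reality the same labeled sample is used to both minimize the variance and to evaluate it; the proposition just quantifies the resulting optimism bias. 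I would conclude by emphasizing this interpretation so that the reader sees why this is a ``no free lunch'' phenomenon specifically attributable to the lack of sample splitting in~\nameref{def:single-sample-ppi-inf-N}, consistent with the discussion in \cref{single-sample-theory}.
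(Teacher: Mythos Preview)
Your proposal is correct and takes essentially the same approach as the paper: substitute $\hat{\lambda} = \hat{\sigma}_{fy}/\sigma_f^2$ into the quadratic-in-$\lambda$ variance formula and observe that the result is $\hat{\sigma}_y^2/n$ minus a non-negative term. The paper additionally motivates the variance formula by first deriving it for finite $N$ (via a short lemma computing $\text{Var}(\thetahp)$ when $\lambda$ is fixed) and then passing to the limit $N\to\infty$, but the core algebraic reduction and the conclusion are identical to yours.
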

As a result of~\cref{prop:optimistic_variance_single_sample_ppi++}, the asymptotic confidence intervals constructed for~\nameref{def:single-sample-ppi-inf-N} will always be narrower than those of~\nameref{def:classical-n}, even if the true variance is larger. In \cref{sec:experiments}, we show empirically that these \textit{always tighter intervals} can result in significant drops in coverage.

\section{Experiments}
\label{sec:experiments}

\begin{figure}[t]
	\centering
	\begin{subfigure}[t]{0.45\textwidth}
		\centering
		\includegraphics[width=0.95\linewidth]{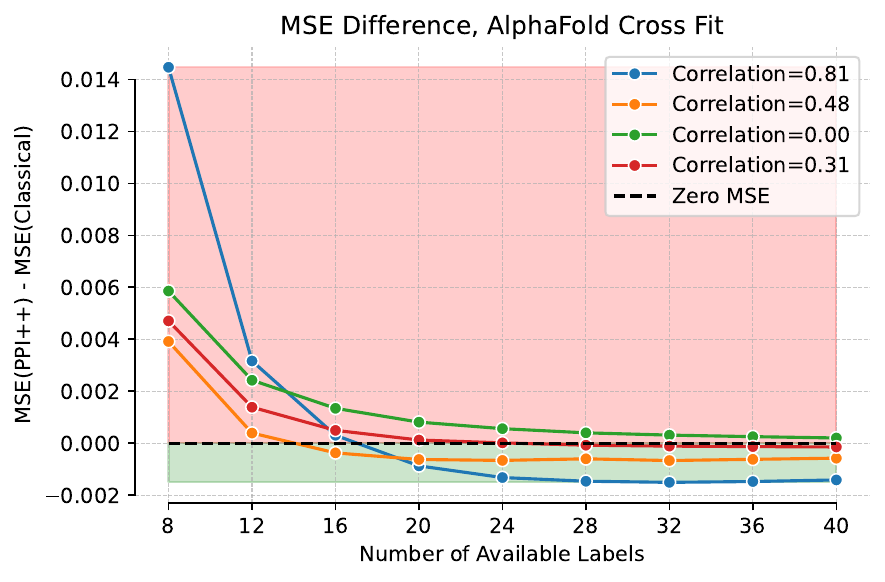}
		\caption{Cross-fit PPI++ Estimator}
		\label{fig:coverage1}
	\end{subfigure}
	\hfill
	\begin{subfigure}[t]{0.45\textwidth}
		\centering
		\includegraphics[width=0.95\linewidth]{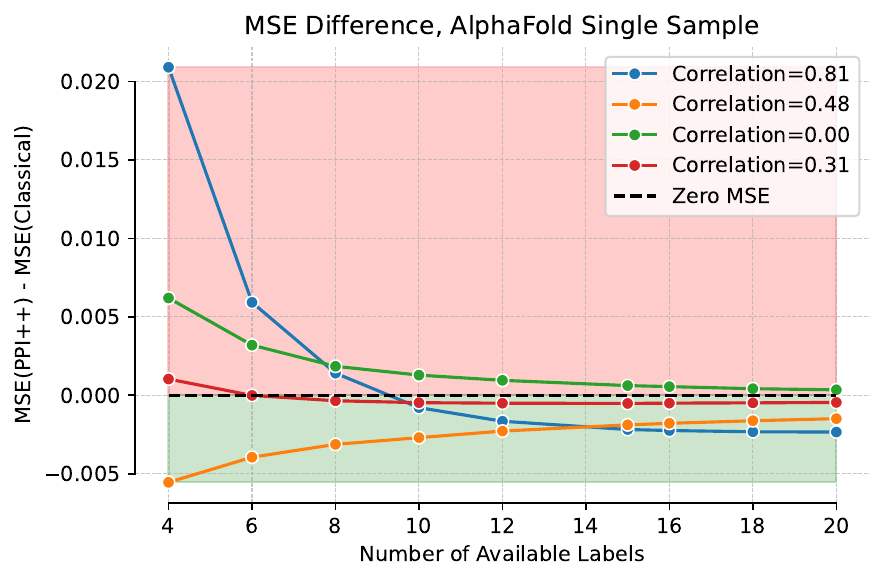}
		\caption{Single-Sample PPI++ Estimator}
		\label{fig:width1}
	\end{subfigure}
	\caption{$\MSE$ diff: $\MSE(\hat{\theta}_{\text{PPI++}}) - \MSE(\hat{\theta}_{\text{Classical}})$ vs. Sample Size on the Alphafold Dataset for PPI++ estimators with pseudo-labels of varying quality. Each line represents PPI++ with pseudo-labels of different correlations. The blue line uses the original set, which has strong predictive performance, but only improves estimation error at $n \geq 20$ (a) and $n \geq 10$ (b).}
	\label{fig:mse_experiments}
\end{figure}

\begin{figure*}[t]
  \centering
  \includegraphics[width=0.85\textwidth]{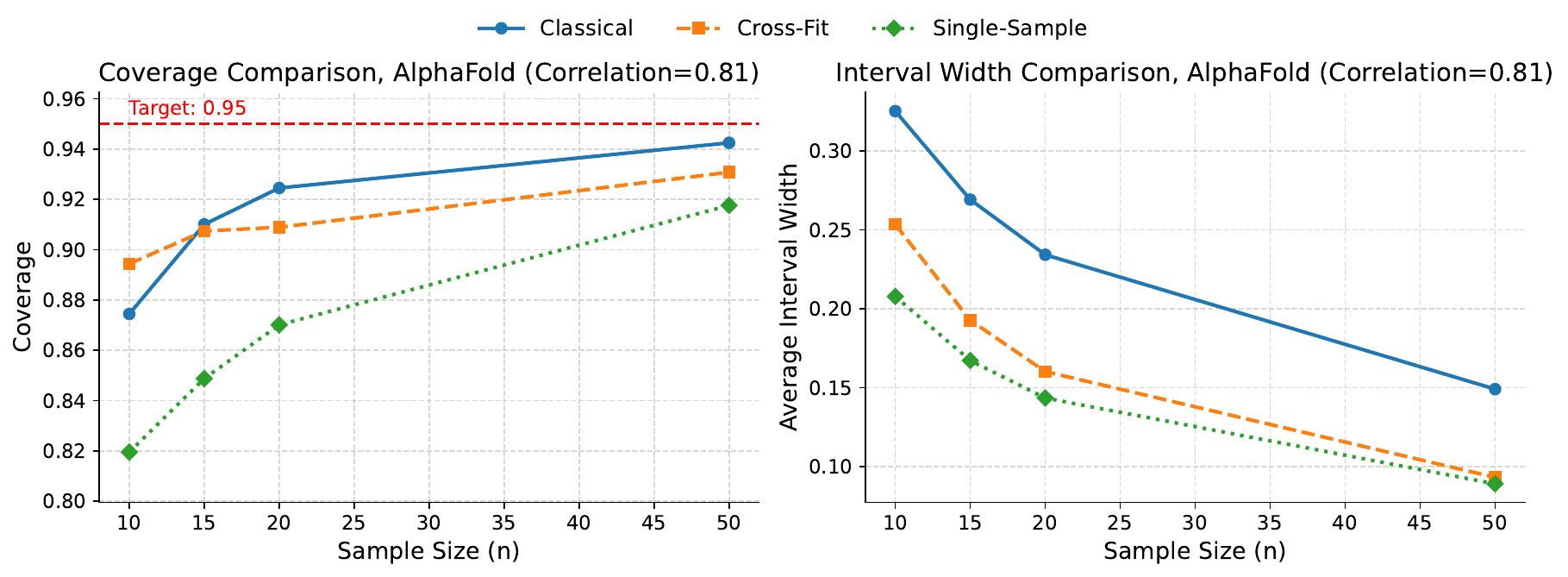}
  \caption{\textbf{AlphaFold.} Coverage (left) and interval width (right) for~\nameref{def:cross-ppi-inf-N} vs.~\nameref{def:single-sample-ppi-inf-N} using pseudo-labels from Alphafold (\(\mathrm{corr}=0.81\)). \nameref{def:single-sample-ppi-inf-N} under-covers relative to \nameref{def:cross-ppi-inf-N} and \nameref{def:classical-n}.}
  \label{fig:side_by_side_2x2}
\end{figure*}

We now illustrate our theory with experiments on the Alphafold \citep{jumper2021highly} dataset used in \citet{angelopoulos2023ppi++}, using a similar experimental setup where we use bootstrapping to simulate multiple draws from a common distribution, and use the empirical mean of the (larger) original dataset set for establishing ground truth. Note that we focus on the low sample size regime where n $<$ 50. See \cref{sec:appdx-experiment-setup} for more details. See \cref{sec:galaxies-plots} for additional results on the galaxies \citep{angelopoulos2023ppi++} dataset.

In~\cref{fig:mse_experiments}, we show the difference between MSE of PPI++ and classical as a function of sample size $n$. We use $\MSE$ to reflect the fact that while \nameref{def:cross-ppi-inf-N} is unbiased,  ~\nameref{def:single-sample-ppi-inf-N} is not. For $\MSE$ difference, the region below $0$ marks improvement. We consider pseudo-labels with various correlation levels (0.81 is the original set in Alphafold). The results are in line with our theoretical developments for~\nameref{def:cross-ppi-inf-N}, and illustrate that similar trends hold for~\nameref{def:single-sample-ppi-inf-N}. Using pseudo-labels with varying quality, the MSE either (a) starts off worse than classical, but starts to improve over classical past a certain `tipping point' sample size, (b) starts off worse than classical, but closes the gap as dataset size increases, or (c) starts off better than classical and stays there. For all non-zero correlation sets, there exists some $n$ after which PPI++ has improved $\MSE$ over classical. We note that for the pseudo-label models shown here,~\nameref{def:single-sample-ppi-inf-N} tends to achieve lower $\MSE$ than~\nameref{def:cross-ppi-inf-N}.

In~\cref{fig:side_by_side_2x2}, we show the coverage (left) and average widths of intervals (right) constructed via \nameref{def:single-sample-ppi-inf-N} and \nameref{def:cross-ppi-inf-N} over $n$. Here we present the results with the original pseudo-label set in Alphafold with correlation $0.81$, while a further-noised set is presented in \cref{sec:appdx-alphafold-coverage}. We note that the \nameref{def:single-sample-ppi-inf-N} method produces intervals that are narrower than those from the \nameref{def:classical-n}. However, these substantively tighter intervals come at the cost of inadequate coverage. For instance, for the original pseudo-labels (top), at $n = 20$, the realized coverage with \nameref{def:single-sample-ppi-inf-N} is 87\%. In \cref{sec:appdx-alphafold-coverage}, ~\cref{fig:cov-appdx}, we see that both \nameref{def:single-sample-ppi-inf-N} and  \nameref{def:cross-ppi-inf-N} show wider intervals as the correlation drops, and while \nameref{def:cross-ppi-inf-N} widens enough to match the coverage of \nameref{def:classical-n}, \nameref{def:single-sample-ppi-inf-N} still falls short in this regard.

\section{Discussion}
\label{sec:discussion}

Having presented our main theoretical results, we now step back to summarize what these results mean for applied use of prediction-powered methods.

\textbf{A Cautionary Note (No Free Lunch):} The practical appeal of \nameref{def:ppi} is clear: when pseudo-labels \emph{are informative}, they can increase the effective sample size. Our work seeks to establish the same spirit for \nameref{def:ppi++}, guiding practitioners away from the pitfall of believing that pseudo-labels \textit{always} help, even when they are close to uninformative. 

\textbf{An Optimistic Note}: Our results show that the required dependence between pseudo-labels and true labels is not fixed, but depends on the number of labeled samples, and is not necessarily an onerous requirement: Taking the Gaussian case as a general rule-of-thumb, the correlation lower-bound of $(n/2 - 2)^{-1/2}$ for Cross-fit PPI++ may be reasonable to achieve for moderate $n$. The key is that \textit{some} assumption on pseudo-label performance is required for improvement, but such assumptions may be quite reasonable.

\textbf{A Practical Note}: Our work provides more than just a ``no free lunch'' result, or a ``rule-of-thumb'' for gauging the benefit of pseudo-labels. \Cref{theorem:cross-ppi-general-mse}, combined with~\cref{theorem:gen-covariance_estimation}, can be used to derive the exact relative variance of~\nameref{def:cross-ppi-inf-N}.  While the full form depends on some higher-order covariances (e.g., between $Y$ and $F^2$), the full expression can be computed analytically in many common settings, and used to construct visualizations like the one shown in~\cref{fig:phase-diagram} for the binary setting. Thus, if the practitioner is willing to posit a reasonable assumption on the quality of the pseudo-labels, and the number of labels they have, our framework can determine if there is an upside to use \nameref{def:ppi++} or \nameref{def:ppi} and the magnitude of improvement. 

Overall, our hope is that this work helps practitioners use techniques like PPI++ with confidence, making transparent the implicit assumptions required to see practical benefit.

\section*{Impact Statement}
\label{sec:impact}

Given the availability and ease of use of LLMs and VLMs, it has never been easier to construct \textit{some} predictor that appears to plausibly have "good enough" performance, even without using labeled data. Thus, pseudo-labels are already a routine and central part of the decision-making processes within many high-impact applications. For instance, LLM-as-judge evaluations in industry production pipelines \cite{zhou2025feedback, rahmani2025judging, li2502preference, zheng2023judging}, LLMs as proxies for human responses in social science experimentation \cite{hewitt2024predicting, manning2024automated, anthis2025llm}, the use of foundation models in randomized controlled trials \cite{goldenholz2025inductive, lai2024assessing}, clinical trials \cite{poulet2025prediction}, PPI-based content moderation \cite{waldetoft2025prediction, ludwig2024unraveling}, LLMs-as-peer-reviewers \cite{tyser2024ai, goldberg2024usefulness}, etc. While procedures like PPI++ are important tools, the promise of these methods to \textit{always} turn these synthetic signals, however noisy they may be, into statistically valid inferences might tempt practitioners to throw in ``the kitchen sink'' of black-box pseudo labels, confident that it cannot hurt. Given the already widespread use of LLM-labels, such misplaced optimism could prove very costly. Our theoretical findings narrate a ``no-free lunch'' style cautionary tale: proof that there are practical settings where PPI++ can hurt and PPI++ confidence intervals fail to meet advertised rates of coverage. On the other hand, our theory provides a quantitative framework that converts the assumptions that a practitioner is willing to make about the quality of their pseudo-labels into a quantification of the magnitude of the upside, if any.

\bibliography{bibliography}
\bibliographystyle{icml2026}

\newpage
\appendix
\onecolumn

\allowdisplaybreaks
\section{Proofs}
    \label{sec:all-proofs}
    
    In this appendix section, we include formal proofs for all the mathematical statements in the main paper. We structure the section as follows. After giving the full expression for the variance of Cross-Fit PPI++, we begin with an introduction of some useful lemmas on properties of moments, variances and covariances. Some lemmas are interesting in their own right while other lemmas are introduced solely to establish some results that recur in the workings of other proofs. We then consider the bias of \nameref{def:cross-ppi-inf-N} and \nameref{def:single-sample-ppi-inf-N}. We show that \nameref{def:cross-ppi-inf-N} is unbiased while \nameref{def:single-sample-ppi-inf-N} is biased and we derive this bias that we presented in \cref{prop:bias_of_single_sample}. We then present a full proof of the finite sample variance of the \nameref{def:cross-ppi-inf-N} and all the statements in \cref{sec:cross-fit} including conditions where \nameref{def:cross-ppi-inf-N} has higher variance than \cref{def:classical-n}. Following this, we construct the proof for the covariance estimation error presented in \cref{theorem:gen-covariance_estimation}. With these proofs in place, we are able to apply them to the Gaussian setting to complete the proofs in \cref{sec:theory-gaussian}. Finally, we provide proofs for the optimistic variance estimate of \nameref{def:single-sample-ppi-inf-N} that we presented in \cref{prop:optimistic_variance_single_sample_ppi++}.    

    \subsection{Full Expression for the Variance of Cross-Fit PPI++}

    \begin{corollary}\label{corr:full_expression}
        Under the conditions of~\cref{theorem:cross-ppi-general-mse} and~\cref{theorem:gen-covariance_estimation}, we have 
\begin{align*}
	\Var(\thetah_{\text{CF-PPI++}})  =& \frac{1}{n} \left(\sigma_{y}^2
	- \frac{\sigma_{fy}^2}{\sigma_f^2}\right)                                                                                                                                                                  \\
	                                & + \frac{1}{n \sigma_f^2} \left(
			  \frac{1}{n/2}\sigma_{f^2y^2} + \frac{1}{n/2-1}\sigma_f^2\sigma_y^2 - \frac{(n/2-2)}{n/2(n/2-1)}\sigma_{fy}^2 - \frac{2}{n/2}[\sigma_{y^2f}\mu_f + \sigma_{f^2y}\mu_y - 2\sigma_{fy}\mu_f\mu_y]
                                    \right) \\
	                                & + \frac{1}{n^2}\left(\frac{2}{\sigma_f^4}(\sigma_{yf^2} - 2\sigma_{fy}\mu_f)^2\right)                                                                                    \\
	                                =&  \frac{1}{n} \left(\sigma_{y}^2 - \frac{\sigma_{fy}^2}{\sigma_f^2}\right) + O\left(\frac{1}{n^2}\right)
\end{align*}
    \end{corollary}
    \begin{proof}
    The statement follows by plugging in~\cref{theorem:gen-covariance_estimation} (using $n/2$ samples) into~\cref{theorem:cross-ppi-general-mse}.
    \end{proof}
    
    \subsection{Foreword and Useful Lemmas}
    \label{sec:foreword}
    \begin{remark}[Use of subscripts]\label{remark:notation_ijk}
	We often make use of subscripts $y_i, f_j$, etc in two contexts:  First, we use these subscripts in the standard context as part of a sum. For instance, we may write $\E[\bar{y} \bar{f}] = \frac{1}{n} \sum_{i=1}^{n} \E[y_i \bar{f}]$.  Second, we use these subscripts as short-hand to distinguish between variables corresponding to different samples.  Continuing the previous example, we may write $\E[\bar{y} \bar{f}] = \frac{1}{n} \sum_{i=1}^{n} \E[y_i \bar{f}] = \frac{1}{n} n \cdot \E[y_i \bar{f}]$.  Here, we make implicit use of the fact that for any $i \in \{1, \ldots, n\}$, the expectation $\E[y_i \bar{f}]$ is identical by exchangeability.  This notation is particularly useful when indicies differ.  For instance $\E[y_i \bar{f}] = \frac{1}{n} \E[y_i (f_i + \sum_{j \neq i} f_j)] = \frac{1}{n} \E[y_i f_i] + \frac{(n-1)}{n} \E[y_i f_j]$, where $\E[y_i f_j]$ denotes the expectation of $y_i$ (from some sample $i$) and $f_j$ (drawn from a distinct sample $j \neq i$), such that $\E[y_i f_j] = \E[y]\E[f]$ by independence, while $\E[y_i f_i] = \E[yf]$.
\end{remark}

\begin{lemma}[Covariance Representation]\label{lem:rewrite_covariance}
	The empirical covariance estimated from $n$-samples can be equivalently written
	\begin{equation}
		\frac{1}{n-1} \sum_{i=1}^{n} (y_i - \bar{y}) (f_i - \bar{f}) = \frac{1}{n-1} \sum_{i=1}^{n} (y_i f_i - \bar{y} \bar{f})
	\end{equation}
\end{lemma}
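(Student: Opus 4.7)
The plan is a direct algebraic expansion. I would start by multiplying out the product on the left-hand side, writing
\[
\sum_{i=1}^n (y_i - \bar{y})(f_i - \bar{f}) = \sum_{i=1}^n y_i f_i - \bar{f}\sum_{i=1}^n y_i - \bar{y}\sum_{i=1}^n f_i + n\bar{y}\bar{f}.
\]
Then I would use the elementary identities $\sum_{i=1}^n y_i = n\bar{y}$ and $\sum_{i=1}^n f_i = n\bar{f}$ to collapse the middle two terms, each of which contributes $-n\bar{y}\bar{f}$. Combined with the $+n\bar{y}\bar{f}$ at the end, the net contribution from all the bar terms is $-n\bar{y}\bar{f}$, so the left-hand side becomes $\sum_{i=1}^n y_i f_i - n\bar{y}\bar{f}$.

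For the right-hand side, I would apply the same bookkeeping: since $\bar{y}\bar{f}$ does not depend on $i$, we have $\sum_{i=1}^n (y_i f_i - \bar{y}\bar{f}) = \sum_{i=1}^n y_i f_i - n\bar{y}\bar{f}$, matching the simplified left-hand side term by term. Dividing both sides by $n-1$ completes the argument.

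There is really no obstacle here; the lemma is a purely notational rewrite that will be invoked repeatedly in the heavier downstream moment computations (e.g., in the proof of \cref{theorem:gen-covariance_estimation}), where it is more convenient to work with cross-products $y_i f_i$ than with mean-centered products. The only thing worth flagging in the write-up is that the identity is exact in finite samples and does not rely on any distributional assumption on $(Y, F)$, so it applies throughout the paper without caveat.
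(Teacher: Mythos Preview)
Your proposal is correct and matches the paper's proof essentially line for line: both expand the product, use $\sum_i y_i = n\bar{y}$ and $\sum_i f_i = n\bar{f}$ to collapse the cross terms, and identify the result with the right-hand side. There is nothing to add.
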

\begin{proof}
	\begin{align}
		\frac{1}{n-1} \sum_{i=1}^{n} (y_i - \bar{y}) (f_i - \bar{f}) & = \frac{1}{n-1} \sum_{i=1}^{n} (y_i f_i - y_i \bar{f} - f_i \bar{y} + \bar{y} \bar{f})
	\end{align}
	and $\sum_{i} y_i \bar{f} = n \bar{y} \bar{f} = \sum_{i} \bar{y} \bar{f}$, and similarly with $f_i \bar{y}$.
\end{proof}

\begin{restatable}[Cross-term with n-sample means]{lemma}{CrossTermExpectationGeneral}
	\label{lemma:gen-cross-expect}
	Let $Y$ and $F$ be random variables with bounded second-moments.
	Suppose that we observe $n$ IID samples $\{y_i, f_i\}_{i=1}^n$ from the joint distribution of $Y$ and $F$. Let $\bar{y}_n, \bar{f}_n$ denote the sample means of $Y, F$ on the observed n-sample. Furthermore, let $\lh$ be independent of $\{y_i, f_i\}_{i=1}^n$ and computed on an independent $n$-sample. Then we have,
	\begin{align*}
		\mathbb{E} \left[ \lh(\bar{y}_n - \theta)(\mu_f - \bar{f}_n)\right] & = -\frac{\sigma_{yf}^2}{n\sigma_f^2}
	\end{align*}
\end{restatable}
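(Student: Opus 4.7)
The plan is to exploit the independence of $\hat{\lambda}$ from the sample $\{y_i, f_i\}_{i=1}^n$, which allows the expectation to factor, and then to evaluate the two resulting factors separately using elementary moment calculations.

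First, by independence,
\begin{equation*}
\mathbb{E}\left[\hat{\lambda}(\bar{y}_n - \theta)(\mu_f - \bar{f}_n)\right] = \mathbb{E}[\hat{\lambda}]\cdot\mathbb{E}\left[(\bar{y}_n - \theta)(\mu_f - \bar{f}_n)\right].
\end{equation*}
For the first factor, $\hat{\lambda}$ has the form prescribed by \cref{def:split-ppi-inf-n}, namely $\hat{\lambda} = \hat{\sigma}_{fy}/\sigma_f^2$ with the empirical covariance computed on the other fold. Since the $(n-1)$-normalized sample covariance is an unbiased estimator of $\sigma_{fy}$, I get $\mathbb{E}[\hat{\lambda}] = \sigma_{yf}/\sigma_f^2$.

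For the second factor, I would note that both $\bar{y}_n - \theta$ and $\mu_f - \bar{f}_n$ have mean zero, so the expectation is simply $-\operatorname{Cov}(\bar{y}_n, \bar{f}_n) = -\sigma_{yf}/n$, via the standard IID identity for the covariance of sample means (cross-sample terms vanish by independence; the $n$ same-sample terms each contribute $\sigma_{yf}$ and are divided by $n^2$). Multiplying the two factors yields $-\sigma_{yf}^2/(n\sigma_f^2)$, as claimed.

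The argument is essentially routine, so there is no real obstacle. The conceptual point worth flagging is that this clean factorization is only available because of sample splitting: if $\hat{\lambda}$ were estimated on the same sample as $\bar{y}_n$ and $\bar{f}_n$, the cross term would entangle joint moments of $\hat{\sigma}_{fy}$, $\bar{y}_n$, and $\bar{f}_n$, which is precisely the complication that makes \nameref{def:single-sample-ppi-inf-N} considerably harder to analyze (cf.\ \cref{single-sample-theory}).
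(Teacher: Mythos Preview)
Your proposal is correct and follows essentially the same approach as the paper: factor via independence of $\hat\lambda$, use unbiasedness of the sample covariance for $\mathbb{E}[\hat\lambda]$, and compute the second factor as $-\operatorname{Cov}(\bar y_n,\bar f_n)=-\sigma_{yf}/n$. The only cosmetic difference is that the paper expands $\mathbb{E}[\bar y_n\bar f_n]$ term by term rather than invoking the covariance identity directly, but the argument is the same.
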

\begin{proof}
	\begin{align*}
		 & \mathbb{E} \left[ \lh(\bar{y}_n - \theta)(\mu_f - \bar{f}_n)\right]                                                                                                        \\
		 & = \mathbb{E}[\lh]\mathbb{E}[(\bar{y}_n - \theta)(\mu_f - \bar{f}_n)]                                                                     & \text{Independence of $\lh$}    \\
		 & = \mathbb{E}[\lh]\mathbb{E}\left[\bar{y}_n (\mu_f - \bar{f}_n) - \theta (\mu_f - \bar{f}_n)\right]                                                                         \\
		 & = \mathbb{E}[\lh]\mathbb{E}\left[\bar{y}_n (\mu_f - \bar{f}_n)\right] - \mathbb{E}[\lh]\mathbb{E}\left[\theta (\mu_f - \bar{f}_n)\right] & \text{Linearity of Expectation} \\
		 & = \mathbb{E}[\lh]\mathbb{E}\left[\bar{y}_n (\mu_f - \bar{f}_n)\right] - \mathbb{E}[\lh]\theta\mathbb{E}\left[ (\mu_f - \bar{f}_n)\right] & \text{$\theta$ is constant}     \\
		 & = \mathbb{E}[\lh]\mathbb{E}\left[\bar{y}_n (\mu_f - \bar{f}_n)\right] - \mathbb{E}[\lh]\theta(\mathbb{E}[\mu_f] - \mathbb{E}[\bar{f}_n]) & \text{Linearity of Expectation}
	\end{align*}
	Since $\mu_f = \mathbb{E}[\bar{f}_n]$, the second term vanishes. Further $\lh = \ch(Y,F) / \varf$. where $\ch(Y,F)$ is an unbiased estimate of the covariance between $Y,F$. Therefore, $\mathbb{E}[\lh] = \frac{\sigma_{fy}}{\varf}$. Plugging these in,
	\begin{align*}
		\mathbb{E} \left[ \lh(\bar{y}_n - \theta)(\mu_f - \bar{f}_n)\right] & = \frac{\sigma_{fy}}{\varf}\mathbb{E}\left[\bar{y}_n (\mu_f - \bar{f}_n)\right]                                                        \\
		                                                                    & = \frac{\sigma_{fy}}{\varf}\left[\mathbb{E}[\bar{y}_n\mu_f] - \mathbb{E}[\bar{y}_n\bar{f}_n] \right] & \text{Linearity of Expectation}
	\end{align*}
	Note that $\E[\bar{y}_n \mu_f] = \mu_y \mu_f$, and $\E[\bar{y}_n \bar{f}_n]$ can be written as
	\begin{align*}
		\mathbb{E}\left[\bar{y}_n\bar{f}_n\right]
		 & = \frac{1}{n^2} \E\left[\sum_{i} \sum_{j} y_i f_j\right]                             \\
		 & = \frac{1}{n^2} \E\left[\sum_{i = j} y_i f_j + \sum_{i \neq j} y_i f_j\right]        \\
		 & = \frac{1}{n^2} (n \E[y_i f_j] + n(n-1) \E[y_i f_j])                                 \\
		 & = \frac{\sigma_{yf}}{n} + \frac{n}{n^2} \mu_f \mu_y  + \frac{n(n-1)}{n^2}\mu_f \mu_y \\
		 & = \frac{\sigma_{fy}}{n} + \mu_f\mu_y
	\end{align*}
	Where we have used the fact that in the expansion of $\bar{y} \bar{f}$, there are $n$ terms where $i=j$, such that $E[y_if_i] = \sigma_{fy} + \mu_f\mu_y$ by definition of covariance, and there are $n(n-1)$ terms where $i \neq j$, such that $E[y_if_j] = \mu_f\mu_y$, since $y_i \perp f_j$.  Putting these together, we have
	\begin{equation*}
		\mathbb{E}[\bar{y}_n\mu_f] - \mathbb{E}[\bar{y}_n\bar{f}_n] = \mu_y\mu_f - \frac{\sigma_{fy}}{n} - \mu_f\mu_y = - \frac{\sigma_{fy}}{n}
	\end{equation*}
	Therefore,
	\begin{equation*}
		\mathbb{E} \left[ \lh(\bar{y}_n - \theta)(\mu_f - \bar{f}_n)\right] = \frac{\sigma_{fy}}{\varf}\left[\mathbb{E}[\bar{y}_n\mu_f] - \mathbb{E}[\bar{y}_n\bar{f}_n] \right] = \frac{\sigma_{fy}}{\varf}[-\frac{\sigma_{fy}}{n}] = -\frac{\sigma_{fy}^2}{n\sigma_f^2}
	\end{equation*}
	which completes the proof. 
    
    \textbf{Note:} if the sample means where computed using $n' = c \cdot n$ samples, then we would simply get: 
    \begin{align}
            \mathbb{E} \left[ \lh(\bar{y}_{n'} - \theta)(\mu_f - \bar{f}_{n'})\right] = -\frac{\sigma_{fy}^2}{n'\sigma_f^2} = -\frac{\sigma_{fy}^2}{cn\sigma_f^2}
    \end{align}
\end{proof}

\begin{lemma}\label{lemma:covariance_product_1}

    Let $\bar{f}, \bar{y}$ denote sample means computed on $n$ IID samples.
    
	We have it that
	\begin{align*}
		\E[\hat{\sigma}_{yf} \bar{f}] & = \frac{1}{n} \left(\sigma_{yf^2} + (n-2) \sigma_{yf} \mu_f \right)  \\
		\E[\hat{\sigma}_{yf} \bar{y}] & = \frac{1}{n} \left(\sigma_{y^2 f} + (n-2) \sigma_{yf} \mu_y \right)
	\end{align*}
\end{lemma}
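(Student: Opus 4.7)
The plan is to use~\Cref{lem:rewrite_covariance} to rewrite $\hat{\sigma}_{yf} = \frac{1}{n-1}\sum_{i=1}^n (y_i f_i - \bar{y}\bar{f})$, and then expand $\E[\hat{\sigma}_{yf}\bar{f}]$ by linearity, exchangeability, and case analysis on index coincidences (following~\Cref{remark:notation_ijk}). By exchangeability, each term $\E[y_i f_i \bar{f}]$ equals the same value, so
\begin{equation*}
    \E[\hat{\sigma}_{yf}\bar{f}] = \frac{n}{n-1}\E[y_1 f_1 \bar{f}] - \frac{n}{n-1}\E[\bar{y}\bar{f}^2].
\end{equation*}

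Next, I would compute each piece separately. For $\E[y_1 f_1 \bar{f}] = \frac{1}{n}\sum_j \E[y_1 f_1 f_j]$, I would split into the $j=1$ case, which contributes $\E[YF^2] = \sigma_{yf^2} + \mu_y(\sigma_f^2 + \mu_f^2)$, and the $n-1$ terms with $j \ne 1$, each contributing $\E[YF]\mu_f = (\sigma_{yf} + \mu_y\mu_f)\mu_f$ by independence across samples. For $\E[\bar{y}\bar{f}^2] = \frac{1}{n^3}\sum_{i,j,k}\E[y_i f_j f_k]$, I would partition the triple sum into four patterns: (i) $i=j=k$, (ii) $i=j\ne k$ together with $i=k\ne j$, (iii) $j=k\ne i$, and (iv) all three distinct. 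Each pattern's expectation is a simple product of moments by independence, and the index counts are $n$, $2n(n-1)$, $n(n-1)$, and $n(n-1)(n-2)$.

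Once both expectations are in hand, the final step is to collect coefficients of $\sigma_{yf^2}$, $\mu_y\sigma_f^2$, $\mu_y\mu_f^2$, and $\sigma_{yf}\mu_f$, multiplied by the prefactor $n/(n-1)$. I expect the $\mu_y\sigma_f^2$ and $\mu_y\mu_f^2$ contributions to cancel exactly, leaving only $\frac{1}{n}\sigma_{yf^2}$ and $\frac{n-2}{n}\sigma_{yf}\mu_f$, which yields the claimed formula. The second identity then follows immediately by symmetry: swapping the roles of $F$ and $Y$ throughout (noting that $\hat{\sigma}_{yf} = \hat{\sigma}_{fy}$) gives the expression for $\E[\hat{\sigma}_{yf}\bar{y}]$.

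The main obstacle is the careful bookkeeping for $\E[\bar{y}\bar{f}^2]$: with three indices ranging over $\{1,\dots,n\}$, it is easy to miscount the number of triples in each coincidence pattern or to mislabel which moments (e.g., $\sigma_{yf^2}$ vs.\ $\sigma_{y^2 f}$) appear in each. Verifying that the unwanted cross-terms cancel will be the main arithmetic check; otherwise the argument is a routine application of independence across IID samples.
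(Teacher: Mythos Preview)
Your proposal is correct and follows essentially the same approach as the paper: both start from \Cref{lem:rewrite_covariance} and proceed by exchangeability and index case analysis. The only minor difference is organizational --- the paper first simplifies to $\E[\hat\sigma_{yf}\bar f]=\E[y_if_i\bar f]-\E[f_jy_k\bar f]$ (for $j\neq k$) before expanding $\bar f$, thereby avoiding your four-case triple sum for $\E[\bar y\bar f^{\,2}]$, but the arithmetic is identical.
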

\begin{proof}
	We demonstrate for $\E[\hat{\sigma}_{yf} \bar{f}]$, and the case with $\bar{y}$ follows by symmetry.
	\begin{equation*}
		\E[\ch(y, f)\bar{f}] = \E\left[\frac{1}{n-1} \sum_{i=1}^{n} (y_i - \bar{y})(f_i - \bar{f}) \bar{f}\right]  = \frac{1}{n-1} \sum_{i=1}^{n} \E[(y_i f_i - \bar{f} \bar{y}))\bar{f}]
	\end{equation*}
	where have we used~\cref{lem:rewrite_covariance} and linearity of expectation.  We can then use the fact that
	\begin{equation*}
		\bar{f} \bar{y} = \frac{1}{n^2} \sum_{j} \sum_{k} f_j y_k =  \frac{1}{n^2} \left(\sum_{j = k} f_j y_k + \sum_{j \neq k} f_j y_k\right),
	\end{equation*}
	to obtain
	\begin{align}
		\E[\ch(y, f)\bar{f}]
		 & = \frac{1}{n-1} \sum_{i=1}^{n} \E[(y_i f_i - \bar{f} \bar{y}))\bar{f}]                                                                          \nonumber \\
		 & =  \frac{1}{n-1} \sum_{i=1}^{n} \E\left[\frac{1}{n^2} \left(n^2 (y_i f_i) - \sum_{j = k} f_j y_k - \sum_{j \neq k} f_j y_k\right)\bar{f}\right] \nonumber \\
		 & =  \frac{1}{n-1} \sum_{i=1}^{n} \left(\frac{n(n-1)}{n^2} \E[y_i f_i \bar{f}] - \frac{n(n-1)}{n^2}\E[f_j y_k \bar{f}]\right)                     \nonumber \\
		 & =  \frac{n}{n-1} \left(\frac{n(n-1)}{n^2} \E[y_i f_i \bar{f}] - \frac{n(n-1)}{n^2}\E[f_j y_k \bar{f}]\right)                                    \nonumber \\
		 & = \left(\E[y_i f_i \bar{f}] - \E[f_j y_k \bar{f}]\right) \label{eq:lem_cov_prod_eq1}
	\end{align}
	We then consider each term in~\cref{eq:lem_cov_prod_eq1} separately
	\begin{align*}
		\E[y_i f_i \bar{f}] & = \frac{1}{n} \left(\E[y_i f_i f_i] + \sum_{j \neq i} \E[y_i f_i f_j]\right)                          &  & = \frac{1}{n} \E[y_i f_i f_i]  + \frac{n-1}{n} \E[y_i f_i f_j]                       \\
		\E[f_j y_k \bar{f}] & = \frac{1}{n} \left(\E[f_j y_k f_j] + \E[f_j y_k f_k] + \sum_{i \neq j \neq k} \E[f_j y_k f_i]\right) &  & = \frac{1}{n} \left(\E[f_j y_k f_j] + \E[f_j y_k f_k] + (n-2) \E[f_j y_k f_i]\right)
	\end{align*}
	where we use the subscripts $i, j, k$ as described in~\cref{remark:notation_ijk}.  We then rewrite~\cref{eq:lem_cov_prod_eq1} as
	\begin{align*}
		\E[y_i f_i \bar{f}] - \E[f_j y_k \bar{f}] & = \frac{1}{n} \left(\E[y f^2]  + (n-1) \E[yf]\E[f] - \E[y]\E[f^2] - \E[f]\E[yf] - (n-2) \E[f]^2 \E[y]\right) \\
		                                          & = \frac{1}{n} \left(\sigma_{yf^2} + (n-1) \E[yf]\E[f] - \E[f]\E[yf] - (n-2) \E[f]^2 \E[y]\right)             \\
		                                          & = \frac{1}{n} \left(\sigma_{yf^2} + (n-2) \E[yf]\E[f] - (n-2) \E[f]^2 \E[y]\right)                           \\
		                                          & = \frac{1}{n} \left(\sigma_{yf^2} + (n-2) \sigma_{yf} \mu_f \right)
	\end{align*}
	which gives the desired result.
\end{proof}

While the following result is of interest in it's own right (and makes up substantially the entire proof of~\cref{prop:bias_of_single_sample}), we give it here as a Lemma, as we also make use of it in the proof of~\cref{theorem:cross-ppi-general-mse}.
\begin{restatable}{lemma}{BiasPPISingle}\label{lemma:bias_ppi_single}
	Given $\hat{\lambda}$ and $\bar{f}$ estimated on the same sample, we have it that
	\begin{equation*}
		\E[(\hat{\lambda} (\bar{f} - \muf))] = \frac{1}{n \varf} \left(\sigma_{yf^2} - 2 \sigma_{yf} \mu_f \right) \nonumber
	\end{equation*}
\end{restatable}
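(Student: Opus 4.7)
The plan is to compute the expectation directly by substituting the definition of $\hat{\lambda} = \hat{\sigma}_{yf}/\sigma_f^2$ and leveraging the already-established identity in~\cref{lemma:covariance_product_1}. The factor $\sigma_f^2$ is a deterministic constant (we are in the infinite-$N$ regime where $\sigma_f^2$ is known), so it comes outside the expectation, leaving me to evaluate $\E[\hat{\sigma}_{yf}(\bar{f} - \mu_f)] = \E[\hat{\sigma}_{yf}\bar{f}] - \mu_f\,\E[\hat{\sigma}_{yf}]$ by linearity.

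For the first term I would invoke~\cref{lemma:covariance_product_1}, which gives $\E[\hat{\sigma}_{yf}\bar{f}] = \frac{1}{n}\bigl(\sigma_{yf^2} + (n-2)\sigma_{yf}\mu_f\bigr)$. For the second, I use the standard fact that the sample covariance (with the $1/(n-1)$ normalization adopted in the paper's notation) is unbiased, so $\E[\hat{\sigma}_{yf}] = \sigma_{yf}$, hence $\mu_f\,\E[\hat{\sigma}_{yf}] = \mu_f\sigma_{yf}$.

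Putting these together and combining over the common denominator $n$:
\begin{equation*}
\E[\hat{\sigma}_{yf}(\bar{f} - \mu_f)] = \frac{1}{n}\bigl(\sigma_{yf^2} + (n-2)\sigma_{yf}\mu_f\bigr) - \mu_f\sigma_{yf} = \frac{1}{n}\bigl(\sigma_{yf^2} - 2\sigma_{yf}\mu_f\bigr),
\end{equation*}
and dividing by $\sigma_f^2$ yields the claimed expression. There is essentially no obstacle here beyond careful bookkeeping: all the heavy lifting lives in~\cref{lemma:covariance_product_1}, whose derivation handles the combinatorics of splitting $\bar{f}\bar{y}$ into diagonal ($j=k$) and off-diagonal ($j\neq k$) contributions. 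Since $\hat{\lambda}$ and $\bar{f}$ are estimated on the \emph{same} sample, no independence argument can be used to decouple them — which is precisely why the result is nonzero and gives rise to the finite-sample bias of~\nameref{def:single-sample-ppi-inf-N} asserted in~\cref{prop:bias_of_single_sample}.
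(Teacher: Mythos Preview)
Your proposal is correct and follows essentially the same approach as the paper: factor out $1/\sigma_f^2$, split $\E[\hat{\sigma}_{yf}(\bar{f}-\mu_f)]$ by linearity into $\E[\hat{\sigma}_{yf}\bar{f}]-\mu_f\E[\hat{\sigma}_{yf}]$, apply \cref{lemma:covariance_product_1} to the first term and unbiasedness of the sample covariance to the second, then simplify. The paper's proof is identical in structure and content.
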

\begin{proof}
	\begin{align}
		\E[(\hat{\lambda} (\bar{f} - \muf))]
		 & = \frac{1}{\varf} \E[\ch(y, f)(\bar{f} - \muf)]\nonumber                                                                           \\
		 & = \frac{1}{\varf} \E[\ch(y, f)\bar{f}] - \frac{\sigma_{yf} \muf}{\varf}  \nonumber                                                 \\
		 & = \frac{1}{n \varf} \left(\sigma_{yf^2} + (n-2) \sigma_{yf} \mu_f \right) - \frac{\sigma_{yf} \muf}{\varf} \label{eq:use_cov_prod} \\
		 & = \frac{1}{n \varf} \left(\sigma_{yf^2} - 2 \sigma_{yf} \mu_f \right) \nonumber
	\end{align}
	Where~\cref{eq:use_cov_prod} follows from~\cref{lemma:covariance_product_1}.
\end{proof}

A well-known result in statistics is that for Gaussian data, the sample mean and the sample covariance matrix are independent. Here we give the formal statement and include the proof.
\begin{restatable}[Independence of sample mean and covariance, Gaussian]{lemma}{GaussianIndependence}\label{lemma:gaussian-indpt}
	Given I.I.D. random sample $X_1,...,X_n$ from a multivariate Gaussian distribution $\mathcal{N}(\mu, \Sigma)$, then the sample mean $\bar{X} = \frac{\sum_{i=1}^n X_i}{n}$ and the sample covariance matrix $S = \frac{1}{n-1}\sum_{i=1}^n (X_i - \bar{X})(X_i - \bar{X})^T$ are independent.
\end{restatable}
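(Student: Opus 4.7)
The plan is to use the standard fact that, for jointly Gaussian random vectors, zero cross-covariance implies independence, applied to $\bar{X}$ and the vector of deviations $D \coloneqq (X_1 - \bar{X}, \ldots, X_n - \bar{X})$. Since $S = \frac{1}{n-1}\sum_i (X_i - \bar{X})(X_i - \bar{X})^T$ is a measurable (in fact polynomial) function of $D$ alone, the desired conclusion $\bar{X} \perp S$ will follow immediately from $\bar{X} \perp D$.

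First, I would stack the observations into a single vector $\mathbf{X} = (X_1^T, \ldots, X_n^T)^T \in \mathbb{R}^{np}$, which is Gaussian because the $X_i$ are independent Gaussians. The concatenation $(\bar{X}, D)$ is the image of $\mathbf{X}$ under a single linear map (with an explicit block structure coming from $\bar{X} = \frac{1}{n}\sum_i X_i$ and $X_i - \bar{X} = X_i - \frac{1}{n}\sum_j X_j$). Since linear transformations of Gaussian vectors are again Gaussian, the pair $(\bar{X}, D)$ is \emph{jointly} Gaussian. This is the step I want to emphasize: merely noting that $\bar{X}$ and each $X_i - \bar{X}$ are marginally Gaussian would not suffice for the uncorrelated-implies-independent argument; the simultaneous linear-map construction is what secures joint Gaussianity.

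Second, I would compute the cross-covariance. By bilinearity, $\mathrm{Cov}(\bar{X}, X_i) = \frac{1}{n}\sum_j \mathrm{Cov}(X_j, X_i) = \frac{1}{n}\Sigma$, and $\mathrm{Var}(\bar{X}) = \frac{1}{n}\Sigma$. Hence $\mathrm{Cov}(\bar{X}, X_i - \bar{X}) = \frac{1}{n}\Sigma - \frac{1}{n}\Sigma = 0$ for every $i \in \{1,\ldots,n\}$, so $\bar{X}$ is uncorrelated with every component of $D$.

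Finally, invoking the Gaussian fact (uncorrelated jointly Gaussian vectors are independent) gives $\bar{X} \perp D$, and the elementary fact that measurable functions of independent random vectors remain independent gives $\bar{X} \perp S$. There is no real obstacle here: the result is textbook, and the only thing to be careful about is the \emph{jointly Gaussian} qualifier flagged above, which is why I would spell out the linear-map step rather than invoking marginal Gaussianity alone.
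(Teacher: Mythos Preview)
Your proposal is correct and takes essentially the same approach as the paper: show that the deviations $D_i = X_i - \bar{X}$ are (jointly) Gaussian with $\bar{X}$, verify $\mathrm{Cov}(\bar{X}, D_i) = 0$, and conclude $\bar{X} \perp D$ hence $\bar{X} \perp S$. In fact you are slightly more careful than the paper, which argues pairwise that each $(D_i, \bar{X})$ is Gaussian and uncorrelated, whereas you correctly insist on the full joint Gaussianity of $(\bar{X}, D)$ before concluding $\bar{X} \perp D$.
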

\begin{proof}
	Let $D_i:= X_i - \bar X$. Since $D_i$ is a linear combination of independent random normal vectors,  $D_i$ also follows a multivariate normal distribution.

	Now let's consider the covariance between $D_i$ and $\bar X$.
	\begin{align*}
		\text{Cov}(D_i, \bar X) & = \text{Cov}(X_i - \bar X , \bar X)
		\\ &= \text{Cov}(X_i, \bar X) - \text{Cov}(\bar X, \bar X)
		\\ &= \text{Cov}(X_i, \frac{1}{n}\sum_{j=1}^n X_j) - \text{Cov}(\frac{1}{n}\sum_{j=1}^n X_j, \frac{1}{n}\sum_{j=1}^n X_j)
		\\& = \text{Cov}(X_i, \frac{1}{n}X_i) - \frac{1}{n^2}\sum_{j=1}^n\text{Cov}(X_j, X_j) \quad\quad X_j \perp X_i, \forall i\ne j
		\\& = \frac{1}{n}\Sigma - \frac{1}{n}\Sigma
		\\& = 0
	\end{align*}
	Since $D_i, \bar X$ are Gaussian, $D_i, \bar X$ are independent, $\forall i$. Therefore $S=\sum_{i=1}^n D_iD_i^T$ and $\bar X$ are independent.
\end{proof}

    \subsection{\textbf{\nameref{def:cross-ppi-inf-N}} is unbiased}
    \label{sec:proof-crossfit-unbiased}
    While fairly simple to observe, we provide a proof for the unbiasedness of the \nameref{def:cross-ppi-inf-N} estimator. This proof is intended to provide an intuition for the next section where we consider the bias of \nameref{def:single-sample-ppi-inf-N}. We include this unbiasedness as a proposition

\begin{proposition}

Consider the \nameref{def:cross-ppi-inf-N} as defined in \cref{def:cross-ppi-inf-N}. The expected value of this estimator is equal to the mean $E[Y]$. That is,

\begin{align}
    \E[\thetah_{\text{CF-PPI++}}] = \theta
\end{align}
where $\theta = \E[Y]$
\end{proposition}\label{prop:unbiased_crossfit}

\begin{proof}
    \begin{align*}
        \thetah_{\text{CF-PPI++}} = \frac{\hat{\theta}_1 + \hat{\theta}_2}{2}
    \end{align*}
    where $\hat{\theta}_k$, as defined in \cref{def:cross-ppi-inf-N}, are the estimates formed on each fold of the data using the other fold for computing $\lh$.

    Therefore,
    \begin{align*}
        \E[\thetah_{\text{CF-PPI++}}] = \E\left[\frac{\hat{\theta}_1 + \hat{\theta}_2}{2} \right] = \frac{\E\left[\hat{\theta}_1 + \hat{\theta}_2\right]}{2} = \E[\hat{\theta}_{k}]
    \end{align*}
    since $\E[\hat{\theta}_1] = \E[\hat{\theta}_2]$ which we denote $\E[\hat{\theta}_{k}]$.

    Now,
    \begin{align*}
        \hat{\theta}_k = \bar{y}_n + \lh(\muf - \bar{f}_n)
    \end{align*}

    Then, 
    \begin{align*}
        \E[\hat{\theta}_{k}] = \E[\bar{y}_n] + \E[\lh] \times (\muf - \E[\bar{f}_n])
    \end{align*}

    where we have used that $\lh$ is independent of $\bar{f}_n$, combined with the linearity of expectations.

    Now, since $\bar{y}_n = \E[Y] = \theta$ and $\bar{f}_n = \E[F] = \muf$, we have 
    
    \begin{align*}
        \E[\hat{\theta}_{k}] = \theta + \E[\lh] \times (\muf - \muf) = \theta
     \end{align*}
\end{proof}

The key to the unbiasedness of \nameref{def:cross-ppi-inf-N} is the independence of $\lh$ from the data used to compute sample means $\bar{f}_n$. Since this independence is not applicable to \nameref{def:single-sample-ppi-inf-N}, we cannot factor $\lh$ into its own expectation leading to a finite-sample bias as we shall prove next.
    
    \subsection{Proof of \cref{prop:bias_of_single_sample}: Bias of \textbf{\nameref{def:single-sample-ppi-inf-N}}}
    \label{sec:proof-single-sample-bias}
    \BiasSingleSample*

\begin{proof}
    The \nameref{def:single-sample-ppi-inf-N} as defined in \cref{def:single-sample-ppi-inf-N} is
	\begin{align*}
		\thetah_{\text{Single-PPI++}} & = \frac{1}{n}\sum_{i \in \cD_n} y_i + \lh\left(\mu_f - \frac{1}{n}\sum_{i \in \cD_n}f_i\right) \\ \lh & = \frac{\ch(Y,F; \cD_n)}{\sigma_f^2}
	\end{align*}

    We have,
    \begin{align*}
        \E[\thetah_{\text{Single-PPI++}}] = \frac{1}{n}\E\left[\sum_{i \in \cD_n} y_i\right] + \E\left[\lh\left(\mu_f - \frac{1}{n}\sum_{i \in \cD_n}f_i\right)\right]
    \end{align*}
    where we have used the linearity of expectations. Applying this linearity into the sum over $y_i$, we obtain
    
    \begin{align*}
        \E[\thetah_{\text{Single-PPI++}}] = \frac{1}{n}\sum_{i \in \cD_n} \E[y_i] + \E\left[\lh\left(\mu_f - \frac{1}{n}\sum_{i \in \cD_n}f_i\right)\right]
    \end{align*}

    Since, $E[Y_i] = E[Y] = \theta$, we get
    \begin{align*}
        \E[\thetah_{\text{Single-PPI++}}] = \theta + \E\left[\lh\left(\mu_f - \frac{1}{n}\sum_{i \in \cD_n}f_i\right)\right]
    \end{align*}

    Thus the bias, becomes,
    \begin{align*}
        \E[\thetah_{\text{Single-PPI++}} - \theta] = \E\left[\lh\left(\mu_f - \bar{f}_n\right)\right]
    \end{align*}

    In the case of \nameref{def:cross-ppi-inf-N}, at this stage, we were able to separate $\lh$ from the difference in expectation of mean and sample mean, reducing the bias to zero. Here, $\lh$ and $\bar{f}_n$ are estimated on the sample making them dependent. However, from \cref{lemma:bias_ppi_single} we have,

    \begin{align*}
        \E\left[\lh\left(\mu_f - \frac{1}{n}\sum_{i \in \cD_n}f_i\right)\right] = \frac{1}{n \varf} \left(2 \sigma_{yf} \mu_f - \sigma_{yf^2}\right)
    \end{align*}

    which completes the proof.

\end{proof}
    
    \subsection{\textbf{\cref{theorem:cross-ppi-general-mse}}: Finite-Sample Variance of
    	\nameref{def:cross-ppi-inf-N} Estimator}
    \label{sec:proof-general-mse-crossfit}
    \CrossfitGeneralMSE*

\begin{proof}

	The structure of the proof is as follows. We first derive the variance in terms of the variance of the PPI++ estimate formed from a single-fold. Then, we derive the variance of a single-fold. Putting the two together completes the proof of \cref{theorem:cross-ppi-general-mse}.

	\subsubsection{Variance in terms of variance of single-fold estimate $\hat{\theta}_k$}

	Let $\cD_1, \cD_2$ denote our two dataset splits.  Then we have that $\hat{\theta}_1$ is derived using $\cD_1$ for $\hat{\lambda}$ and $\cD_2$ for the point estimate, and vice versa for $\hat{\theta}_2$.  The variance of the estimate is given by,
	\begin{align}
		\frac{\hat{\theta}_1 + \hat{\theta}_2}{2} - \theta
		 & = \frac{\hat{\theta}_1 - \theta + \hat{\theta}_2 - \theta}{2}\nonumber                                                                                   \\
		\E\left[\left(\frac{\hat{\theta}_1 + \hat{\theta}_2}{2} - \theta\right)^2\right]
		 & = \frac{1}{4} \E[(\hat{\theta}_1 - \theta + \hat{\theta}_2 - \theta)^2]\nonumber                                                                         \\
		 & = \frac{1}{4} \E\left[(\hat{\theta}_1 - \theta)^2 + 2 (\hat{\theta}_1 - \theta)(\hat{\theta}_2 - \theta) +  (\hat{\theta}_2 - \theta)^2\right] \nonumber \\
		 & = \frac{1}{4} \left(\Var(\hat{\theta}_1) + \Var(\hat{\theta}_2) + 2 E\left[(\hat{\theta}_1 - \theta)(\hat{\theta}_2 - \theta)\right]\right)\nonumber     \\
		 & = \frac{1}{2} \Var(\hat{\theta}_1) + \frac{1}{2} \E\left[(\hat{\theta}_1 - \theta)(\hat{\theta}_2 - \theta)\right] \label{eq:mse_cross_fit_main}
	\end{align}

	where we have used $\E[\hat{\theta}_\text{CF-PPI++}] = \theta$ in writing the variance definition on the left hand side.

	The first term is intuitive: The cross-fit estimator uses twice the samples of the estimate on just each fold, and so it has half the variance.  The second term arises due to correlations in the errors of each cross-fit term. To analyze this term, we use subscripts to denote different datasets that serve as the source of different terms, e.g., $\bar{y}_1$ is the average of $y$ in the first split, and $\bar{y}_2$ is the average in the second split.  Then the cross term is given by
	\begin{align}
		\E[(\hat{\theta}_1 - \theta)(\hat{\theta}_2 - \theta)]
		 & = \E[((\bar{y}_2 - \theta) + \hat{\lambda}_1 (\bar{f}_2 - \muf))((\bar{y}_1 - \theta) + \hat{\lambda}_2 (\bar{f}_1 - \muf))]\nonumber                                                         \\
		 & = \E[(\bar{y}_2 - \theta)(\bar{y}_1 - \theta)]\nonumber                                                                                                                                       \\
		 & \quad + \E[(\bar{y}_2 - \theta)(\hat{\lambda}_2(\bar{f}_1 - \muf))]\nonumber                                                                                                                  \\
		 & \quad + \E[(\bar{y}_1 - \theta)(\hat{\lambda}_1(\bar{f}_2 - \muf))]\nonumber                                                                                                                  \\
		 & \quad + \E[(\hat{\lambda}_1 (\bar{f}_2 - \muf))(\hat{\lambda}_2 (\bar{f}_1 - \muf))]\nonumber                                                                                                 \\
		 & = \E[(\bar{y}_2 - \theta)]\E[(\bar{y}_1 - \theta)]                                                                                    & \bar{y}_1 \indep \bar{y}_2\nonumber                   \\
		 & \quad + \E[\hat{\lambda}_2 (\bar{y}_2 - \theta)]\E[\bar{f}_1 - \muf]                                                                  & \bar{y}_2, \hat{\lambda}_2 \indep \bar{f}_1 \nonumber \\
		 & \quad + \E[\hat{\lambda}_1 (\bar{y}_1 - \theta)]\E[\bar{f}_2 - \muf]                                                                  & \bar{y}_1, \hat{\lambda}_1 \indep \bar{f}_2 \nonumber \\
		 & \quad + \E[(\hat{\lambda}_1 (\bar{f}_1 - \muf))]\E[(\hat{\lambda}_2 (\bar{f}_2 - \muf))] \label{eq:cross_term_last_term}
	\end{align}
	Note that the first three terms are zero, since they include mean-zero terms (e.g., $\E[\bar{y} - \theta] = 0, \E[\bar{f} - \muf] = 0$, etc).  The fourth term is given by~\cref{lemma:bias_ppi_single}, which we recall below:
	\BiasPPISingle*

	However, the above lemma assumes the sample means are computed on an $n$-sample. In the case of \nameref{def:cross-ppi-inf-N}, the sample means are computed on $n' = n/2$ samples. Therefore,

	\begin{align*}
		\E[(\hat{\lambda} (\bar{f} - \muf))] = \frac{1}{(n/2) \varf} \left(\sigma_{yf^2} - 2 \sigma_{yf} \mu_f \right) = \frac{2}{n \varf} \left(\sigma_{yf^2} - 2 \sigma_{yf} \mu_f \right)
	\end{align*}

	Plugging this expression back into~\cref{eq:cross_term_last_term}, we get the value of the cross term $\E[(\hat{\theta}_1 - \theta)(\hat{\theta}_2 - \theta)]$.
	\begin{align*}
		\E[(\hat{\theta}_1 - \theta)(\hat{\theta}_2 - \theta)] & = \E[(\hat{\lambda}_1 (\bar{f}_1 - \muf))]\E[(\hat{\lambda}_2 (\bar{f}_2 - \muf))] & \text{From~\cref{eq:cross_term_last_term}} \\
		                                                       & = \frac{4}{n^2 \sigma_{f}^4} \left(\sigma_{yf^2} -  2 \sigma_{yf} \muf \right)^2
	\end{align*}
	Plugging this expression back into~\cref{eq:mse_cross_fit_main} gives us the overall MSE of the cross-fit estimator.
	\begin{align*}
		\E\left[\left(\frac{\hat{\theta}_1 + \hat{\theta}_2}{2} - \theta\right)^2\right]
		 & = \frac{1}{2} \Var(\hat{\theta}_1) + \frac{1}{2} \E\left[(\hat{\theta}_1 - \theta)(\hat{\theta}_2 - \theta)\right]               & \text{From~\cref{eq:mse_cross_fit_main}} \\
		 & = \frac{1}{2} \left(\Var(\hat{\theta}_1) + \frac{4}{n^2 \sigma_{f}^4} \left(\sigma_{yf^2} -  2 \sigma_{yf} \muf \right)^2\right)
	\end{align*}
\end{proof}

\subsubsection{Variance of a single-fold estimate $\hat{\theta}_1$}

Having expressed the variance in terms of the variance of a single-fold estimate, we now characterize the variance of this single-fold estimate.

Consider the estimate on a single-fold from \cref{def:cross-ppi-inf-N}, from which we have it that the variance can be decomposed as follows, (using $E[\thetah_k] = \theta$)

\begin{equation}\label{eq:ppi-split-mse}
	(\thetah_{k} - \theta) =  \underbrace{(\bar{y}_{n/2} - \theta)}_{A} + \lh \underbrace{\left(\muf - \frac{1}{n/2} \sum_{i \in \cD'_{n/2}} f_i \right)}_{D} = A + \lh D
\end{equation}
Such that the variance is given by
\begin{align*}
	\Var(\thetah_{k}) & = \mathbb{E}[(\thetah_k - \theta)^2]                                                                      \\
	                  & = \mathbb{E}[(A + \lh D)^2]                                          & \text{by \cref{eq:ppi-split-mse}}  \\
	                  & = \mathbb{E}[A^2 + 2A \lh D + {\lh}^{2} D^2]                                                              \\
	                  & = \mathbb{E}[A^2] + 2\mathbb{E}[A \lh D] + \mathbb{E}[{\lh}^{2} D^2] & \text{by Linearity of Expectation}
\end{align*}
We can observe that
\begin{equation*}
	\mathbb{E}[A^2] = \mathbb{E}[(\bar{y}_{n/2} - \theta)^2] = \frac{\sigma_y^2}{n/2} = \frac{2\sigma_y^2}{n}
\end{equation*}

Consider the second term:
\begin{align}
	2\mathbb{E}[A \lh D] = 2\mathbb{E}[\lh (\bar{y}_{n/2} - \theta)(\mu_f - \bar{f}_{n/2})]
\end{align}
Recall that in each split-estimate, $\lh$ is estimated on an independent split and is therefore independent of the other terms in the expectation. This allows us to apply \cref{lemma:gen-cross-expect} to obtain
\begin{align}
	2\mathbb{E}[A \lh D] = 2\mathbb{E}[\lh (\bar{y}_{n/2} - \theta)(\mu_f - \bar{f}_{n/2})] = -2\frac{\sigma_{fy}^2}{(n/2)\sigma_f^2} = -4\frac{\sigma_{fy}^2}{n\sigma_f^2}
\end{align}
where we have accordingly applied $n/2$ where the lemma uses $n$. It is also useful to note that while $\lh$ depends on the covariance estimate which also depends on the number of samples it is computed on, the expectation of the covariance estimate is independent of $n$ and is just $\sigma_{fy}$. Since only the expectation of $\lh$ appears in the lemma, there is no need to adjust additionally for this.

Now consider the third term: Since $\lh$ is independent of $D$, we have it that
\begin{align*}
	\mathbb{E}[{\lh}^{2} D^2] & = \mathbb{E}[\lh^2]\mathbb{E}[(\mu_f - \bar{f}_{n/2})^2]                                                                              \\
	                          & = \frac{\mathbb{E}[\hat{\sigma}_{fy}^2]}{\sigma_f^4} \cdot \frac{\sigma_{f}^2}{n/2} & \lh & \coloneqq \frac{\hat{\sigma}_{fy}}{\varf} \\
	                          & = \frac{2\mathbb{E}[\hat{\sigma}_{fy}^2]}{n \sigma_f^2}
\end{align*}
where we use the definition of $\hat{\lambda}$ and the fact that $\E[(\bar{f} - \mu_f)^2] = \sigma_f^2 / (n/2)$ by definition of $\bar{f}$.
Putting these together, we get
\begin{align*}
	\Var(\thetah_k) & = \frac{2\sigma_y^2}{n}  -4\frac{\sigma_{fy}^2}{n\sigma_f^2} +  \frac{2\mathbb{E}[\hat{\sigma}_{fy}^2]}{n \sigma_f^2}                                   \\
	                & = \frac{2\sigma_y^2}{n} -\frac{2\sigma_{fy}^2}{n\sigma_f^2} + \frac{2\mathbb{E}[\hat{\sigma}_{fy}^2]}{n\sigma_f^2} - \frac{2\sigma_{fy}^2}{n\sigma_f^2} \\
	                & = \frac{2\sigma_y^2}{n} -\frac{2\sigma_{fy}^2}{n\sigma_f^2} + \frac{2\mathbb{E}[\hat{\sigma}_{fy}^2 - \sigma_{fy}^2]}{n\sigma_f^2}                      \\
	                & = \frac{2\sigma_y^2}{n} -\frac{2\sigma_{fy}^2}{n\sigma_f^2} + \frac{2\mathbb{E}[(\hat{\sigma}_{fy} - \sigma_{fy})^2]}{n\sigma_f^2}
\end{align*}
where the last line follows from the fact that $\E[(\hat{\sigma}_{fy} - \sigma_{fy})^2] = \E[\hat{\sigma}_{fy}^2 - 2\hat{\sigma}_{fy} \sigma_{fy} + \sigma_{fy}^2] = \E[\hat{\sigma}_{fy}^2 - \sigma_{fy}^2]$, from the fact that $\E[\hat{\sigma}_{fy}] = \sigma_{fy}$.

\subsubsection{Finite-Sample Variance of \nameref{def:cross-ppi-inf-N}}

Having characterized the variance of a single-fold estimate, we can plug this into the overall variance of the \nameref{def:cross-ppi-inf-N} to complete the proof as follows:

\begin{align*}
	\Var(\thetah_{\text{CF-PPI++}}) = \E\left[\left(\frac{\hat{\theta}_1 + \hat{\theta}_2}{2} - \theta\right)^2\right]
	 & = \frac{1}{2} \left(\Var(\hat{\theta}_1) + \frac{4}{n^2 \sigma_{f}^4} \left(\sigma_{yf^2} -  2 \sigma_{yf} \muf \right)^2\right)
\end{align*}

\begin{align*}
	\Var(\thetah_{\text{CF-PPI++}}) & = \frac{1}{2} \left( \frac{2\sigma_y^2}{n} -\frac{2\sigma_{fy}^2}{n\sigma_f^2} + \frac{2\mathbb{E}[(\hat{\sigma}_{fy} - \sigma_{fy})^2]}{n\sigma_f^2} + \frac{4}{n^2 \sigma_{f}^4} \left(\sigma_{yf^2} -  2 \sigma_{yf} \muf \right)^2\right) \\
	                                & =  \frac{\sigma_y^2}{n} - \frac{\sigma_{fy}^2}{n\sigma_f^2} + \frac{\mathbb{E}[(\hat{\sigma}_{fy} - \sigma_{fy})^2]}{n\sigma_f^2} + \frac{2}{n^2 \sigma_{f}^4} \left(\sigma_{yf^2} -  2 \sigma_{yf} \muf \right)^2
\end{align*}

\subsection{Proof of \nameref{corollary:performant-ppi}: Condition for Worse Performance of \nameref{def:cross-ppi-inf-N}}
\label{proof-perfppi}

With the above theorem, it is easy to prove \cref{corollary:performant-ppi} which we restate here.

\begin{corollary}
	$\Var(\thetah_{\text{CF-PPI++}}) > \sigma_y^2/n$ if $\mathbb{E}[(\hat{\sigma}_{fy} - \sigma_{fy})^2] > \sigma_{fy}^2$ i.e., if the mean-squared error in estimating the covariance $\sigma_{fy}$ is higher than the squared covariance $\sigma_{yf}^2$.
\end{corollary}

From \cref{theorem:cross-ppi-general-mse}, we have
\begin{align*}
	\Var(\thetah_{\text{CF-PPI++}}) = \frac{\sigma_y^2}{n} - \frac{\sigma_{fy}^2}{n\sigma_f^2} + \frac{\mathbb{E}[(\hat{\sigma}_{fy} - \sigma_{fy})^2]}{n\sigma_f^2} + \frac{2}{n^2 \sigma_{f}^4} \left(\sigma_{yf^2} -  2 \sigma_{yf} \muf \right)^2
\end{align*}

Further, since $\frac{2}{n^2 \sigma_{f}^4} \left(\sigma_{yf^2} -  2 \sigma_{yf} \muf \right)^2 \geq 0$, we have

\begin{align*}
	\Var(\thetah_{\text{CF-PPI++}}) - \frac{\sigma_y^2}{n} \geq \frac{\mathbb{E}[(\hat{\sigma}_{fy} - \sigma_{fy})^2]}{n\sigma_f^2} - \frac{\sigma_{fy}^2}{n\sigma_f^2}
\end{align*}

Therefore if $\frac{\mathbb{E}[(\hat{\sigma}_{fy} - \sigma_{fy})^2]}{n\sigma_f^2} > \frac{\sigma_{fy}^2}{n\sigma_f^2}$, we get $\Var(\thetah_{\text{CF-PPI++}}) >  \frac{\sigma_y^2}{n}$

    \subsection{\textbf{\cref{theorem:cross-ppi-indpt-mse}}: Variance of \nameref{def:cross-ppi-inf-N} with Independent Pseudo-Labels}
    \label{sec:proof-indpt-cov-ee}
    Motivated by the desire to build intuition, we first consider the case when $Y,F$ are independent, and present a proof in that scenario.
Using~\cref{lemma:indp-covariance_estimation} in combination with~\cref{theorem:cross-ppi-general-mse} allows us to prove \cref{theorem:cross-ppi-indpt-mse}.

\begin{restatable}[Covariance Estimation Error, Independent Case]{lemma}{CovarianceEstimation}\label{lemma:indp-covariance_estimation}
	Given $n$ samples $\{y_i, f_i\}_{i=1}^n$ drawn iid from $P_{FY}$, where $Y$ and $F$ are independent with bounded second moments, then
	\begin{equation*}\label{eq:cov_mse_independent}
		\mathbb{E}\left[ (\hat{\sigma}_{yf} - \sigma_{yf})^2 \right] = \frac{\vary \varf}{n - 1},
	\end{equation*}
\end{restatable}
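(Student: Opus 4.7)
The plan is to exploit independence aggressively from the start. Since $Y \perp F$ implies $\sigma_{yf} = 0$, the quantity to compute collapses to $\E[\hat{\sigma}_{yf}^2]$. I would introduce centered variables $u_i \coloneqq y_i - \mu_y$ and $v_i \coloneqq f_i - \mu_f$ so that $\{u_i\}$ and $\{v_j\}$ are mutually independent, mean-zero, with variances $\sigma_y^2$ and $\sigma_f^2$. Because centering leaves the residuals $y_i - \bar{y}$ unchanged, we can write $\hat{\sigma}_{yf} = \frac{1}{n-1}\sum_{i=1}^n (u_i - \bar{u})(v_i - \bar{v})$.

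Next I would square this estimator and take expectations, expanding as a double sum:
\begin{equation*}
\E[\hat{\sigma}_{yf}^2] = \frac{1}{(n-1)^2}\sum_{i=1}^n \sum_{j=1}^n \E\bigl[(u_i-\bar{u})(u_j-\bar{u})\bigr]\,\E\bigl[(v_i-\bar{v})(v_j-\bar{v})\bigr],
\end{equation*}
where the factorization across $u$'s and $v$'s is the crucial use of independence. The inner expectations are the standard entries of the residual covariance matrix: $\frac{n-1}{n}\sigma_y^2$ on the diagonal and $-\sigma_y^2/n$ off-diagonal (analogously for $v$). This is a one-line calculation from $\sum_i (u_i - \bar{u}) = 0$.

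Substituting and collecting the $n$ diagonal terms versus the $n(n-1)$ off-diagonal terms gives
\begin{equation*}
\sum_{i,j}\E[\cdots]\E[\cdots] \;=\; n\cdot\tfrac{(n-1)^2}{n^2}\sigma_y^2\sigma_f^2 \;+\; n(n-1)\cdot\tfrac{1}{n^2}\sigma_y^2\sigma_f^2 \;=\; (n-1)\sigma_y^2\sigma_f^2,
\end{equation*}
so dividing by $(n-1)^2$ yields $\E[\hat{\sigma}_{yf}^2] = \sigma_y^2\sigma_f^2/(n-1)$, which is the claim. The only mild obstacle is the bookkeeping of the double sum and making sure the residual-covariance formula is applied correctly; both reduce to routine moment computations, with no need for fourth moments, so the bounded-second-moment hypothesis suffices. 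I would also remark that, combined with Theorems~\ref{theorem:split-sample-ppi-general-mse} and~\ref{theorem:cross-ppi-general-mse}, this immediately yields Proposition~\ref{theorem:split-ppi-indpt-mse}, since under independence the cross-term $(\sigma_{yf^2} - 2\sigma_{fy}\mu_f)^2$ in the cross-fit MSE vanishes as well.
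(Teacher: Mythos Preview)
Your proof is correct and follows essentially the same approach as the paper: expand $\hat{\sigma}_{yf}^2$ as a double sum, factor each term using independence of $Y$ and $F$, plug in the standard residual covariance entries $\E[(y_i-\bar y)(y_j-\bar y)] = \sigma_y^2(1-1/n)$ for $i=j$ and $-\sigma_y^2/n$ for $i\ne j$, and sum the diagonal and off-diagonal contributions. Your introduction of the centered variables $u_i,v_i$ is a cosmetic simplification (since $y_i-\bar y = u_i-\bar u$), but the computation is otherwise identical to the paper's.
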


\begin{proof}
	By definition, we can write that
	\begin{equation*}
		\hat{\sigma}_{fy}(Y, F) = \frac{1}{n-1} \sum_{i=1}^n (y_i - \bar{y})(f_i - \bar{f}).
	\end{equation*}
	where $\bar{y} = {n}^{-1} \sum_{i=1}^{n} y_i$ and similar for $\bar{f}$.  Taking the square gives us
	\begin{align*}
		\hat{\sigma}_{fy}^2 & = \left( \frac{1}{n-1} \sum_{i=1}^n (y_i - \bar{y})(f_i - \bar{f}) \right)^2                                 \\
		                    & = \frac{1}{(n-1)^2} \sum_{i=1}^n \sum_{j=1}^n  (y_i - \bar{y})(f_i - \bar{f})(y_j - \bar{y}) (f_j - \bar{f})
	\end{align*}
	and by linearity of expectation, we therefore have it that
	\begin{equation*}
		\mathbb{E}[\hat{\sigma}_{fy}^2] = \frac{1}{(n-1)^2} \sum_{i=1}^n \sum_{j=1}^n \mathbb{E}\left[ (y_i - \bar{y})(f_i - \bar{f})(y_j - \bar{y})(f_j - \bar{f}) \right] \\
	\end{equation*}

	Since we have independence of $F, Y$ (and hence, joint independence $(F_1, \ldots, F_n) \perp (Y_1, \ldots, Y_n)$), we can treat the expectations over terms involving $F,Y$ separately, giving us
	\begin{equation}\label{eq:cov_independent_minor_0}
		\mathbb{E}[\hat{\sigma}_{fy}^2]  = \frac{1}{(n-1)^2} \sum_{i=1}^n \sum_{j=1}^n \mathbb{E}\left[ (y_i - \bar{y})(y_j - \bar{y})\right] \mathbb{E}\left[(f_i - \bar{f})(f_j - \bar{f}) \right]
	\end{equation}
	where
	\begin{equation}\label{eq:cov_independent_minor}
		\mathbb{E}[(y_i - \bar{y})(y_j - \bar{y})] =
		\begin{cases}
			\sigma_y^2 \left(1 - \frac{1}{n}\right) & \text{if } i = j,   \\
			- \frac{\sigma_y^2}{n}                  & \text{if } i \ne j.
		\end{cases}
	\end{equation}
	While~\cref{eq:cov_independent_minor} is a standard result, it can be proven as follows:  First, we distribute terms and use linearity of expectation to write
	\begin{align*}
		\mathbb{E}[(y_i - \bar{y})(y_j - \bar{y})] & = \mathbb{E}[y_i y_j] - \mathbb{E}[y_i \bar{y}] - \mathbb{E}[y_j \bar{y}] + \mathbb{E}[\bar{y}^2]
	\end{align*}
	and we can then observe that each term can be written as
	\begin{align*}
		\mathbb{E}[y_i y_j]     & =
		\begin{cases}
			\sigma_y^2 + \mu_y^2 & \text{if } i = j   \\
			\mu_y^2              & \text{if } i \ne j
		\end{cases}                                                                                        \\
		\mathbb{E}[y_i \bar{y}] & = \frac{1}{n} \left( \mathbb{E}[y_i^2] + (n-1)\mu_y^2 \right) = \frac{1}{n} (\sigma_y^2 + n\mu_y^2)     \\
		\mathbb{E}[\bar{y}^2]   & = \frac{1}{n^2} \left( n(\sigma_y^2 + \mu_y^2) + n(n-1)\mu_y^2 \right) = \frac{\sigma_y^2}{n} + \mu_y^2
	\end{align*}
	We can then consider each case in~\cref{eq:cov_independent_minor} as follows:

	\textbf{Case 1:} $i = j$
	\begin{align*}
		\mathbb{E}[(y_i - \bar{y})^2]
		 & = (\sigma_y^2 + \mu_y^2) - 2 \cdot \frac{1}{n}(\sigma_y^2 + n\mu_y^2) + \left( \frac{\sigma_y^2}{n} + \mu_y^2 \right) = \sigma_y^2 \left(1 - \frac{1}{n} \right)
	\end{align*}
	\textbf{Case 2:} $i \ne j$
	\begin{align*}
		\mathbb{E}[(y_i - \bar{y})(y_j - \bar{y})]
		 & = \mu_y^2 - 2 \cdot \frac{1}{n} (\sigma_y^2 + n\mu_y^2) + \left( \frac{\sigma_y^2}{n} + \mu_y^2 \right)  = -\frac{\sigma_y^2}{n}
	\end{align*}
	which yields~\cref{eq:cov_independent_minor}, with a similar result holding for $F$.  Returning to~\cref{eq:cov_independent_minor_0}, we can observe that there are $n$ terms where $i = j$ and $n(n-1)$ terms where $i \neq j$. Therefore, we obtain
	\begin{align*}
		\mathbb{E}[\hat{\sigma}_{fy}^2] & = \frac{1}{(n-1)^2} \left[ n \cdot \sigma_y^2 \left(1 - \frac{1}{n} \right) \cdot \sigma_f^2 \left(1 - \frac{1}{n} \right)
			                                                      + n(n-1) \cdot \left(-\frac{\sigma_y^2}{n}\right) \cdot \left(-\frac{\sigma_f^2}{n}\right) \right] \\
		                                & = \frac{1}{(n-1)^2} \left[ n \cdot \vary\varf \frac{(n-1)^2}{n^2} + \vary\varf\frac{n(n-1)}{n^2}\right]                   \\
		                                & = \frac{n(n-1)}{n^2(n-1)^2} \vary\varf \left[n - 1 + 1 \right]                                                            \\
		                                & = \frac{\vary \varf}{n-1}
	\end{align*}
	which gives the desired result.

	Finally, it is worth noting that if we estimate the covariance from $n' = c \cdot n$ samples, then the estimation error becomes,
	\begin{align*}
		\mathbb{E}[\hat{\sigma}_{fy}^2] = \frac{\vary \varf}{cn-1}
	\end{align*}

\end{proof}

With~\cref{lemma:indp-covariance_estimation} in hand, we now prove~\cref{theorem:cross-ppi-indpt-mse}.  Note that we could have also written this proof using the more general result of~\cref{theorem:gen-covariance_estimation} instead of~\cref{lemma:indp-covariance_estimation}.

\IndependentMSE*

\begin{proof}
	From \cref{theorem:cross-ppi-general-mse}, we have after applying zero covariance between $Y, F$,
	\begin{align*}
		\Var(\thetah_{\text{CF-PPI++}}) = \frac{\sigma_y^2}{n}  + \frac{\mathbb{E}[(\hat{\sigma}_{fy} - \sigma_{fy})^2]}{n\sigma_f^2}
	\end{align*}
	since any term involving a covariance between powers of $F$ and powers of $Y$ reduces to zero in the independent case.

	Now,
	we have from \cref{lemma:indp-covariance_estimation} that
	\begin{align*}
		\frac{\mathbb{E}[(\hat{\sigma}_{fy} - \sigma_{fy})^2]}{n\sigma_f^2} = \frac{1}{n\sigma_f^2} \cdot \frac{\vary \varf}{(n/2)-1}
	\end{align*}

	where we have adjusted the covariance estimation error to use $n/2$ samples as in the case of a single-fold of cross-fit. This gives us the desired result:
	\begin{align*}
		\Var(\thetah_{\text{CF-PPI++}}) & = \frac{\sigma_y^2}{n} +  \frac{1}{n\sigma_f^2} \cdot \frac{\vary \varf}{(n/2)-1} \\
		                                & = \frac{\sigma_y^2}{n} + \frac{2}{n(n-2)}\cdot\vary                               \\
		                                & = \frac{\vary}{n} \left(1 + \frac{2}{n-2}\right)
	\end{align*}

\end{proof}

    \subsection{\textbf{\cref{theorem:cross-ppi-general-mse}}: Covariance Estimation Error General}
    \label{sec:proof-gen-cov-error}
    
\GeneralCovarianceEstimation*

\begin{proof}
Noting that $\E[(\hat{\sigma}_{fy} - \sigma_{fy})^2] = \E[\hat{\sigma}_{fy}^2] - \sigma_{fy}^2$, we focus on first term.
\begin{align}
	\E[\hat{\sigma}_{fy}^2] & = \frac{1}{(n-1)^2} \sum_{i} \sum_{j} \E[(y_i f_i - \bar{y} \bar{f})(y_j f_j  - \bar{y} \bar{f})]                                                                                                                     & \text{by~\cref{lem:rewrite_covariance}} \nonumber \\
	                        & = \frac{1}{(n-1)^2} \sum_{i} \sum_{j} \E[(y_i f_i y_j f_j - y_i f_i \bar{y} \bar{f} - y_j f_j \bar{y} \bar{f} + (\bar{y} \bar{f})^2)]                                                                       \nonumber                                                     \\
	                        & = \frac{1}{(n-1)^2} \left(\sum_{i} \sum_{j} \E[y_i f_i y_j f_j] - 2n^2 \E[y_i f_i \bar{y} \bar{f}] + n^2 \E[\bar{y} \bar{f} \bar{y} \bar{f}]\right)                                                         \nonumber                                                     \\
	                        & = \frac{1}{(n-1)^2} \left(n \E[y_i f_i y_i f_i] + n(n-1) \E[y_i f_i y_j f_j]  - 2n^2 \E[y_i f_i \bar{y} \bar{f}] + n^2 \E[\bar{y} \bar{f} \bar{y} \bar{f}]\right) \label{eq:gen_cov_simple_1}
\end{align}
where we use $\{(y_i, f_i), (y_j, f_j)\}$ as shorthand for any pair of samples $(y, f)$ that are independently drawn, as discussed in~\cref{remark:notation_ijk}.  We can further note that
\begin{align*}
	\E[\bar{y} \bar{f} \bar{y} \bar{f}] & = \E[y_i \bar{f} \bar{y} \bar{f}] = \frac{1}{n} \E[y_i f_i \bar{y} \bar{f}] + \frac{n-1}{n} \E[y_i f_j \bar{y} \bar{f}]
\end{align*}
which we can combine with~\cref{eq:gen_cov_simple_1} to obtain
\begin{align}
	 & \E[\hat{\sigma}_{fy}^2]\nonumber                                                                                                                                                                                                         \\
	 & = \frac{1}{(n-1)^2} \left(n \E[y_i f_i y_i f_i] + n(n-1) \E[y_i f_i y_j f_j]  - 2n^2 \E[y_i f_i \bar{y} \bar{f}] + n^2 \left(\frac{1}{n} \E[y_i f_i \bar{y} \bar{f}] + \frac{n-1}{n} \E[y_i f_j \bar{y} \bar{f}]\right)\right) \nonumber \\
	 & = \frac{1}{(n-1)^2} \left(n \E[y_i f_i y_i f_i] + n(n-1) \E[y_i f_i y_j f_j]  - (2n^2 - n) \E[y_i f_i \bar{y} \bar{f}] + n(n-1) \E[y_i f_j \bar{y} \bar{f}]\right)\nonumber                                                              \\
	 & = \frac{1}{(n-1)^2} \left(n \E[y^2 f^2] + n(n-1) \E[yf]^2  - (2n^2 - n) \E[y_i f_i \bar{y} \bar{f}] + n(n-1) \E[y_i f_j \bar{y} \bar{f}]\right)\nonumber                                                                                 \\
	 & = \frac{1}{(n-1)^2} \left(n \E[y^2 f^2] + n(n-1) \E[yf]^2  - n^2 \E[y_i f_i \bar{y} \bar{f}] - n(n-1) (\E[y_i f_i \bar{y} \bar{f}] -\E[y_i f_j \bar{y} \bar{f}]\right)\label{eq:gen_cov_simple_2}
\end{align}
We can further note that
\begin{align}
	 & \E[y_i f_i \bar{y} \bar{f}] = \frac{1}{n} \E[y_i f_i (y_i + \sum_{j\neq i} y_j) \bar{f}]                                               \nonumber                 \\
	 & = \frac{1}{n} \left(\E[y_i^2 f_i \bar{f}] + (n-1) \E[y_i f_i y_j \bar{f}]\right)                                                 \nonumber                       \\
	 & = \frac{1}{n^2} \left(\E[y_i^2 f_i (f_i + \sum_{j \neq i} f_j)] + (n-1) \E[y_i f_i y_j (f_i + f_j + \sum_{k \neq (i, j)} f_k )]\right) \nonumber                 \\
	 & = \frac{1}{n^2} \left(\E[y^2 f^2] + (n-1) \E[y^2 f]\E[f] + (n-1) \E[yf^2]\E[y] + (n-1) \E[yf]^2 + (n-1)(n-2) \E[yf]\E[y]\E[f]\right) \label{eq:gen_cov_simple_3}
\end{align}
and similarly,
\begin{align*}
	 & \E[y_i f_j \bar{y} \bar{f}] = \frac{1}{n} \E[y_i f_j (y_i + y_j + \sum_{k\neq (i, j)} y_k) \bar{f}]                                                                                                           \\
	 & = \frac{1}{n} \left(\E[y_i^2 f_j \bar{f}] + \E[y_i f_j y_j \bar{f}] + (n-2) \E[y_i f_j y_k \bar{f}]\right)                                                                                                    \\
	 & = \frac{1}{n^2} \left(\E[y_i^2 f_j (f_i + f_j + \sum_{k \neq (i, j)} f_k) ] + \E[y_i f_j y_j (f_i + f_j + \sum_{k \neq (i, j)} f_k)] + (n-2) \E[y_i f_j y_k (f_i + f_j + f_k + \sum_{l \neq (i, j, k)} f_l)]\right) \\
	 & = \frac{1}{n^2} \bigg(\E[y^2 f] \E[f] + \E[y^2]\E[f^2] + (n-2) \E[y^2]\E[f]^2                                                                                                                                 \\
	 & \qquad \qquad + \E[y f]^2 + \E[yf^2]\E[y] + (n-2) \E[yf]\E[y]\E[f]                                                                                                                                            \\
	 & \qquad \qquad + (n-2) \bigg(\E[yf]\E[y]\E[f] + \E[f^2]\E[y]^2 + \E[yf]\E[y]\E[f] + (n-3) \E[y]^2 \E[f]^2\bigg) \bigg)
\end{align*}
which gives us that the last term of~\cref{eq:gen_cov_simple_2} is given by
\begin{align*}
	 & n(n-1) (\E[y_i f_i \bar{y} \bar{f}] - \E[y_i f_j \bar{y} \bar{f}])                                                                                         \\
	 & = \frac{n(n-1)}{n^2} \bigg(\E[y^2 f^2] + (n-1) \E[y^2 f]\E[f] + (n-1) \E[yf^2]\E[y] + (n-1) \E[yf]^2 + (n-1)(n-2) \E[yf]\E[y]\E[f]                         \\
	 & \qquad - \bigg(\E[y^2 f] \E[f] + \E[y^2]\E[f^2] + (n-2) \E[y^2]\E[f]^2                                                                                     \\
	 & \qquad \qquad + \E[y f]^2 + \E[yf^2]\E[y] + (n-2) \E[yf]\E[y]\E[f]                                                                                         \\
	 & \qquad \qquad + 2 (n-2) \E[yf]\E[y]\E[f] + (n-2) \E[f^2]\E[y]^2 + (n-2) (n-3) \E[y]^2 \E[f]^2\bigg)\bigg)                                                    \\
	 & = \frac{n(n-1)}{n^2} \bigg( [ \E[y^2 f^2] - \E[y^2]\E[f^2] ]  + (n-2) [ \E[y^2 f]\E[f] - \E[y^2]\E[f]\E[f] ] + (n-2) [ \E[yf^2]\E[y] - \E[y]\E[f^2]\E[y] ] \\
	 & \qquad \qquad + (n-2) (\E[yf]^2 - \E[yf]\E[y]\E[f]) + (n-2)(n-3) [ \E[yf]\E[y]\E[f] - \E[y]\E[f]\E[y]\E[f] ]\bigg)
\end{align*}
Noting that $\sigma_{ab} = \E[ab] - \E[a]\E[b]$, we can then rewrite the above as
\begin{align}
	 & n(n-1) (\E[y_i f_i \bar{y} \bar{f}] - \E[y_i f_j \bar{y} \bar{f}])                                                                                               \nonumber                    \\
	 & = \frac{n(n-1)}{n^2} \bigg( \sigma_{y^2 f^2} + (n-2) \sigma_{y^2 f} \mu_f + (n-2) \sigma_{yf^2} \mu_y + (n-2) \E[yf] \sigma_{yf} + (n-2)(n-3) \sigma_{yf} \mu_y \mu_f\bigg) \label{eq:gen_cov_simple_4}
\end{align}
and we can combine~\cref{eq:gen_cov_simple_3,eq:gen_cov_simple_4} with~\cref{eq:gen_cov_simple_2} to obtain
\begin{align*}
	 & \E[\hat{\sigma}_{fy}^2]                                                                                                                                                                                \\
	 & = \frac{1}{(n-1)^2} \bigg(n \E[y^2 f^2] + n(n-1) \E[yf]^2 & \text{(From Eq.~\ref{eq:gen_cov_simple_2})}  \\
	 & \qquad \qquad - n^2 \E[y_i f_i \bar{y} \bar{f}] - n(n-1) (\E[y_i f_i \bar{y} \bar{f}] -\E[y_i f_j \bar{y} \bar{f}]\bigg) \\
	 & = \frac{1}{(n-1)^2} \bigg(n \E[y^2 f^2] + n(n-1) \E[yf]^2   \\
	 & \qquad - \bigg(\E[y^2 f^2] + (n-1) \E[y^2 f]\E[f] + (n-1) \E[yf^2]\E[y] & \text{(From Eq.~\ref{eq:gen_cov_simple_3})}  \\
	 & \qquad \qquad + (n-1) \E[yf]^2 + (n-1)(n-2) \E[yf]\E[y]\E[f]\bigg)  \\
	 & \qquad - \frac{n(n-1)}{n^2} \bigg( \sigma_{y^2 f^2} + (n-2) \sigma_{y^2 f} \mu_f & \text{(From Eq.~\ref{eq:gen_cov_simple_4})}  \\
	 & \qquad \qquad + (n-2) \sigma_{yf^2} \mu_y + (n-2) \E[yf] \sigma_{yf} + (n-2)(n-3) \sigma_{yf} \mu_y \mu_f\bigg)\bigg)  \\
	 & = \frac{1}{(n-1)^2} \bigg( (n - 1) \E[y^2 f^2] + (n-1) (n-1) \E[yf]^2 & \text{(Collect terms)} \\
	 & \qquad - \bigg((n-1) \E[y^2 f]\E[f] + (n-1) \E[yf^2]\E[y] + (n-1)(n-2) \E[yf]\E[y]\E[f]\bigg)  \\
	 & \qquad - \frac{n(n-1)}{n^2} \bigg( \sigma_{y^2 f^2} + (n-2) \sigma_{y^2 f} \mu_f \\
	 & \qquad \qquad + (n-2) \sigma_{yf^2} \mu_y + (n-2) \E[yf] \sigma_{yf} + (n-2)(n-3) \sigma_{yf} \mu_y \mu_f\bigg)\bigg)  \\
	 & = \frac{1}{n(n-1)} \bigg(n \E[y^2 f^2] + n(n-1) \E[yf]^2 & \text{(Multiply by $\frac{n}{n}$)}\\
	 & \qquad - \bigg(n\E[y^2 f]\E[f] + n\E[yf^2]\E[y] + n(n-2) \E[yf]\E[y]\E[f]\bigg)  \\
	 & \qquad - \bigg(\sigma_{y^2 f^2} + (n-2) \sigma_{y^2 f} \mu_f \\
	 & \qquad \qquad + (n-2) \sigma_{yf^2} \mu_y + (n-2) \E[yf] \sigma_{yf} + (n-2)(n-3)\sigma_{yf} \mu_y \mu_f\bigg)\bigg)  
\end{align*}
To further simplify, we expand terms like $\E[y^2 f] = \sigma_{y^2 f} + \E[y^2] \E[f] = \sigma_{y^2 f} + (\sigma_{y}^2 + \muy^2) \mu_f$ and so on.
\begin{align*}
	 & = \frac{1}{n(n-1)} \bigg(n\sigma_{y^2 f^2} + n (\sigma_y^2 + \muy^2)(\sigma_f^2 + \muf^2) + n(n-1) (\sigma_{yf} + \muf \muy)(\sigma_{yf} + \muf \muy) \\
	 & \qquad - \bigg(n \sigma_{y^2 f} \muf + n (\sigma_{y}^2 + \muy^2)\muf^2 + n \sigma_{yf^2} \muy + n (\sigma_{f}^2 + \muf^2) \muy^2 + n(n-2) (\sigma_{yf} + \muf \muy) \muf \muy\bigg)  \\
	 & \qquad - \bigg(\sigma_{y^2 f^2} + (n-2) \sigma_{y^2 f} \mu_f + (n-2) \sigma_{yf^2} \mu_y + (n-2) (\sigma_{yf} + \muf \muy) \sigma_{yf} + (n-2)(n-3)\sigma_{yf} \mu_y \mu_f\bigg)\bigg)  
\end{align*}
\newcommand\numberthis{\addtocounter{equation}{1}\tag{\theequation}}
We then expand and collect similar terms
\begin{align*}
	 & = \frac{1}{n(n-1)} \bigg({\color{blue} n\sigma_{y^2 f^2}} + n ({\color{blue} \sigma_y^2 \sigma_f^2} + \sigma_y^2 \muf^2 + \sigma_f^2 \muy^2 + \muy^2 \muf^2) + n(n-1) (\sigma_{yf}^2 + 2 \sigma_{yf} \muf \muy + \muf^2 \muy^2) \\
	 & \qquad - \bigg(n \sigma_{y^2 f} \muf + n (\sigma_{y}^2 \muf^2 + \muf^2 \muy^2) + n \sigma_{yf^2} \muy + n (\sigma_{f}^2 \muy^2 + \muf^2 \muy^2) + n(n-2) (\sigma_{yf} \muf \muy + \muf^2 \muy^2) \bigg)  \\
	 & \qquad - \bigg({\color{blue} \sigma_{y^2 f^2}} + (n-2) \sigma_{y^2 f} \mu_f + (n-2) \sigma_{yf^2} \mu_y + (n-2) (\sigma_{yf}^2 + \sigma_{yf} \muf \muy) + (n-2)(n-3) \sigma_{yf} \mu_y \mu_f\bigg)\bigg)  \\
	 & = \frac{\sigma_{y^2 f^2}}{n}  + \frac{\sigma_y^2 \sigma_f^2}{n-1} + 
	 \frac{1}{n(n-1)} \bigg(n (\sigma_y^2 \muf^2 + \sigma_f^2 \muy^2 + \muy^2 \muf^2) + n(n-1) ({\color{blue} \sigma_{yf}^2 } + 2 \sigma_{yf} \muf \muy + \muf^2 \muy^2) \\
	 & \qquad - \bigg(n \sigma_{y^2 f} \muf + n (\sigma_{y}^2 \muf^2 + \muf^2 \muy^2) + n \sigma_{yf^2} \muy + n (\sigma_{f}^2 \muy^2 + \muf^2 \muy^2) + n(n-2) (\sigma_{yf} \muf \muy + \muf^2 \muy^2) \bigg)  \\
	 & \qquad - \bigg((n-2) \sigma_{y^2 f} \mu_f + (n-2) \sigma_{yf^2} \mu_y + (n-2) ({\color{blue} \sigma_{yf}^2} + \sigma_{yf} \muf \muy) + (n-2)(n-3) \sigma_{yf} \mu_y \mu_f\bigg)\bigg)  \\
	 & = \frac{\sigma_{y^2 f^2}}{n}  + \frac{\sigma_y^2 \sigma_f^2}{n-1} + \left(1 - \frac{n-2}{n(n-1)}\right) \sigma_{yf}^2 +  
	 \frac{1}{n(n-1)} \bigg(n (\sigma_y^2 \muf^2 + \sigma_f^2 \muy^2 + {\color{red} \muy^2 \muf^2)} + n(n-1) (2 \sigma_{yf} \muf \muy + {\color{red} \muf^2 \muy^2})\\
	 & \qquad - \bigg(n \sigma_{y^2 f} \muf + n (\sigma_{y}^2 \muf^2 + {\color{red} \muf^2 \muy^2}) + n \sigma_{yf^2} \muy + n (\sigma_{f}^2 \muy^2 + {\color{red} \muf^2 \muy^2}) + n(n-2) (\sigma_{yf} \muf \muy + {\color{red} \muf^2 \muy^2}) \bigg)  \\
	 & \qquad - \bigg((n-2) \sigma_{y^2 f} \mu_f + (n-2) \sigma_{yf^2} \mu_y + (n-2) (\sigma_{yf} \muf \muy) + (n-2)(n-3) \sigma_{yf} \mu_y \mu_f\bigg)\bigg)  \\
	 & = \frac{\sigma_{y^2 f^2}}{n}  + \frac{\sigma_y^2 \sigma_f^2}{n-1} + \left(1 - \frac{n-2}{n(n-1)}\right) \sigma_{yf}^2 +  
	 \frac{1}{n(n-1)} \bigg(n ({\color{red} \sigma_y^2 \muf^2 + \sigma_f^2 \muy^2}) + n(n-1) (2 \sigma_{yf} \muf \muy)\\
	 & \qquad - \bigg(n \sigma_{y^2 f} \muf + n {\color{red} \sigma_{y}^2 \muf^2} + n \sigma_{yf^2} \muy + n {\color{red} \sigma_{f}^2 \muy^2} + n(n-2) (\sigma_{yf} \muf \muy) \bigg)  \\
	 & \qquad - \bigg((n-2) \sigma_{y^2 f} \mu_f + (n-2) \sigma_{yf^2} \mu_y + (n-2) (\sigma_{yf} \muf \muy) + (n-2)(n-3) \sigma_{yf} \mu_y \mu_f\bigg)\bigg)  \\
	 & = \frac{\sigma_{y^2 f^2}}{n}  + \frac{\sigma_y^2 \sigma_f^2}{n-1} + \left(1 - \frac{n-2}{n(n-1)}\right) \sigma_{yf}^2 +  
	 \frac{1}{n(n-1)} \bigg(n(n-1) (2 \sigma_{yf} \muf \muy)\\
	 & \qquad - \bigg(n {\color{blue} \sigma_{y^2 f} \muf} +  n {\color{blue} \sigma_{yf^2} \muy } + n(n-2) (\sigma_{yf} \muf \muy) \bigg)  \\
	 & \qquad - \bigg((n-2) {\color{blue} \sigma_{y^2 f} \mu_f} + (n-2) {\color{blue} \sigma_{yf^2} \mu_y } + (n-2) (\sigma_{yf} \muf \muy) + (n-2)(n-3) \sigma_{yf} \mu_y \mu_f\bigg)\bigg)  \\
	 & = \frac{\sigma_{y^2 f^2}}{n}  + \frac{\sigma_y^2 \sigma_f^2}{n-1} + \left(1 - \frac{n-2}{n(n-1)}\right) \sigma_{yf}^2 - \frac{1}{n} (2\sigma_{y^2 f} \muf + 2\sigma_{yf^2} \muy )  \\
	 & \qquad + \frac{1}{n(n-1)} \bigg( [ 2n(n-1) - n(n-2) - (n-2) - (n-2)(n-3) ] (\sigma_{yf} \muf \muy) \bigg)\\
	 & = \frac{\sigma_{y^2 f^2}}{n}  + \frac{\sigma_y^2 \sigma_f^2}{n-1} + \left(1 - \frac{n-2}{n(n-1)}\right) \sigma_{yf}^2 - \frac{2}{n} (\sigma_{y^2 f} \muf + \sigma_{yf^2} \muy - 2 \sigma_{yf} \muf \muy )  \numberthis \label{eq:gen_cov_simple_final}
\end{align*}
where in the last line we use the fact that
\newcommand{\hlb}[1]{{\color{blue}#1}}
\newcommand{\hlr}[1]{{\color{red}#1}}
\begin{align*}
	&\frac{1}{n(n-1)} [ 2n(n-1) - n(n-2) - (n-2) - (n-2)(n-3) ] \\
	&= \frac{1}{n(n-1)} [\hlr{2n^2} - 2n - \hlr{n^2} + 2n - n + 2 - (\hlr{n^2} - 5n + 6)] \\
	&= \frac{1}{n(n-1)} [- \hlr{2n} + \hlr{2n} - n + 2 + 5n - 6] \\
 &= \frac{1}{n(n-1)} [- n + 2 + 5n - 6] \\
 &= \frac{1}{n(n-1)} [4(n - 1)] \\
 &= \frac{4}{n}
\end{align*}
Subtracting $\sigma_{yf}^2$ from the final expression for $\E[\hat{\sigma}_{yf}^2]$ in~\cref{eq:gen_cov_simple_final}, to obtain $\E[\hat{\sigma}^2_{yf}] - \sigma^2_{yf} = \E[(\hat{\sigma}_{yf} - \sigma_{yf})^2]$, gives the desired result.

\end{proof}

    \subsection{\textbf{\cref{sec:theory-gaussian}} Proofs for the Gaussian Setting}
    \label{sec:proof-normals}
    We begin with some notes on special properties of Gaussians. We then prove the results in
\cref{sec:theory-gaussian} for \nameref{def:cross-ppi-inf-N} and then follow it up with the proofs for the \nameref{def:single-sample-ppi-inf-N} portion of the statements.

For jointly normal random variables $X$ and $Y$ with means $\mu_x$, $\mu_y$, variances $\sigma_x^2$, $\sigma_y^2$, and covariance $\sigma_{xy}$, we have the following identities:
\begin{align*}
	\mathbb{E}[X^2Y^2] & = \mu_x^2\mu_y^2 + \sigma_x^2\mu_y^2 + \sigma_y^2\mu_x^2 + \sigma_x^2\sigma_y^2 + 2\sigma_{xy}^2 + 4\sigma_{xy}\mu_x\mu_y \\
	\mathbb{E}[X^2Y]   & = \mu_x^2\mu_y + \sigma_x^2\mu_y + 2\sigma_{xy}\mu_x                                                                      \\
	\mathbb{E}[XY^2]   & = \mu_x\mu_y^2 + \sigma_y^2\mu_x + 2\sigma_{xy}\mu_y
\end{align*}
Now consider the following expression for the expected value of the squared sample covariance obtained from \cref{theorem:gen-covariance_estimation}:
\begin{align*}
	\mathbb{E}\left[\hat{\sigma}_{fy}^2\right] & = \frac{1}{n}\sigma_{f^2y^2} + \frac{1}{n-1}\sigma_f^2\sigma_y^2 + \left[\frac{n-1}{n} + \frac{1}{n(n-1)}\right]\sigma_{fy}^2 \\
	                                           & + \frac{1}{n}[-2\sigma_{y^2f}\mu_f - 2\sigma_{f^2y}\mu_y + 4\sigma_{fy}\mu_f\mu_y]
\end{align*}
Using these identities, we can simplify the covariance terms in our expression:
\begin{align*}
	\sigma_{f^2y^2} & = \mathbb{E}[F^2Y^2] - \mathbb{E}[F^2]\mathbb{E}[Y^2]                                                                       \\
	                & = (\mu_f^2\mu_y^2 + \sigma_f^2\mu_y^2 + \sigma_y^2\mu_f^2 + \sigma_f^2\sigma_y^2 + 2\sigma_{fy}^2 + 4\sigma_{fy}\mu_f\mu_y) \\
	                & \quad - (\mu_f^2 + \sigma_f^2)(\mu_y^2 + \sigma_y^2)                                                                        \\
	                & = \mu_f^2\mu_y^2 + \sigma_f^2\mu_y^2 + \sigma_y^2\mu_f^2 + \sigma_f^2\sigma_y^2 + 2\sigma_{fy}^2 + 4\sigma_{fy}\mu_f\mu_y   \\
	                & \quad - \mu_f^2\mu_y^2 - \mu_f^2\sigma_y^2 - \sigma_f^2\mu_y^2 - \sigma_f^2\sigma_y^2                                       \\
	                & = 2\sigma_{fy}^2 + 4\sigma_{fy}\mu_f\mu_y
\end{align*}
Similarly for the other covariance terms:
\begin{align*}\label{eq:special-gaussian}
	\sigma_{f^2y} & = \mathbb{E}[F^2Y] - \mathbb{E}[F^2]\mathbb{E}[Y]                                     \\
	              & = (\mu_f^2\mu_y + \sigma_f^2\mu_y + 2\sigma_{fy}\mu_f) - (\mu_f^2 + \sigma_f^2)\mu_y  \\
	              & = \mu_f^2\mu_y + \sigma_f^2\mu_y + 2\sigma_{fy}\mu_f - \mu_f^2\mu_y - \sigma_f^2\mu_y \\
	              & = 2\sigma_{fy}\mu_f
\end{align*}
and
\begin{align*}
	\sigma_{y^2f} & = \mathbb{E}[Y^2F] - \mathbb{E}[Y^2]\mathbb{E}[F]                                     \\
	              & = (\mu_y^2\mu_f + \sigma_y^2\mu_f + 2\sigma_{fy}\mu_y) - (\mu_y^2 + \sigma_y^2)\mu_f  \\
	              & = \mu_y^2\mu_f + \sigma_y^2\mu_f + 2\sigma_{fy}\mu_y - \mu_y^2\mu_f - \sigma_y^2\mu_f \\
	              & = 2\sigma_{fy}\mu_y
\end{align*}
Now we substitute these expressions into our original equation:
\begin{align*}
	\mathbb{E}[\hat{\sigma}_{fy}^2] & = \frac{1}{n}(2\sigma_{fy}^2 + 4\sigma_{fy}\mu_f\mu_y) + \frac{1}{n-1}\sigma_f^2\sigma_y^2           \\
	                                & \quad + \left[\frac{n-1}{n} + \frac{1}{n(n-1)}\right]\sigma_{fy}^2                                   \\
	                                & \quad + \frac{1}{n}[-2(2\sigma_{fy}\mu_y)\mu_f - 2(2\sigma_{fy}\mu_f)\mu_y + 4\sigma_{fy}\mu_f\mu_y]
\end{align*}
Simplifying the terms with $\mu_f$ and $\mu_y$:
\begin{align}
	\frac{1}{n}(4\sigma_{fy}\mu_f\mu_y - 4\sigma_{fy}\mu_y\mu_f - 4\sigma_{fy}\mu_f\mu_y + 4\sigma_{fy}\mu_f\mu_y) = 0
\end{align}
So these terms cancel out completely. Now we simplify the terms with $\sigma_{fy}^2$:
\begin{align*}
	\frac{2\sigma_{fy}^2}{n} + \left[\frac{n-1}{n} + \frac{1}{n(n-1)}\right]\sigma_{fy}^2 & = \sigma_{fy}^2\left[\frac{2}{n} + \frac{n-1}{n} + \frac{1}{n(n-1)}\right] \\
	                                                                                      & = \sigma_{fy}^2\left[\frac{n+1}{n} + \frac{1}{n(n-1)}\right]               \\
	                                                                                      & = \sigma_{fy}^2\left[\frac{(n+1)(n-1) + 1}{n(n-1)}\right]                  \\
	                                                                                      & = \sigma_{fy}^2\left[\frac{n^2-1+1}{n(n-1)}\right]                         \\
	                                                                                      & = \sigma_{fy}^2\left[\frac{n^2}{n(n-1)}\right]                             \\
	                                                                                      & = \sigma_{fy}^2\left[\frac{n}{n-1}\right]
\end{align*}
Therefore, for jointly normal random variables $F$ and $Y$, the Covariance Estimation Error becomes:
\begin{align}
	\mathbb{E}[\hat{\sigma}_{fy}^2] - \sigma_{fy}^2 = \frac{n}{n-1}\sigma_{fy}^2 + \frac{1}{n-1}\sigma_f^2\sigma_y^2 - \sigma_{fy}^2 = \frac{1}{n-1}(\sigma_{fy}^2 + \sigma_f^2\sigma_y^2)
\end{align}
or
\begin{align}\label{eq:gauss-cov-err}
	\mathbb{E}[(\hat{\sigma}_{fy}-\sigma_{fy})^2] = \frac{n}{n-1}\sigma_{fy}^2 + \frac{1}{n-1}\sigma_f^2\sigma_y^2 - \sigma_{fy}^2 = \frac{1}{n-1}(\sigma_{fy}^2 + \sigma_f^2\sigma_y^2)
\end{align}

We are now ready to prove the results in \cref{sec:theory-gaussian}. We begin with a proof for the Cross-fit PPI++ portion of the results. Then we treat the Single-sample case.

\subsubsection{Proofs for the \nameref{def:cross-ppi-inf-N} estimator, Gaussian}

We now include an additional lemma for use,

\begin{lemma}[Variance between the Folds, Gaussian]\label{lem:var_folds_gaussian}
    In the case of jointly Gaussian variables, $F,Y$ the variance between the folds for \nameref{def:cross-ppi-inf-N} vanishes.
\end{lemma}

\begin{proof}

We have from \cref{eq:special-gaussian}, that for Gaussians,
    \begin{align*}
    \sigma_{f^2y} = 2\sigma_{fy}\mu_f \implies \sigma_{f^2y} - 2\sigma_{fy}\mu_f = 0
    \end{align*}

The variance between the folds, from \cref{theorem:cross-ppi-general-mse}, specifically, \cref{eq:xfit-ppi-mse} shows this difference squared is in fact the cross term. This completes the proof of this lemma.
\end{proof}

That the variance between the folds is zero has the following consequences for \nameref{def:cross-ppi-inf-N}

\begin{enumerate}
    \item \cref{corollary:performant-ppi} becomes a necessary condition and not just a sufficient condition. It was previously sufficient since the variance between the folds added an additional positive term to the variance of the \nameref{def:cross-ppi-inf-N} estimator. This term is now zero making the loss $>$ gain for worse performance from \nameref{def:cross-ppi-inf-N} a necessary condition.
    \item This is now a reversible condition. That is, if $\mathbb{E}[(\hat{\sigma}_{fy}-\sigma_{fy})^2] < \sigma_{fy}^2$ then \nameref{def:cross-ppi-inf-N} performs better than \cref{def:classical-n}.
\end{enumerate}

We now prove the condition for improvement using \nameref{def:cross-ppi-inf-N} in the Gaussian case.

\textbf{Proof of \cref{theorem:jointly-normal-mse-condition-all}: Condition for Performant Cross-fit PPI++, Gaussian}

\MSESingleSampleNormal*
\begin{proof}

While \cref{eq:gauss-cov-err} expresses the covariance estimation error when the covariance is estimated using $n$ samples, \nameref{def:cross-ppi-inf-N} uses only $n/2$ samples to estimate the covariance. In this case, the covariance estimation error becomes,

\begin{align}\label{eq:gauss-cov-err-crossfit}
    \mathbb{E}[(\hat{\sigma}_{fy}-\sigma_{fy})^2] = \frac{1}{n/2-1}(\sigma_{fy}^2 + \sigma_f^2\sigma_y^2) = \frac{2}{n-2}(\sigma_{fy}^2 + \sigma_f^2\sigma_y^2)
\end{align}

Now \cref{corollary:performant-ppi} gives us the condition for $\Var$ of the \nameref{def:cross-ppi-inf-N} estimator to be higher than $\Var$ of the \nameref{def:classical-n} estimator as

 $\Var(\thetah_{\text{CF-PPI++}}) > \sigma_y^2 / n$ when $\mathbb{E}[(\hat{\sigma}_{fy} - \sigma_{fy})^2] > \sigma_{fy}^2$. For the Gaussian case, substituting \cref{eq:gauss-cov-err-crossfit} in this, we get the condition for higher $\Var$ of \nameref{def:cross-ppi-inf-N} as
\begin{align*}
	\frac{2}{n-2}(\sigma_{fy}^2 + \sigma_f^2\sigma_y^2) & > \sigma_{fy}^2                  \\
	2\sigma_{fy}^2 + 2\sigma_f^2\sigma_y^2                & > n\sigma_{fy}^2 - 2\sigma_{fy}^2 \\
	2\sigma_f^2\sigma_y^2                                & > (n-4)\sigma_{fy}^2
\end{align*}
Or, the condition for lower variance is given by,
\begin{align*}
	(n-4)\sigma_{fy}^2                                 & > 2\sigma_f^2\sigma_y^2 \\
	\frac{\sigma_{fy}^2}{\sigma_f^2\sigma_y^2}         & > \frac{2}{(n-4)}      \\
	\lvert \frac{\sigma_{fy}}{\sigma_f\sigma_y} \rvert & > \frac{\sqrt{2}}{\sqrt{n-4}} \\
    \lvert \frac{\sigma_{fy}}{\sigma_f\sigma_y}\rvert &> \frac{1}{\sqrt{n/2 -2}} \\
    \lvert \frac{\sigma_{fy}}{\sigma_f\sigma_y}\rvert &> \frac{1}{\sqrt{cn -2}} \\
\end{align*}

which completes the proof of \cref{theorem:jointly-normal-mse-condition-all}
\end{proof}

\textbf{Proof of \cref{theorem:GaussianMSE}: Variance of Cross-fit PPI++, Gaussian}

Our covariance estimation error can be directly used in \cref{theorem:cross-ppi-general-mse} to obtain the variance in the Gaussian case.

\MSEGaussian*

\begin{proof}
    Consider \cref{eq:xfit-ppi-mse}. Now, from \cref{lem:var_folds_gaussian} we know that its cross-term is zero. Further, from \cref{eq:gauss-cov-err-crossfit} we obtain the covariance estimation error for the \nameref{def:cross-ppi-inf-N} case. Substituting these and simplifying should land the desired result. Thus, from \cref{eq:xfit-ppi-mse}, we have

    \begin{align*}
        \Var(\thetah_{\text{CF-PPI++}}) = \frac{\sigma_y^2}{n} - \frac{\sigma_{fy}^2}{n\sigma_f^2} + \frac{\mathbb{E}[(\hat{\sigma}_{fy} - \sigma_{fy})^2]}{n\sigma_f^2} + \frac{2}{n^2 \sigma_{f}^4} \left(\sigma_{yf^2} -  2 \sigma_{yf} \muf \right)^2
    \end{align*},

    and we know from \cref{lem:var_folds_gaussian}, we have $\left(\sigma_{yf^2} -  2 \sigma_{yf} \muf \right)^2 = 0$.

    Further, from \cref{eq:gauss-cov-err-crossfit}, we get $ \mathbb{E}[(\hat{\sigma}_{fy}-\sigma_{fy})^2] = \frac{2}{n-2}(\sigma_{fy}^2 + \sigma_f^2\sigma_y^2)$

    Substituting, we obtain,

    \begin{align*}
        \Var(\thetah_{\text{CF-PPI++}}) &= \frac{\sigma_y^2}{n} - \frac{\sigma_{fy}^2}{n\sigma_f^2} + \frac{1}{n\sigma_f^2}\frac{2}{n-2}(\sigma_{fy}^2 + \sigma_f^2\sigma_y^2) \\
        &= \frac{\sigma_y^2}{n} - \frac{\sigma_{fy}^2}{n\sigma_f^2}.\frac{n-2}{n-2} + \frac{1}{n\sigma_f^2}\frac{2}{n-2}(\sigma_{fy}^2 + \sigma_f^2\sigma_y^2) \\
        &= \frac{\sigma_y^2}{n} - \frac{\left(n - 2 -2\right)\sigma_{fy}^2}{n(n-2)\sigma_f^2} + \frac{2\sigma_y^2}{n(n-2)} \\
        &= \frac{\sigma_y^2}{n}\left(1 + \frac{2}{n-2}\right) - \frac{n-4}{n(n-2)}\sigma_{fy}^2 \\
        \end{align*}

    To obtain the form that we want,

    \begin{align*}
        \Var(\thetah_{\text{CF-PPI++}}) &= \frac{\sigma_y^2}{n}\left(1 + \frac{2}{n-2}\right) - \frac{n-4}{n(n-2)}\sigma_{fy}^2 \\
        &= \frac{\sigma_y^2}{n}\left(1 + \frac{1}{n/2 - 1}\right) - \frac{2(n/2 - 2)\sigma_{fy}^2}{2n(n/2-1)} \\
        &= \frac{\sigma_y^2}{n}\left(1 + \frac{1}{n/2 - 1}\right) - \frac{(n/2 - 2)\sigma_{fy}^2}{n(n/2-1)}
    \end{align*}

    Setting $c = 1/2$, we obtain
    \begin{align}\label{eq:gauss-mse-crossfit-proof}
        \Var(\thetah_{\text{CF-PPI++}}) 
        &= \frac{\sigma_y^2}{n}\left(1 + \frac{1}{n/2 - 1}\right) - \frac{(n/2 - 2)\sigma_{fy}^2}{n(n/2-1)} \\
        &= \frac{\sigma_y^2}{n}\left(1 + \frac{1}{cn - 1}\right) - \frac{(cn - 2)\sigma_{fy}^2}{n(cn-1)}
    \end{align}
    which completes the proof.
\end{proof}

Finally, the variance when $F, Y$ are independent Gaussians follows trivially from the above as follows.

Those complete the proofs for \nameref{def:cross-ppi-inf-N} in the Gaussian case. We now turn to the proof of \nameref{def:single-sample-ppi-inf-N} portion of the statements from \cref{sec:theory-gaussian}.

\subsubsection{Proofs for the \nameref{def:single-sample-ppi-inf-N} estimator, Gaussian}

Given that we don't have a generalized expression for the variance of the \nameref{def:single-sample-ppi-inf-N}, our proof involves deriving this variance from first principles. The reason we are able to do so in this Gaussian setting lies in \cref{lemma:gaussian-indpt}. The independence of the sample covariance and the sample means results in $\lh$ being independent of $(\mu_f - \bar{f}_n)$ which dramatically simplifies the analysis. The rest of the proof mechanics is quite similar to our proof of \cref{theorem:cross-ppi-general-mse} in \ref{sec:proof-general-mse-crossfit}. Then we apply the conditions for improvement using this derived variance.

Consider the variance of the \nameref{def:single-sample-ppi-inf-N} estimator: We have it that the errors in our estimator can be decomposed as follows
\begin{equation*}
	(\thetahp - \theta) = \underbrace{(\bar{y}_{n} - \theta)}_{A} + \lh \underbrace{\left(\muf - \frac{1}{n} \sum_{i \in \cD_n} f_i \right)}_{D} = A + \lh D
\end{equation*}
Such that the mean squared error is given by
\begin{align*}
	\MSE(\thetahp) & = \mathbb{E}[(\thetahp - \theta)^2]                                                                       \\
	               & = \mathbb{E}[(A + \lh D)^2]                                                                               \\
	               & = \mathbb{E}[A^2 + 2A \lh D + {\lh}^{2} D^2]                                                              \\
	               & = \mathbb{E}[A^2] + 2\mathbb{E}[A \lh D] + \mathbb{E}[{\lh}^{2} D^2] & \text{by Linearity of Expectation}
\end{align*}
We can observe that
\begin{equation*}
	\mathbb{E}[A^2] = \mathbb{E}[(\bar{y}_{n} - \theta)^2] = \frac{\sigma_y^2}{n}
\end{equation*}
is $\Var(\thetahc)$ the classical variance. Now we consider the second term:
\begin{align*}
	2\mathbb{E}[A \lh D] = 2\mathbb{E}[\lh (\bar{y}_{n} - \theta)(\mu_f - \bar{f}_{n})]
\end{align*}
From \cref{lemma:gaussian-indpt} we have that since $Y,F$ are gaussian, $\lh$ is independent of $\bar{y}_{n}, \bar{f}_{n}$. Further, since $\theta, \mu_f$ are non-random, we have that $\lh$ is also independent of them. Overall, this gets us that $\lh$ is independent of $(\bar{y}_{n} - \theta)(\mu_f - \bar{f}_{n})$. This allows us to apply \cref{lemma:gen-cross-expect} with $n$ samples to obtain
\begin{align*}
	2\mathbb{E}[A \lh D] = 2\mathbb{E}[\lh (\bar{y}_{n} - \theta)(\mu_f - \bar{f}_{n})] = -2\frac{\sigma_{fy}^2}{n\sigma_f^2} = -\frac{2\sigma_{fy}^2}{n\sigma_f^2}
\end{align*}
Now consider the third term:
$\mathbb{E}[{\lh}^{2} D^2]$, once again using \cref{lemma:gaussian-indpt}, $\lh$ is independent of $D$, we get
$\mathbb{E}[{\lh}^{2} D^2] = \mathbb{E}[\lh^2]\mathbb{E}[(\mu_f - \bar{f}_{n})^2]$
\begin{align*}
	\lh = \frac{\hat{\sigma}_{fy}}{\varf}
\end{align*}
where $\hat{\sigma}_{fy}$ is an unbiased estimate of the covariance $\sigma_{fy}$. Therefore,
\begin{align*}
	\mathbb{E}[\lh^2] = \frac{\mathbb{E}[\hat{\sigma}_{fy}^2]}{\sigma_f^4}
\end{align*}
Further, consider, $\mathbb{E}[(\bar{f}_{n} - \mu_f)^2]$. Since $\mu_f$ is the expected value of $\bar{f}_{n}$, this term reduces to the variance of $\bar{f}_{n}$ which is $\frac{\sigma_f^2}{n}$. Putting these together, we get
\begin{align}
	\MSE(\thetahp) & = \frac{\sigma_y^2}{n} - \frac{2\sigma_{fy}^2}{n\sigma_f^2} +  \frac{\mathbb{E}[\hat{\sigma}_{fy}^2]}{\sigma_f^4}\cdot\frac{\sigma_f^2}{n}             \nonumber        \\
	               & = \frac{\sigma_y^2}{n} -\frac{\sigma_{fy}^2}{n\sigma_f^2} + \frac{\mathbb{E}[\hat{\sigma}_{fy}^2]}{n\sigma_f^2} - \frac{\sigma_{fy}^2}{n\sigma_f^2} \nonumber        \\
	               & = \MSE(\thetahc) -\frac{\sigma_{fy}^2}{n\sigma_f^2} + \frac{\mathbb{E}[(\hat{\sigma}_{fy} - \sigma_{fy})^2]}{n\sigma_f^2} \label{eq:mse_gaussian_single_sample_interm}
\end{align}

Therefore, the variance of \nameref{def:single-sample-ppi-inf-N} in the Gaussian case takes exactly the same form as \nameref{def:cross-ppi-inf-N}. However, one important difference is that \nameref{def:single-sample-ppi-inf-N} uses all $n$ available samples for estimating the covariance thus reducing $\mathbb{E}[(\hat{\sigma}_{fy} - \sigma_{fy})^2]$. 

With this variance derived, we can easily prove the required theorems by simply combining it with the covariance estimation error that we derived.

\textbf{Proof of \cref{theorem:GaussianMSE}: Variance of \nameref{def:single-sample-ppi-inf-N}, Gaussian}
\MSEGaussian*
\begin{proof}
    The covariance estimation error from \cref{eq:gauss-cov-err} is
\begin{equation*}
	\mathbb{E}[(\hat{\sigma}_{fy}-\sigma_{fy})^2] = \frac{1}{n-1}(\sigma_{fy}^2 + \sigma_f^2\sigma_y^2)
\end{equation*}
Combining with \cref{eq:mse_gaussian_single_sample_interm}, we obtain
\begin{align*}
    \MSE(\thetahp) &= \frac{\sigma_y^2}{n} - \frac{\sigma_{fy}^2}{n\sigma_f^2} + \frac{\mathbb{E}[(\hat{\sigma}_{fy} - \sigma_{fy})^2]}{n\sigma_f^2} \\
    &= \frac{\sigma_y^2}{n} - \frac{\sigma_{fy}^2}{n\sigma_f^2} + \frac{1}{n-1}(\sigma_{fy}^2 + \sigma_f^2\sigma_y^2)\frac{1}{n\sigma_f^2} \\
    &= \frac{\sigma_y^2}{n}\left(1 + \frac{1}{n-1}\right) - \frac{\sigma_{fy}^2}{n\sigma_f^2}\left(1 - \frac{1}{n-1}\right) \\
    &= \frac{\sigma_y^2}{n}\left(1 + \frac{1}{n-1}\right) - \frac{\sigma_{fy}^2}{n\sigma_f^2}\left(\frac{n - 2 }{n-1}\right)
\end{align*}

If we set $c = 1$, we obtain,
\begin{align*}
        \MSE(\thetahp) = \frac{\sigma_y^2}{n}\left(1 + \frac{1}{cn-1}\right) - \frac{\sigma_{fy}^2}{n\sigma_f^2}\left(\frac{cn - 2 }{cn-1}\right)
\end{align*}
which completes the proof.
\end{proof}

Further, setting $\sigma_{fy}$ to $0$, we obtain 
\begin{align*}
    \MSE(\thetahp) = \frac{\sigma_y^2}{n}\left(1 + \frac{1}{cn-1}\right)
\end{align*}
which completes the proof for the \nameref{def:single-sample-ppi-inf-N} estimator.

Finally, we prove the condition for improvement,

\textbf{Proof of \cref{theorem:jointly-normal-mse-condition-all}, Condition for variance improvement \nameref{def:single-sample-ppi-inf-N} estimator}
\MSESingleSampleNormal*
\begin{proof}
Now, $\MSE(\thetahp)$ is lower when
\begin{equation*}
	\sigma_{fy}^2 > \mathbb{E}[(\hat{\sigma}_{fy} - \sigma_{fy})^2]
\end{equation*}
This comes directly from \cref{eq:mse_gaussian_single_sample_interm}
The covariance estimation error from \cref{eq:gauss-cov-err} is
\begin{equation*}
	\mathbb{E}[(\hat{\sigma}_{fy}-\sigma_{fy})^2] = \frac{1}{n-1}(\sigma_{fy}^2 + \sigma_f^2\sigma_y^2)
\end{equation*}
Therefore the required condition becomes,
\begin{equation*}
	\sigma_{fy}^2 > \frac{1}{n-1}(\sigma_{fy}^2 + \sigma_f^2\sigma_y^2)
\end{equation*}
giving us,
\begin{align*}
	(n-2) \sigma_{fy}^2                                & > \sigma_f^2\sigma_y^2    \\
	\lvert \frac{\sigma_{fy}}{\sigma_f\sigma_y} \rvert & > \frac{1}{\sqrt{n - 2}}
\end{align*}
\end{proof}

\subsubsection{Proof of the Finite-N Case}

We now consider the setting where $N$ is finite.

\MSESingleSampleNormalFiniteN*

\begin{proof}
Suppose we have \((y_1, f_1), \dots, (y_n, f_n) \sim P_{FY}\), where \(P_{FY}\) is a jointly Gaussian distribution with correlation \(\rho_{fy}\). Let $\bar{y}_n$ denote the sample mean of $Y$ on this n-sample and $\bar{f}_n$ denote the sample mean of $f$ on this n-sample. Suppose similarly that we also observe an independent N-samples of $F$ from $P$, $f_1, \dots, f_N \sim P_{FY}$. Similarly define the sample mean as $\bar{f}_N$.

The $\MSE$ of Single-Sample PPI++ can be written as,

\begin{align*}\label{eq:ppi-gaussian-mse}
	\mathbb{E}[(\thetahp - \theta)^2] = \mathbb{E}[(\thetahc - \theta)^2] + 2\underbrace{\mathbb{E}[\lh(\thetahc - \theta)(\bar{f}_N - \bar{f}_n)]}_{T1} + \underbrace{\mathbb{E}[\lh^2(\bar{f}_N - \bar{f}_n)^2]}_{T2}
\end{align*}

We use the following helper lemmas:

\begin{restatable}[Independence Condition Empirical Covariance, Gaussian]{lemma}{gaussiansamplecov}
	\label{lemma:gaussian-sample-cov-indpt}
	Let $Y,F$ be jointly normal. Suppose we observe $n$ samples $\{y_i,f_i\}_{i=1}^{n}$. Then the empirical covariance $\hat{\sigma}_{fy}$ is statistically independent of the sample mean $\bar{y}_n$ and $\bar{f}_n$.
\end{restatable}

A special case of \cref{lemma:gaussian-sample-cov-indpt} occurs when $Y = F$, we state this here in the context of the $N$ sample.

\begin{restatable}[Independence Condition Empirical Variance, Gaussian]{lemma}{gaussiansamplevar}
	\label{lemma:gaussian-sample-var-indpt}
	Let $F$ be a random variable following a Gaussian distribution. Suppose we observe $N$ samples $\{f_i\}_{i=1}^{N}$. Then the empirical variance $\hat{\sigma}_f^2$ is statistically independent of the sample mean $\bar{f}_N$.
\end{restatable}

Informally, these lemmas help us buy the statistical independence of $\lh$ from the difference in sample means $\bar{f}_N - \bar{f}_n$. Notice that the numerator in $\lh$ is composed of $\hat{\sigma}_{fy}$ which is trivially independent of $\bar{f}_N$ but now also independent of $\bar{f}_n$ using \cref{lemma:gaussian-sample-cov-indpt}. Further, the denominator in $\lh$ contains $\hat{\sigma}_f^2$ which from ~\cref{lemma:gaussian-sample-var-indpt} is independent of the sample mean difference. Therefore, $\lh$ is completely independent of the sample mean difference. We state this formally,

\begin{restatable}[Independence Condition $\lh$ and $(\bar{f}_N - \bar{f}_n)$, Gaussian]{lemma}{gaussianindptlambda}
	If $F,Y$ are jointly gaussian, then the estimate $\lh$ defined in \cref{def:single-sample-ppi-inf-N} is independent of $(\bar{f}_N - \bar{f}_n)$
\end{restatable}

Let us now consider the term $T1$ in \cref{eq:ppi-gaussian-mse}
\begin{align*}
	T_1
	 & = \mathbb{E}[\lh] \mathbb{E}[(\bar{y}_n - \theta)(\bar{f}_N - \bar{f}_n)]                                                          \\
	 & = \mathbb{E}[\lh] \mathbb{E}[(\bar{y}_n - \theta)(\bar{f}_N - \mu_f + \mu_f - \bar{f}_n)]                                          \\
	 & = \mathbb{E}[\lh] \left( \mathbb{E}[(\bar{y}_n - \theta)(\bar{f}_N - \mu_f)] - \mathbb{E}[(\bar{y}_n - \theta)(\bar{f}_n - \mu_f)] \right) \\
	 & = -\mathbb{E}[\lh] \mathbb{E}[(\bar{y}_n - \theta)(\bar{f}_n - \mu_f)]                                                             \\
	 & = -\mathbb{E}[\lh] \frac{1}{n} \sigma_{fy}                                                                                   \\
	 & = -\mathbb{E}\left[\frac{\hat{\sigma}_{fy}}{(1 + n/N)\hat{\sigma}_f^2}\right] \frac{1}{n} \sigma_{fy}                         \\
	 & = -\mathbb{E}[\hat{\sigma}_{fy}] \mathbb{E}\left[\frac{1}{(1 + n/N)\hat{\sigma}_f^2}\right] \frac{1}{n} \sigma_{fy}                   \\
	 & = -\frac{\sigma_{fy}^2}{n(1 + n/N)} \mathbb{E}\left[\frac{1}{\hat{\sigma}_f^2}\right]                                 \\
	 & = -\frac{\sigma_{fy}^2}{n(1 + n/N)} \cdot \frac{N - 1}{(N - 3)\sigma_f^2}                                            \\
	 & = -\frac{N - 1}{n(1 + n/N)(N - 3)} \rho_{fy}^2 \sigma_y^2
\end{align*}
where we use the fact that $\frac{1}{\hat{\sigma}_f^2}$ follows an inverse chi-square distribution, which implies $\mathbb{E}\left[\frac{1}{\hat{\sigma}_f^2}\right] = \frac{N-1}{(N-3)\sigma_f^2}$, and in the last line we use the fact that $\sigma^2_{fy} / \sigma^2_f = \rho_{fy}^2 \sigma_{y}^2$

Now we consider $T_2$.
\begin{align*}
	T_2
	 & = \mathbb{E}[\lh^2]\mathbb{E}[(\bar{f}_N - \bar{f}_n)^2]                                                                                                          \\
	 & = \mathbb{E}[\lh^2] \left( \mathbb{E}[(\bar{f}_N - \mu_f)^2] + \mathbb{E}[(\bar{f}_n - \mu_f)^2] \right)                                                                  \\
	 & = \mathbb{E}[\lh^2]\left( \frac{1}{N} + \frac{1}{n} \right)\sigma_f^2                                                                                     \\
	 & = \mathbb{E}[\hat{\sigma}_{fy}^2] \mathbb{E}\left[ \frac{1}{(1 + n/N)^2 \hat{\sigma}_f^4} \right] \left( \frac{1}{N} + \frac{1}{n} \right) \sigma_f^2            \\
	 & = \mathbb{E}[\hat{\sigma}_{fy}^2] \frac{1}{(1 + n/N)^2} \cdot \frac{(N - 1)^2}{(N - 3)(N - 5)\sigma_f^4} \left( \frac{1}{n} + \frac{1}{N} \right) \sigma_f^2      \\
	 & = \mathbb{E}[\hat{\sigma}_{fy}^2] \cdot \frac{(N - 1)^2}{n(1 + n/N)(N - 3)(N - 5)\sigma_f^2}                                                                      \\
	 & = \left( \frac{n}{n - 1} \sigma_{fy}^2 + \frac{1}{n - 1} \sigma_f^2 \sigma_y^2 \right) \cdot \frac{(N - 1)^2}{n(1 + n/N)(N - 3)(N - 5)\sigma_f^2} \\
	 & = \sigma_y^2 \left( \frac{n}{n - 1} \rho_{fy}^2 + \frac{1}{n - 1} \right) \cdot \frac{(N - 1)^2}{n(1 + n/N)(N - 3)(N - 5)}
\end{align*}

here again we use the fact that $\frac{1}{\hat{\sigma}_f^4}$ follows an inverse chi-square distribution, which implies
$\mathbb{E}\left[\frac{1}{\hat{\sigma}_f^4}\right] = \frac{(N - 1)^2}{(N - 3)(N - 5)\sigma_f^4}$.

Now we have the MSE
\begin{align*}
	\mathbb{E}[(\thetahp - \theta)^2] & = \MSE(\thetahc) - 2 \frac{N-1}{n(1+n/N)(N-3)}\rho_{fy}^2\sigma_y^2 + \sigma_y^2\left( \dfrac{n}{n-1} \rho_{fy}^2 + \dfrac{1}{n-1} \right)\frac{(N-1)^2}{n(1 + n/N)(N-3)(N-5)}
	\\
	                         & = \MSE(\thetahc) + \sigma_y^2 \frac{N-1}{n(1+n/N)(N-3)}\left(-2\rho_{fy}^2 + \frac{n(N-1)}{(n-1)(N-5)}\rho_{fy}^2 + \frac{N-1}{(n-1)(N-5)}\right)
	\\
	                         & = \MSE(\thetahc) + \sigma_y^2 \frac{(N-1)^2}{n(n-1)(1+n/N)(N-3)(N-5)}\left(\frac{-2(n-1)(N-5)}{N-1}\rho_{fy}^2 + n\rho_{fy}^2 + 1\right)
	\\
	                         & = \MSE(\thetahc) + \sigma_y^2 \frac{(N-1)^2}{n(n-1)(1+n/N)(N-3)(N-5)}\left(\frac{-2(n-1)(N-5) + n(N-1)}{N-1}\rho_{fy}^2 + 1\right)
	\\
	                         & = \MSE(\thetahc) + \sigma_y^2 \frac{(N-1)^2}{n(n-1)(1+n/N)(N-3)(N-5)}\left(\frac{-2(nN - N - 5n + 5) + nN - n}{N-1}\rho_{fy}^2 + 1\right)
	\\
	                         & = \MSE(\thetahc) + \sigma_y^2 \frac{(N-1)^2}{n(n-1)(1+n/N)(N-3)(N-5)}\left(1 - \frac{nN - 2N - 9n + 10}{N-1}\rho_{fy}^2\right)
	\\
	                         & = \MSE(\thetahc) + \sigma_y^2 \frac{(N-1)^2}{n(n-1)(1+n/N)(N-3)(N-5)}\left[1 - \left(n - 2 - 8\frac{n - 1}{N - 1}\right)\rho_{fy}^2\right]
\end{align*}

Looking at the last term, we have that the condition becomes
\begin{align}
\rho_{fy}^2 (n - 2 - 8 \frac{n-1}{N-1}) &\geq 1 \\
\lvert \rho_{fy} \rvert &\geq \frac{1}{\sqrt{n - 2 - 8 \frac{n-1}{N-1}}}
\end{align}

\end{proof}

    \subsection{Variance Estimates of \nameref{def:single-sample-ppi-inf-N}}
    
    We now consider the variance estimate of the \cref{def:single-sample-ppi-inf-N} estimator in two methods. The first method is based on the idea of $\lambda$ as a minimum variance linear combination, the spirit of PPI++. Here, we compute the theoretical variance of the PPI++ estimator assuming $\lambda$ is independent of the data, and then use all the required plug-ins, including $\lh$. In the second method, we naively compute the direct empirical plug-in in the \nameref{def:single-sample-ppi-inf-N} estimator.
    
\subsubsection{\textbf{\cref{prop:optimistic_variance_single_sample_ppi++}}: Data Independent Plug-in}
\OptimisticVarianceSSPPI*

\begin{proof}
	We first calculate the variance of the estimator assuming $\lambda$ is fixed. Then we evaluate the result upon substituting the data-dependent plug-in $\lh$ and plug-ins for other quantities involved.

	\begin{restatable}[Variance of PPI++]{lemma}{VariancePPI}\label{lemma:variance-of-ppi}
		The variance of the PPI++ estimator defined in \cref{def:ppi++} is given by
		\begin{equation}\label{eq:variance-ppi++}
			\sigma^2_{\thetahp} = \frac{\sigma_y^2}{n} + \lambda^2 \left( \frac{\sigma^2_{f}}{N} + \frac{\sigma^2_f}{n} \right) - 2\lambda \frac{\sigma_{fy}}{n}.
		\end{equation}
	\end{restatable}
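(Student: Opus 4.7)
The plan is to treat $\lambda$ as fixed (non-random) and exploit the independence between the labeled sample $\cD_n$ and the unlabeled sample $\cD'_N$ to split the variance into two pieces. Writing the estimator as
\begin{equation*}
  \thetahp \;=\; \bigl(\bar{y}_n - \lambda \bar{f}_n\bigr) \;+\; \lambda \bar{f}_N,
\end{equation*}
where $\bar{y}_n, \bar{f}_n$ are averages over $\cD_n$ and $\bar{f}_N$ is the average over $\cD'_N$, the independence of the two datasets gives
\begin{equation*}
  \sigma^2_{\thetahp} \;=\; \mathrm{Var}(\bar{y}_n - \lambda \bar{f}_n) \;+\; \lambda^2 \,\mathrm{Var}(\bar{f}_N).
\end{equation*}

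From here the computation is routine. I would expand the first variance using bilinearity,
\begin{equation*}
  \mathrm{Var}(\bar{y}_n - \lambda \bar{f}_n)
  \;=\; \mathrm{Var}(\bar{y}_n) + \lambda^2 \mathrm{Var}(\bar{f}_n) - 2\lambda\, \mathrm{Cov}(\bar{y}_n,\bar{f}_n),
\end{equation*}
and then plug in the standard IID sample-mean formulas $\mathrm{Var}(\bar{y}_n) = \sigma_y^2/n$, $\mathrm{Var}(\bar{f}_n) = \sigma_f^2/n$, $\mathrm{Var}(\bar{f}_N) = \sigma_f^2/N$, and $\mathrm{Cov}(\bar{y}_n,\bar{f}_n) = \sigma_{fy}/n$ (the latter following from the fact that within-sample covariance across distinct indices vanishes, leaving only the $n$ diagonal terms divided by $n^2$). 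Collecting gives exactly the stated expression.

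The only conceptual step worth emphasizing is the independence split at the start; once that is made explicit the rest is a mechanical application of textbook variance identities, so there is no real obstacle. I would note at the end (for continuity with \cref{prop:optimistic_variance_single_sample_ppi++}) that this expression is precisely the quantity the plug-in variance estimator $\hat{\sigma}^2_{\text{Single-PPI++}}$ targets when one substitutes $\hat{\sigma}_y^2, \hat{\sigma}_f^2, \hat{\sigma}_{fy}$ and treats $\hat{\lambda}$ as fixed, which sets up the optimism result that follows in the main text.
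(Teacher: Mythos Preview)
Your proposal is correct and matches the paper's own proof essentially step for step: both treat $\lambda$ as fixed, invoke the independence of $\cD_n$ and $\cD'_N$, and apply the standard sample-mean variance and covariance identities. The only cosmetic difference is that the paper expands the variance directly into the three variance terms and one covariance term, whereas you first group into $(\bar{y}_n - \lambda\bar{f}_n) + \lambda\bar{f}_N$ before expanding; the computations are otherwise identical.
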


	\begin{equation}
		\thetahp = \frac{1}{n} \sum_{i=1}^{n} y_i + \lambda \left( \frac{1}{N} \sum_{i=1}^{N} f_i - \frac{1}{n} \sum_{i=1}^{n} f_i \right).
	\end{equation}

	To compute its variance:
	\begin{align}
		\text{Var}(\thetahp) & = \text{Var} \left( \frac{1}{n} \sum_{i=1}^{n} y_i \right) + \lambda^2 \text{Var} \left( \frac{1}{N} \sum_{i=1}^{N} f_i \right) + \lambda^2 \text{Var} \left( \frac{1}{n} \sum_{i=1}^{n} f_i \right) \\
		                     & \quad - 2\lambda \text{Cov} \left( \frac{1}{n} \sum_{i=1}^{n} y_i, \frac{1}{n} \sum_{i=1}^{n} f_i \right).
	\end{align}

	Using the standard variance formulas for sample means:
	\begin{align}
		\text{Var} \left( \frac{1}{n} \sum_{i=1}^{n} y_i \right)                                    & = \frac{\sigma_y^2}{n},  \\
		\text{Var} \left( \frac{1}{N} \sum_{i=1}^{N} f_i \right)                                    & = \frac{\sigma_f^2}{N},  \\
		\text{Var} \left( \frac{1}{n} \sum_{i=1}^{n} f_i \right)                                    & = \frac{\sigma_f^2}{n},  \\
		\text{Cov} \left( \frac{1}{n} \sum_{i=1}^{n} Y_i, \frac{1}{n} \sum_{i=1}^{n} f(X_i) \right) & = \frac{\sigma_{fy}}{n}.
	\end{align}

	Therefore, putting it all together, we get the variance when $\lambda$ is fixed,

	\begin{equation}
		\sigma^2_{\thetahp} = \frac{\sigma_y^2}{n} + \lambda^2 \left( \frac{\sigma_f^2}{N} + \frac{\sigma_f^2}{n} \right) - 2\lambda \frac{\sigma_{fy}}{n}.
	\end{equation}

	The plug-in estimate of the variance of the Single-Sample estimator considers the variance of the PPI++ estimator as given in \cref{eq:variance-ppi++} with plug-ins for all quantities (including $\lambda$ computed from data).

	Therefore, substitute plug-ins and use $\lh = \frac{\hat{\sigma}_{fy}}{(1 + \frac{n}{N})\hat{\sigma_f}^2}$, gives us,

	\begin{equation}
		\hat{\sigma^2}_{\thetahp} = \frac{\hat{\sigma_y^2}}{n} - \frac{\left( \frac{\hat{\sigma}_{fy}}{n} \right)^2}{\frac{\hat{\sigma^2_f}}{N} + \frac{\hat{\sigma^2_f}}{n}}.
	\end{equation}

	\begin{equation}
		\hat{\sigma^2}_{\thetahp} = \frac{\hat{\sigma^2_y}}{n} - \frac{\left( \frac{\hat{\sigma}_{fy}}{n} \right)^2}{(\frac{1}{N} + \frac{1}{n}) \hat{\sigma^2_f}}.
	\end{equation}

	\begin{equation*}
		\hat{\sigma^2}_{\thetahp} = \frac{\hat{\sigma_y^2}}{n} - \frac{\left( \frac{\hat{\sigma}_{fy}}{n} \right)^2}{(\frac{1}{N} + \frac{1}{n}) \hat{\sigma^2_f}}.
	\end{equation*}

	Further, if we let $N \to \infty$, $\hat{\sigma^2_f} \to \sigma_f^2$ and $1/N \to 0$, giving us,

	\begin{equation*}
		\hat{\sigma^2}_{\thetahp} = \frac{\hat{\sigma_y^2}}{n} - \frac{\hat{\sigma}_{fy}^2}{n\sigma_f^2}
	\end{equation*}

	Since, $\frac{\hat{\sigma^2_y}}{n}$ is the estimate of the variance of the classical estimator, this shows that the estimate of the variance of the \nameref{def:single-sample-ppi-inf-N} is always lower than its classical counterpart.

\end{proof}

\subsubsection{Naive Plug-in}
We now consider an alternate method for computing the empirical variance. Starting with the \nameref{def:ppi++} definition, assuming again that $\lambda$ is fixed, we rewrite it as a sum of two independent terms.

\begin{equation*}
	\thetahp = (\bar{y} - \lambda\bar{f}_n) + \lambda\bar{f}_N
\end{equation*}

Now, consider the following standard method for estimating variance, by taking the empirical variance of $\bar{y} - \lambda \bar{f}_n$ and the empirical variance of $\lambda \bar{f}_N$, and adding up the empirical variance estimates, adjusted for sample size. The estimate of the variance of the former is given by
\begin{align*}
	\hat{\sigma}_{y - \lambda f}^2 & = \frac{1}{n-1} \sum_{i} ((y_i - \lambda f_i) - (\bar{y} - \lambda \bar{f}))^2                                      \\
	                               & = \frac{1}{n-1} \sum_{i} ((y_i - \bar{y}) - (\lambda f_i - \lambda \bar{f}))^2                                      \\
	                               & = \frac{1}{n-1} \sum_{i} (y_i - \bar{y})^2 - 2 \lambda (y_i - \bar{y})(f_i - \bar{f}) + \lambda^2 (f_i - \bar{f})^2 \\
	                               & = \hat{\sigma}_y^2 + \lambda^2 \hat{\sigma}_f^2[n] - 2 \lambda \hat{\sigma}_{yf}
\end{align*}
where $\hat{\sigma}_y^2, \hat{\sigma}_{f}^2[n], \hat{\sigma}_{yf}$ are the empirical variances of $Y$ and $F$ and the empirical covariance respectively, where we note that $\hat{\sigma}_f^2[n]$ is the empirical variance on $n$ samples.  Using the fact that $\lambda \coloneqq \hat{\sigma}_{yf} / ((1 + n/N) \hat{\sigma}_f^2[N])$, we have it that
\begin{align*}
	\hat{\sigma}_{y - \lambda f}^2 & = \hat{\sigma}_y^2 + \lambda^2 \hat{\sigma}_f^2[n] - 2 \lambda \hat{\sigma}_{yf}                                                                                                                                                                               \\
	                               & = \hat{\sigma}_y^2 + \left(\frac{\hat{\sigma}_{yf}^2}{(1 + \frac{n}{N})^2 \hat{\sigma}_f^4[N]}\right) \hat{\sigma}_f^2[n] - 2 \left(\frac{\hat{\sigma}_{yf}}{(1 + \frac{n}{N}) \hat{\sigma}_f^2[N]}\right) \hat{\sigma}_{yf}                                   \\
	                               & = \hat{\sigma}_y^2 + \left(\frac{\hat{\sigma}_{yf}^2}{(1 + \frac{n}{N})^2 \hat{\sigma}_f^4[N]}\right)\hat{\sigma}_f^2[n] - 2 \left(\frac{\hat{\sigma}^2_{yf}}{(1 + \frac{n}{N}) \hat{\sigma}_f^2[N]}\right)                                                    \\
	                               & = \hat{\sigma}_y^2 + \left(\frac{\hat{\sigma}_{yf}^2}{(1 + \frac{n}{N})^2 \hat{\sigma}_f^4[N]}\right)\hat{\sigma}_f^2[n] - 2 \hat{\sigma}_f^2[N] \left(1 - \frac{n}{N}\right) \left(\frac{\hat{\sigma}^2_{yf}}{(1 + \frac{n}{N})^2 \hat{\sigma}_f^4[N]}\right) \\
	                               & = \hat{\sigma}_y^2 - \left(2 \hat{\sigma}_f^2[N] \left(1 - \frac{n}{N}\right) - \hat{\sigma}_f^2[n]\right)\left(\frac{\hat{\sigma}_{yf}^2}{(1 + \frac{n}{N})^2 \hat{\sigma}_f^4[N]}\right)                                                                     \\
	                               & = \hat{\sigma}_y^2 - \left(2 \hat{\sigma}_f^2[N] \left(1 - \frac{n}{N}\right) - \hat{\sigma}_f^2[n]\right) \hat{\lambda}^2
\end{align*}
Then the empirical variance of $\hat{\sigma}_{\lambda f}^2$, computed on $N$ samples, is given by $\lambda^2 \hat{\sigma}_{f}^2[N]$, so that the total variance (dividing by $n$ and $N$ respectively, and then taking the sum), is given by
\begin{align*}
	\frac{\hat{\sigma}_{y - \lambda f}^2}{n} + \frac{\hat{\sigma}_{\lambda f}^2}{N} & =
	\frac{\hat{\sigma}_y^2}{n} - \frac{\left(2 \hat{\sigma}_f^2[N] \left(1 - \frac{n}{N}\right)\right) \hat{\lambda}^2}{n} + \frac{\hat{\lambda}^2\hat{\sigma}_f^2[n]}{n} + \frac{\hat{\lambda}^2 \hat{\sigma}_f^2[N]}{N}
\end{align*}

If $N \to \infty$, then $\frac{\hat{\sigma}_{\lambda f}^2}{N} \to 0$, $1 - \frac{n}{N} \to 1$, $\frac{\hat{\lambda}^2 \hat{\sigma}_f^2[N]}{N} \to 0$, and $\sigma_f^2[N] \to \sigma_f^2$, and hence the empirical variance estimate becomes,
\begin{equation*}
	\frac{\hat{\sigma}_y^2}{n} - \frac{2\hat{\lambda}^2\sigma_f^2}{n} + \frac{\hat{\lambda}^2\sigma_f^2[n]}{n}
\end{equation*}
This is always less than the empirical estimate of the classical variance as long as the empirical variance of $F$ on $n$ samples is no greater than double the true variance.

\section{Experiment Setup Details}
    \label{sec:appdx-experiment-setup}
    In this section, we provide details of the experiment setup on Alphafold \citep{jumper2021highly}.
    For both the $\MSE$ as well as the coverage experiments, our results are bootstrapped over 50,000 draws. To closely match the theoretical intuition, we aim to study the setting where $n$ is small and $N$ is large. Therefore, after randomly sampling $n$ (or $2n$, for sample-splitting), we treat all the remaining data as unlabeled. Since the true labels are binary, we also use a binary noise model for the configurations that do not use the original predictor. This noise model is implemented by specifying $\mathbb{P}(F = 1| Y = 1)$ and $\mathbb{P}(F = 1 | Y = 0)$. Alternatively, this can be viewed as specifying the True Positive Rate and the False Positive Rate of the pseudo-labels. From a noisy label lens, this can be viewed as specifying the complete noise transition matrix. For each configuration, for each $n$, we reset random states, ensuring full reproducibility. For all plug-in statistical quantities computed, we use \textit{ddof = 1} for unbiasedness.

\section{Additional Experiments: Galaxies Dataset}
\label{sec:galaxies-plots}

In this section, we extend our experiments to the galaxies dataset used in \citet{angelopoulos2023ppi++}. Similar insights hold as in \cref{sec:experiments}. Our experiment apparatus is the same as described in \cref{sec:appdx-experiment-setup}. We run $50,000$ draws using small-regime of $n$ or $2n$ and all the remaining data as unlabeled. We use a similar binary noise model. We show our results in \cref{fig:mse_experiments_galaxies,fig:cov_width_galaxies}.

\begin{figure}[t]
	\centering
	\begin{subfigure}[t]{0.49\textwidth}
		\centering
		\includegraphics[width=0.95\linewidth]{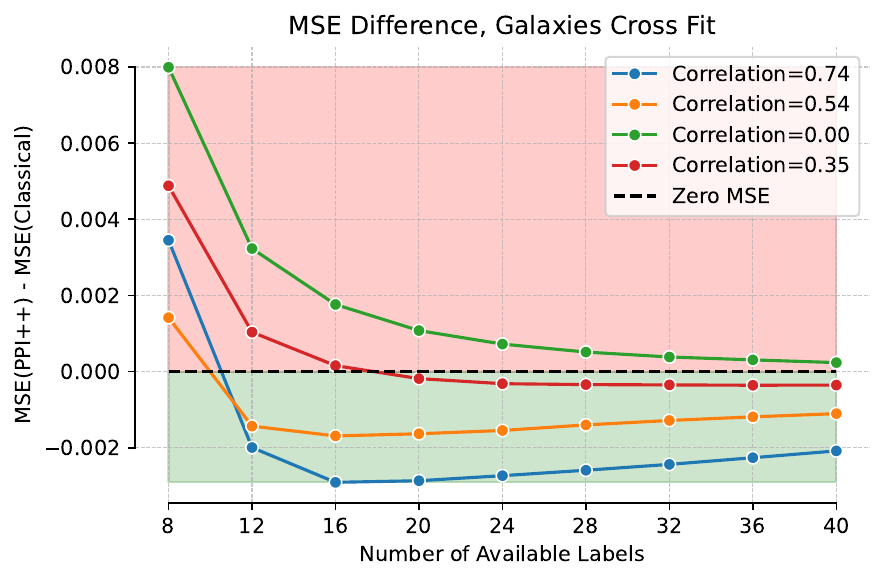}
		\caption{Cross-fit PPI++ Estimator}
		\label{fig:mse_crossfit_galaxies}
	\end{subfigure}
	\hfill
	\begin{subfigure}[t]{0.49\textwidth}
		\centering
		\includegraphics[width=0.95\linewidth]{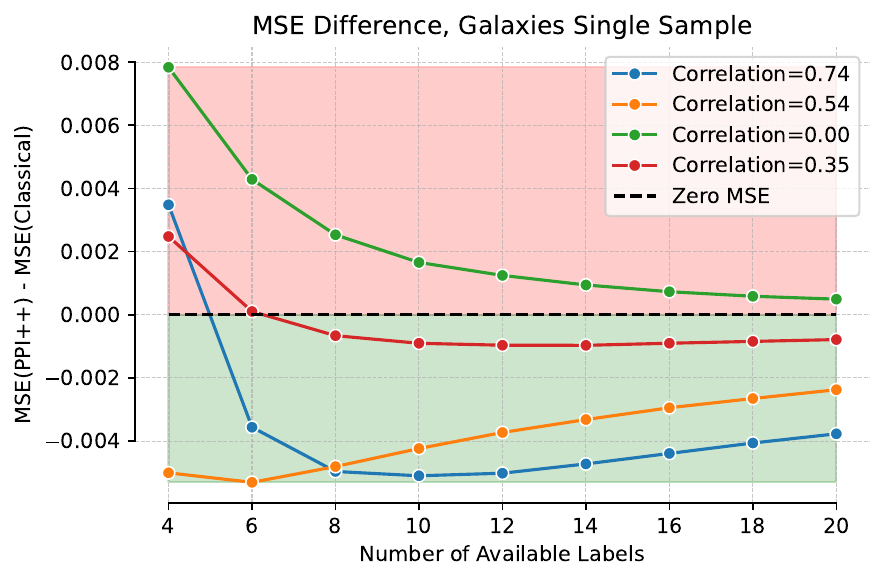}
		\caption{Single-Sample PPI++ Estimator}
		\label{fig:mse_single_sample_galaxies}
	\end{subfigure}
	\caption{Relative $\MSE$ vs. Sample Size on the Galaxies Dataset for PPI++ estimators with black-box models $f$ of varying quality.  Y-axis gives the difference $\MSE(\hat{\theta}_{\text{PPI++}}) - \MSE(\hat{\theta}_{\text{Classical}})$, such that lower (negative) values imply improvement over the classical estimator. Each line represents PPI++ with a different black-box model $f$.  The blue line uses the original model, which has strong predictive performance, but only improves estimation error at $n \geq 12$ (left) and $n \geq 6$ (right).}
	\label{fig:mse_experiments_galaxies}
\end{figure}

\begin{figure}[t]
	\centering
	\begin{subfigure}[t]{\textwidth}
		\centering
		\includegraphics[width=0.65\linewidth]{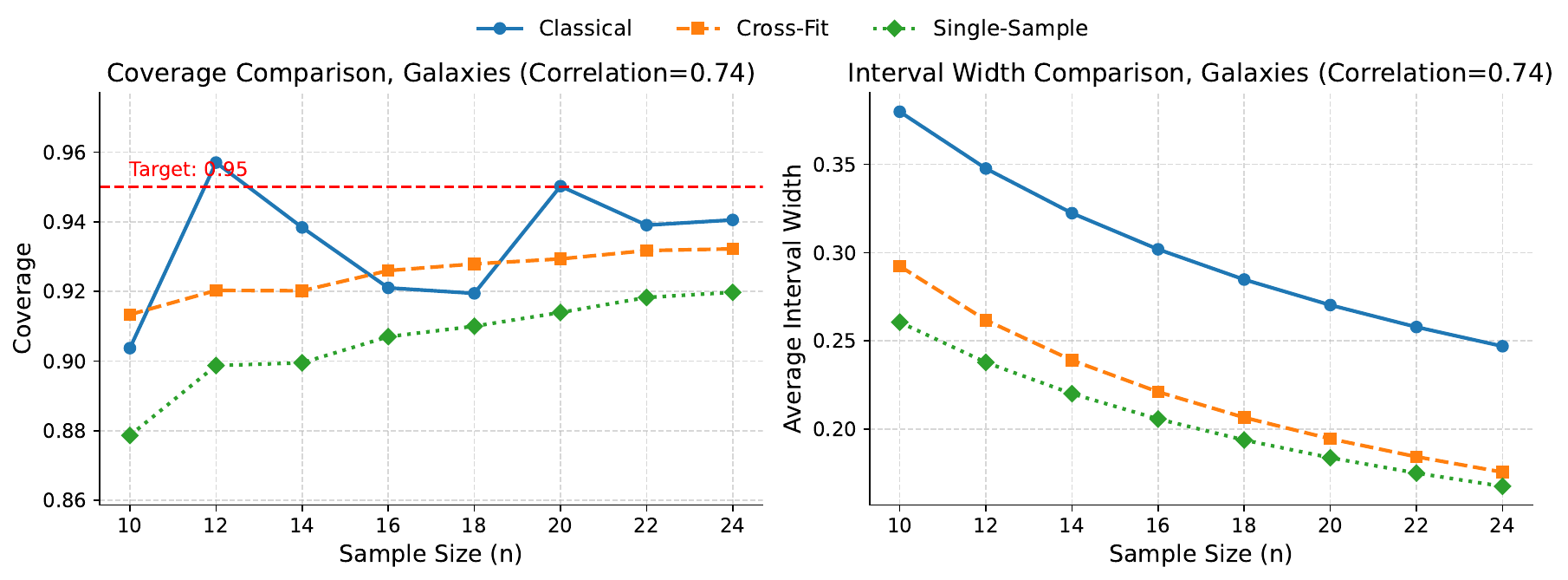}
		\caption{}
		\label{fig:cov_row_1}
	\end{subfigure}
	\begin{subfigure}[t]{\textwidth}
		\centering
		\includegraphics[width=0.65\linewidth]{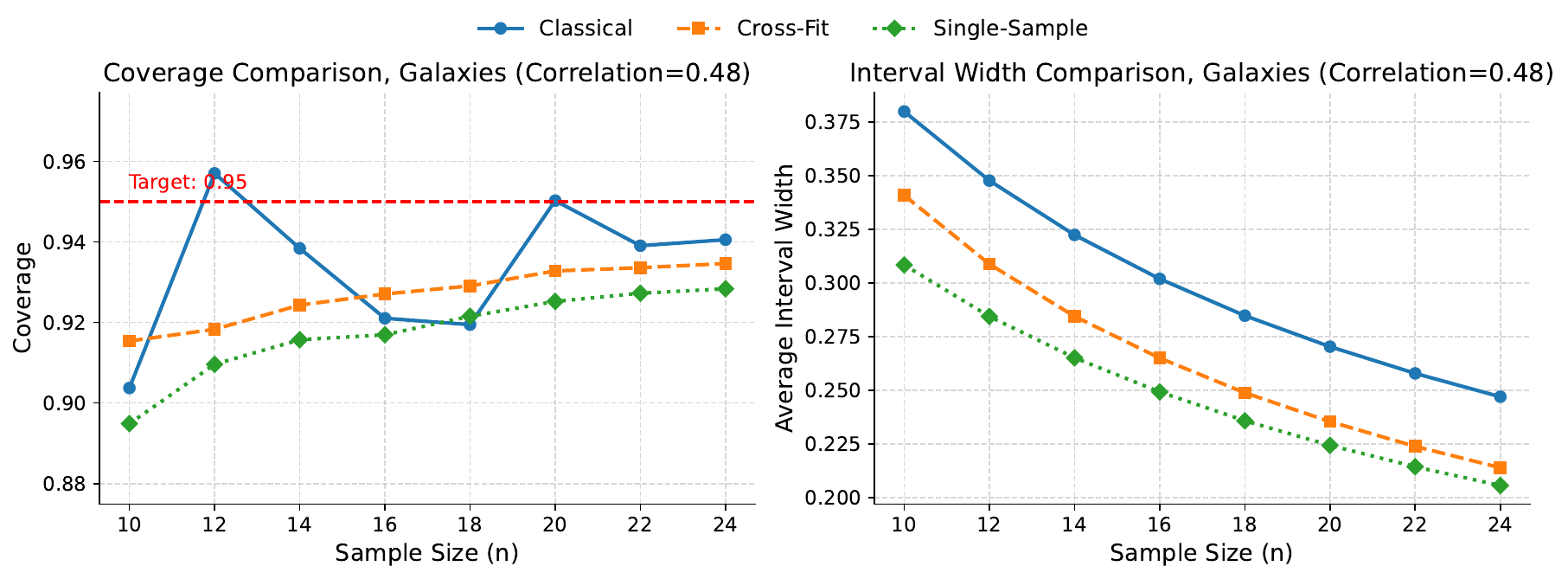}
		\caption{}
		\label{fig:cov_row_2}
	\end{subfigure}
	\caption{Comparison of Coverage and Interval Width for~\nameref{def:cross-ppi-inf-N} and~\nameref{def:single-sample-ppi-inf-N}, using different models $f$ on the Galaxies dataset. (\subref{fig:cov_row_1}) Original pseudo-label model $f$, where Single-Sample PPI++ has lower coverage than either Cross-fit PPI++ or the classical estimator. (\subref{fig:cov_row_2}) A modified model with lower correlation, where similar trends hold.}
	\label{fig:cov_width_galaxies}
\end{figure}

\section{Additional Experiments: Coverage and Interval Width, Alphafold Noised}
\label{sec:appdx-alphafold-coverage}

In this section, we present the coverage and interval width results on Alphafold, similar to \cref{fig:side_by_side_2x2}. We note that in this lowered correlation setting, while \nameref{def:cross-ppi-inf-N} and \nameref{def:single-sample-ppi-inf-N} widen their interval width to improve coverage, \nameref{def:cross-ppi-inf-N} matches the coverage of \nameref{def:classical-n} but \nameref{def:single-sample-ppi-inf-N} still falls short. The experiment details are as presented in \cref{sec:experiments} and \cref{sec:appdx-experiment-setup}.

\begin{figure*}[t]
  \centering
  \includegraphics[width=0.85\textwidth]{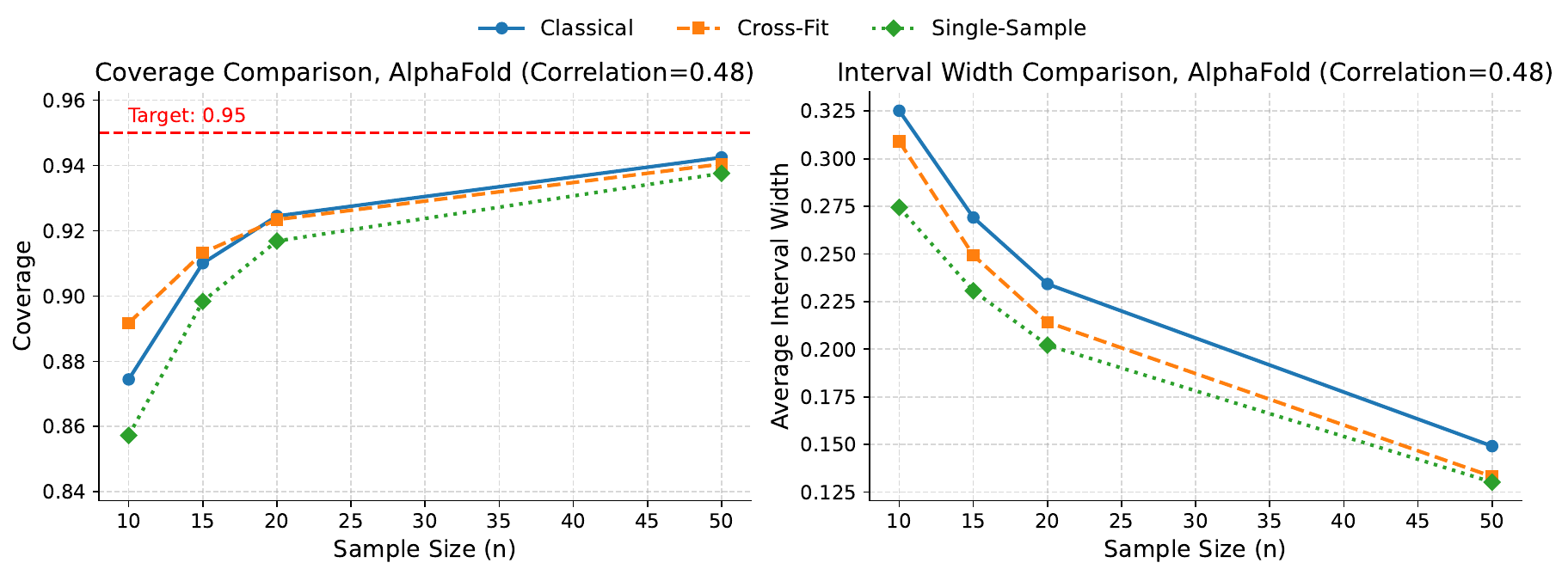}
  \caption{\textbf{AlphaFold.} Coverage (left) and interval width (right) for~\nameref{def:cross-ppi-inf-N} vs.~\nameref{def:single-sample-ppi-inf-N} using noised pseudo-labels from Alphafold (\(\mathrm{corr}=0.48\)). \nameref{def:single-sample-ppi-inf-N} under-covers relative to \nameref{def:cross-ppi-inf-N} and \nameref{def:classical-n}.}
  \label{fig:cov-appdx}
\end{figure*}

\end{document}